\title{Optimal Multiclass U-Calibration Error and Beyond}
\author{
  Haipeng Luo$^*$ \\
  University of Southern California \\ \textsf{haipengl@usc.edu}
  \And
  Spandan Senapati$^*$ \\
  University of Southern California \\ \textsf{ssenapat@usc.edu}
  \And
  Vatsal Sharan$^*$ \\
  University of Southern California \\ \textsf{vsharan@usc.edu}
}
\newcommand{\abs}[1]{\left\lvert#1\right\rvert}
\newcommand{\bi}[1]{\ensuremath{\bm{#1}}} 
\newcommand{\ind}{1\hspace{-1.6mm}1} 
\newcommand{\norm}[1]{\ensuremath{\left\|#1\right\|}} 
\DeclareMathOperator*{\argmin}{argmin} 
\newcommand{\ip}[2]{\left\langle #1, #2 \right\rangle} 
\providecommand{\ip}[2]{\langle #1, #2 \rangle} 
\newcommand{\val}{\textsc{Val}}
\newcommand{\dec}{\textit{dec}}
\newcommand{\randalg}{\mathcal{A}_{\textit{rand}}}
\newcommand{\detalg}{\mathcal{A}_{\textit{det}}}
\newcommand{\reg}{\textsc{Reg}_\ell}
\def \x {{\bi{x}}}
\def \p {{\bi{p}}}
\def \y {{\bi{y}}}
\def \q {{\bi{q}}}
\def \y {{\bi{y}}}
\def \n {\bi{n}}
\def \a {{\bi{a}}}
\def \e {{\bi{e}}}
\def \g {{\bi{g}}}
\def \alg{\mathsf{ALG}}
\def \pucal{\mathsf{PUCal}}
\def \ucal{\mathsf{UCal}}
\def \cal{\mathsf{Cal}}
\def \bbeta {{\bi{\beta}}}
\def \P {{\bi{P}}}
\def \cA {{\c{A}}}
\def \cL {{\c{L}}}
\def \cF {{\c{F}}}
\def \cX {{\c{X}}}
\def \Rn {{\mathbb{R}}}
\def\cA{\mathcal{A}}
\def\cC{\mathcal{C}}
\def\cD{\mathcal{D}}
\def\cE{\mathcal{E}}
\def\cF{\mathcal{F}}
\def\cL{\mathcal{L}}
\def\cO{\mathcal{O}}
\def\cR{\mathcal{R}}
\def\cS{\mathcal{S}}
\def\cT{\mathcal{T}}
\def\cV{\mathcal{V}}
\def\cX{\mathcal{X}}
\def\nn{\nonumber}
\newcommand{\bigs}[1]{\left[ #1 \right]}
\newcommand{\bigc}[1]{\left( #1 \right)}
\newcommand{\bigcurl}[1]{\left\{ #1 \right\}}
\newtheorem{newrem}{\bf Remark}[section]
\newtheorem{theorem}{Theorem}
\newtheorem{corollary}{Corollary}
\newtheorem{lemma}{Lemma}
\newtheorem{newlemma}{Lemma}[section]
\newtheorem{newdefinition}{Definition}[section]
\newtheorem{newexample}{Example}[section]
\newtheorem{newproposition}{Proposition}[section]
\begin{document}

\maketitle
\def\thefootnote{*}\footnotetext{Author ordering is alphabetical.}

\begin{abstract}
We consider the problem of \textit{online multiclass U-calibration}, where a forecaster aims to make sequential distributional predictions over $K$ classes with low \textit{U-calibration error}, that is, low regret with respect to \textit{all} bounded proper losses simultaneously.
Kleinberg et al. (2023) developed an algorithm with U-calibration error $\cO(K\sqrt{T})$ after $T$ rounds and raised the open question of what the optimal bound is.
We resolve this question by showing that the optimal U-calibration error is $\Theta(\sqrt{KT})$ --- we start with a simple observation that the Follow-the-Perturbed-Leader algorithm of Daskalakis and Syrgkanis (2016) achieves this upper bound, followed by a matching lower bound constructed with a specific proper loss (which, as a side result, also proves the optimality of the algorithm of Daskalakis and Syrgkanis (2016) in the context of online learning against an adversary with finite choices).
{We also strengthen our results under natural assumptions on the loss functions,} including $\Theta(\log T)$ U-calibration error for Lipschitz proper losses, $\cO(\log T)$ U-calibration error for a certain class of decomposable proper losses, U-calibration error bounds for proper losses with a low covering number, and others.
\end{abstract}


\section{Introduction}\label{sec:introduction}
We consider the fundamental problem of making sequential probabilistic predictions over an outcome (e.g., predicting the probability of tomorrow's weather being sunny, cloudy, or rainy).
Specifically, at each time $t = 1, \dots, T$,  a forecaster/learner predicts $\p_t \in \Delta_K$, where $\Delta_K$ denotes the probability simplex over $K$ outcomes.
At the same time, an adversary decides the true outcome, encoded by a one-hot vector $\y_t \in \cE \coloneqq \{\e_{1}, \dots, \e_{K}\}$, where $\e_{i}$ denotes the $i$-th standard basis vector of $\Rn^{K}$.
The forecaster observes $\y_t$ at the end of time $t$.

A popular approach to measure the performance of a forecaster is to measure her \emph{regret} against the best fixed prediction in hindsight. Fixing some loss function $\ell: \Delta_{K} \times \cE \to \Rn$, the regret of the forecaster's predictions with respect to $\ell$ is defined as $\reg \coloneqq \sum_{t = 1} ^ T \ell(\p_t, \y_t) - \inf_{\p \in \Delta_{K}} \sum_{t = 1} ^ T\ell(\p, \y_t)$. Perhaps the most common class of loss functions to evaluate a forecaster are \emph{proper} loss functions. A loss function is proper if $\mathbb{E}_{\y \sim \p}[\ell(\p, \y)] \le \mathbb{E}_{\y \sim \p} [\ell(\p', \y)]$ for all $\p, \p' \in \Delta_{K}$. Hence proper loss functions incentivize the forecaster to predict the true probability of the outcome (to the best of their knowledge). We will focus on proper loss functions in this work.

Note, however, that regret is measured with respect to a specific loss function $\ell$. It is unclear which proper loss one should minimize over for the specific application at hand --- and there could even be multiple applications with different loss functions which use the forecasters's prediction. Could it be possible for a forecaster to simultaneously enjoy low regret with respect to all proper loss functions? This questions was raised in the interesting recent work of \citet{kleinberg2023u}. They propose the notion of  \textit{U-calibration error} $\ucal_{\cL} \coloneqq \mathbb{E}\bigs{\sup_{\ell \in \cL} \textsc{Reg}_{\ell}}$ (and a weaker version \textit{pseudo U-calibration error} $\pucal_{\cL} \coloneqq \sup_{\ell \in \cL} \mathbb{E}[\textsc{Reg}_{\ell}]$) for a family of proper loss functions $\cL$. A forecaster with low U-calibration error thus enjoys good performance with respect to \textit{all} loss functions in $\cL$ simultaneously. 
Unless explicitly mentioned, we shall let $\cL$ denote the set of all bounded (in $[-1, 1]$) proper losses (as in \cite{kleinberg2023u}), and drop the subscript in $\ucal_\cL$ and $\pucal_\cL$ for convenience.

The simplest way to get low U-calibration error is via the classical notion of low \emph{calibration error}~\citep{dawid1982well}, defined as $\cal \coloneqq \sum_{\p \in \Delta_{K}}\|{\sum_{t; \p_{t} = \p} (\p - \y_t)}\|_{1}$.
Intuitively, a forecaster with low calibration error guarantees that whenever she makes a prediction $\p$, the empirical distribution of the true outcome is indeed close to $\p$. \citet{kleinberg2023u} prove that $\pucal \leq \ucal = \cO(\cal)$ and thus a well-calibrated forecaster must have small U-calibration error. However, getting low  calibration error is difficult and faces known barriers: the best existing upper bound on $\cal$ is $\cO(T^{\frac{K}{K+1}})$~\citep{blum2008regret}, and there is a $\Omega(T^{0.528})$ lower bound (for $K=2$)~\citep{qiao2021stronger}. Therefore, a natural question to ask is if it is possible to side-step calibration and directly get low U-calibration error. \citet{kleinberg2023u} answer this in the affirmative, and show that there exist simple and efficient algorithms with $\ucal = \cO(\sqrt{T})$ for $K=2$ and $\pucal = \cO(K\sqrt{T})$ for general $K$. This provides a strong decision-theoretic motivation for considering U-calibration error as opposed to calibration error; we refer the reader to \citet{kleinberg2023u}  for further discussion.

Following up~\citet{kleinberg2023u}, this paper addresses the following question that was left open in their work:
\textit{``What is the minimax optimal multiclass U-calibration error?''}
We give a complete answer to this question (regarding $\pucal$) by showing matching upper and lower bounds.
Moreover, we identify several broad sub-classes of proper losses for which much smaller U-calibration error is possible.
Concretely, our contributions are as follows.

\subsection{Contributions and Technical Overview}
First, we show that the minimax optimal value of $\pucal$ is $\Theta(\sqrt{KT})$:
\begin{itemize}[leftmargin=*]
\item In Section~\ref{sec:algorithm}, we start by showing that a simple modification to the noise distribution of the Follow-the-Perturbed-Leader (FTPL) algorithm of~\citet{kleinberg2023u} improves their $\pucal=\cO(K\sqrt{T})$ bound to $\cO(\sqrt{KT})$.
In fact, our algorithm coincides with that of~\citet{daskalakis2016learning} designed for an online learning setting with a fixed loss function and an adversary with only finite choices.
The reason that it works for any proper losses simultaneously in our problem is because for any set of outcomes, the empirical risk minimizer with respect to any proper loss is always the average of the outcomes (\textit{c.f.} property \eqref{eq:mean_forecast_is_benchmark}).

\item We then show in Section~\ref{sec:root_T_lower_bound} that there exists one particular proper loss $\ell$ such that any algorithm has to suffer $\textsc{Reg}_\ell = \Omega(\sqrt{KT})$ in the worst case, hence implying  $\pucal=\Omega(\sqrt{KT})$.
While our proof follows a standard randomized argument, the novelty lies in the construction of the proper loss and the use of an anti-concentration inequality to bound the expected loss of the benchmark.
We remark that, as a side result, our lower bound also implies the optimality of the FTPL algorithm of~\citet{daskalakis2016learning} in their setting, which is unknown before to our knowledge.
\end{itemize}

While~\citet{kleinberg2023u} only consider $\pucal$ for general $K$, we take a step forward and further study the stronger measure $\ucal$ (recall $\pucal \leq \ucal$).
We start by showing an upper bound on $\ucal_{\cL'}$ for the same FTPL algorithm and for any loss class $\cL'$ with a finite covering number.
Then, we consider an even simpler algorithm, Follow-the-Leader (FTL), which is deterministic and makes $\ucal$ and $\pucal$  trivially equal, and identify two broad classes of loss functions where FTL achieves logarithmic U-calibration error, an exponential improvement over the worst case:
\begin{itemize}[leftmargin=*]
\item In Section~\ref{sec:Lipschitz}, we show that for the class $\cL_G$ of $G$-Lipschitz bounded proper losses (which includes standard losses such as the squared loss and spherical loss), FTL ensures $\pucal_{\cL_G} = \ucal_{\cL_G} = \cO(G\log T)$.

We further show that all algorithms must suffer $\pucal_{\cL_G} = \Omega(\log T)$.
While we prove this lower bound using the standard squared loss that is known to admit $\Theta(\log T)$ regret in many online learning settings (e.g.,~\citet{abernethy2008optimal}), to our knowledge it has not been studied in our setting where the learner's decision set is a simplex and the adversary has finite choices.
Indeed, our proof is also substantially different from~\citet{abernethy2008optimal} and is one of the most technical contributions of our work.

\item Next, in Section~\ref{sec:decomposable}, we identify a class $\cL_{\textit{dec}}$ of losses that are decomposable over the $K$ outcomes and additionally satisfy some mild regularity conditions, and show that FTL again achieves $\pucal = \ucal = \cO(\log T)$ (ignoring other dependence).
This class includes losses induced by a certain family of Tsallis entropy that are not Lipschitz.
The key idea of our proof is to show that even though the loss might not be Lipschitz, its gradient grows at a controlled rate.

\item Given these positive results on FTL, one might wonder whether FTL is generally a good algorithm for any proper losses. We answer this question in the negative in Section~\ref{sec:T_lower_bound} by showing that there exists a bounded proper loss such that the regret of FTL is $\Omega(T)$. This highlights the need of using FTPL if one cares about all proper losses (or at least losses not in $\cL_G$ or $\cL_{\textit{dec}}$).
\end{itemize}

\subsection{Related Work}
For calibration, \cite{foster1998asymptotic} proposed the first algorithm for the binary setting with an (expected) $\cO(T^{\frac{2}{3}})$ calibration error (see also \cite{blum2007external} and \cite{hart2022calibrated} for a different proof of the result). In the multiclass setting, \cite{blum2008regret} have shown an $\cO(T^{\frac{K}{K + 1}})$ calibration error.
Several works have studied other variants of calibration error, 
such as the most recently proposed Distance to Calibration \citep{blasiok2023unifying, qiao2024distance, arunachaleswaran2024elementary}; see the references therein for other earlier variants.

A recent research trend, initiated by \citet{ gopalan_et_al:LIPIcs.ITCS.2022.79}, has centered around the concept of simultaneous loss minimization, also known as \textit{omniprediction}.
\citet{garg2024oracle} study an online adversarial version of it, and U-calibration can be seen as a special non-contextual case of their setting with only proper losses considered.
Their results, however, are not applicable here due to multiple reasons: for example, they consider only the binary case ($K=2$), and their algorithm is either only designed for Lipschitz convex loss functions or computationally inefficient.
We also note that omniprediction has been shown to have a surprising connection with multicalibration \citep{hebert2018multicalibration}, a multi-group fairness notion, making it an increasingly important topic~\citep{gopalan_et_al:LIPIcs.ITCS.2022.79, blasiok_et_al:LIPIcs.ITCS.2024.17, gopalan_et_al:LIPIcs.ITCS.2023.60, gopalan2023swap}.
\section{Preliminaries}\label{sec:prelim}

\paragraph{Notation:} We use lowercase bold alphabets to denote vectors. $\mathbb{N}$, $\mathbb{N}_{\ge 0}$ denote the set of positive, non-negative integers respectively. For any $m \in \mathbb{N}$, $[m]$ denotes the index set $\{1, \dots, m\}$. We use $\Delta_{K}$ to denote the $(K - 1)$-dimensional simplex, i.e., $\Delta_{K} \coloneqq \{\p \in \Rn^{K} \mid p_{i} \ge 0, \sum_{i = 1} ^ K p_{i} = 1\}$. The $i$-th standard basis vector (dimension inferred from the context) is denoted by $\e_{i}$,
and we use $\cE$ to represent the set  $\{\e_{1}, \dots, \e_{K}\}$ of all basis vectors of $\Rn^{K}$.
By default, $\norm{\cdot}$ denotes the $\ell_2$ norm.

\paragraph{Proper Losses:}
Throughout the paper, we consider the class of bounded proper losses $\cL \coloneqq \{\ell : \Delta_{K} \times \cE \to [-1,1] \mid \ell \text{ is proper}\}$ or a subset of it.
We emphasize that convexity (in the first argument) is never needed in our results.
As mentioned, a loss $\ell$ is proper if predicting the true distribution from which the outcome is sampled from gives the smallest loss in expectation, that is, $\mathbb{E}_{\y \sim \p}[\ell(\p, \y)] \le \mathbb{E}_{\y \sim \p} [\ell(\p', \y)]$ for all $\p, \p' \in \Delta_{K}$.

For a proper loss $\ell$, we refer to $\ell(\p, \y)$ as its \textit{bivariate} form. The \textit{univariate} form of $\ell$ is defined as $\ell(\p) \coloneqq \mathbb{E}_{\y \sim \p}[\ell(\p, \y)]$.
It turns out that a loss is proper only if its univariate form is concave.
Moreover, one can construct a proper loss using a concave  univariate form based on the following characterization lemma.

\begin{lemma}[Theorem 2 of~\citet{gneiting2007strictly}]\label{lem:characterization_proper_loss}
    A loss $\ell: \Delta_{K} \times \cE \to \mathbb{R}$ is proper if and only if there exists a concave function $f$ such that $\ell(\p, \y) = f(\p) + \ip{\g_\p}{\y - \p}$ for all $\p \in \Delta_{K}, \y \in \cE$, where $\g_{\p}$ denotes a subgradient of $f$ at $\p$.
    Also, $f$ is the univariate form of $\ell$.
\end{lemma}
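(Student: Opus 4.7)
The plan is to prove the two directions separately, using the canonical construction $\g_\p = (\ell(\p,\e_1),\dots,\ell(\p,\e_K))$ for the non-trivial direction.

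For the sufficiency direction, suppose $\ell(\p,\y) = f(\p) + \ip{\g_\p}{\y - \p}$ with $f$ concave and $\g_\p$ a supergradient (the paper calls it a ``subgradient'' by abuse, but concavity means $f(\p') \leq f(\p) + \ip{\g_\p}{\p'-\p}$). Since $\mathbb{E}_{\y \sim \p}[\y] = \p$, we immediately get $\mathbb{E}_{\y \sim \p}[\ell(\p,\y)] = f(\p)$, confirming both that $f$ is the univariate form and that $\mathbb{E}_{\y \sim \p}[\ell(\p',\y)] = f(\p') + \ip{\g_{\p'}}{\p - \p'} \ge f(\p)$ by the supergradient inequality. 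This proves properness.

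For the necessity direction, I would define $f(\p) \coloneqq \mathbb{E}_{\y \sim \p}[\ell(\p,\y)]$, i.e., the univariate form, and set $\g_\p \coloneqq (\ell(\p,\e_1),\dots,\ell(\p,\e_K))$. A direct computation shows $\ip{\g_\p}{\p} = \sum_i p_i \ell(\p,\e_i) = f(\p)$, so for any basis vector $\e_i$,
\begin{equation*}
f(\p) + \ip{\g_\p}{\e_i - \p} = f(\p) + \ell(\p,\e_i) - f(\p) = \ell(\p,\e_i),
\end{equation*}
which yields the desired decomposition for all $\y \in \cE$.

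It remains to verify that $f$ is concave and that $\g_\p$ is a supergradient of $f$ at $\p$. Both follow from properness in one stroke: for any $\p' \in \Delta_K$, properness gives $f(\p') = \mathbb{E}_{\y \sim \p'}[\ell(\p',\y)] \le \mathbb{E}_{\y \sim \p'}[\ell(\p,\y)] = \ip{\g_\p}{\p'}$, while $f(\p) = \ip{\g_\p}{\p}$ from the previous paragraph. Subtracting gives $f(\p') \le f(\p) + \ip{\g_\p}{\p' - \p}$, which is exactly the supergradient inequality and in particular implies that $f$ is the infimum (over $\p$) of the affine functions $\p' \mapsto \ip{\g_\p}{\p'}$, hence concave.

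The main (very mild) obstacle is keeping the sign convention straight between ``subgradient'' as used in the statement and the supergradient inequality for concave functions; once that is sorted, everything reduces to the single identity $\ip{\g_\p}{\p} = f(\p)$ combined with the one-line use of properness. No technical machinery beyond linearity of expectation over $\cE$ is needed.
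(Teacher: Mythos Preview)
Your proof is correct. Note, however, that the paper does not actually supply its own proof of this lemma: it is stated in the preliminaries as a citation of Theorem~2 of \citet{gneiting2007strictly}, and no proof appears in the main text or the appendices. Your argument is essentially the standard one (and the one in Gneiting--Raftery): the identity $\ip{\g_\p}{\p}=f(\p)$ plus a single invocation of properness yields both the supergradient inequality and concavity, and the sufficiency direction is just linearity of expectation combined with the supergradient inequality. There is nothing to compare against in the paper itself.
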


We provide several examples of proper losses below:
\begin{itemize}[leftmargin=*]
\item The spherical loss is $\ell(\p, \y) = -\frac{\ip{\p}{\y}}{\norm{\p}}$, which is $\sqrt{K}$-Lipschitz (Proposition \ref{prop:lipschitzness_spherical_loss}) but \textit{non-convex} in $\p$. Its univariate form is $\ell(\p) = -\norm{\p}$.

\item The squared loss (also known as the Brier score) is $\ell(\p, \y) = \frac{1}{2}\norm{\p-\y}^2$, which is clearly $2$-Lipschitz and convex in $\p$. Its univariate form is $\ell(\p) = 1-\norm{\p}^2$.

\item Generalizing the squared loss, we consider the univariate form $\ell(\p) = -\tilde{c}_{K} \sum_{i = 1} ^ {K} p_{i} ^ {\alpha}$ for $\alpha > 1$ and some constant $\tilde{c}_{K}>0$, which is the Tsallis entropy and is concave.
The induced proper loss is $\ell(\p,\y) = \tilde{c}_{K}(\alpha-1)\sum_{i=1}^K p_i^\alpha - \tilde{c}_{K}\alpha \sum_{i=1}^K p_i^{\alpha-1}y_i$, which is \textit{not Lipschitz} for $\alpha \in (1,2)$.\footnote{Here, the scaling constant $\tilde{c}_{K}$ is such that $\ell(\p, \y) \in [-1,1]$.}
\end{itemize}

The following fact is critical for U-calibration: for any $n \in \mathbb{N}$ and a sequence of outcomes $\y_{1}, \dots, \y_{n} \in \mathcal{E}$, the \textit{mean forecaster} is always the empirical risk minimizer for any proper loss $\ell$: (the proof is by definition and included in Appendix~\ref{app:prelims_proper_loss} for completeness):
\begin{align}\label{eq:mean_forecast_is_benchmark}
        \frac{1}{n}\sum_{j = 1} ^ {n} \y_{j} \in \argmin_{p \in \Delta_{K}} \frac{1}{n}\sum_{j = 1} ^ {n} \ell(\p, \y_j).
\end{align}

\paragraph{Problem Setting:}
As mentioned, the problem we study follows the following protocol:
at each time $t = 1, \dots, T$,  a forecaster predicts a distribution $\p_t \in \Delta_K$ over $K$ possible outcomes,
and at the same time, an adversary decides the true outcome encoded by a one-hot vector $\y_t \in \cE$, which is revealed to the forecaster at the end of time $t$.

For a fixed proper loss function $\ell$, the regret of the forecaster is defined as: $\textsc{Reg}_{\ell} \coloneqq \sum_{t = 1} ^ T \ell(\p_t, \y_t) - \inf_{\p \in \Delta_{K}} \sum_{t = 1} ^ T\ell(\p, \y_t)$,
which, according to property~\eqref{eq:mean_forecast_is_benchmark}, can be written as $\sum_{t = 1} ^ T \ell(\p_{t}, \y_{t}) - \sum_{t = 1} ^ T \ell(\bbeta, \y_t)$ where $\bbeta \coloneqq \frac{1}{T}\sum_{t = 1} ^ T \y_t$ is simply the empirical average of all outcomes.

Our goal is to ensure low regret against a class of proper losses simultaneously.
We define U-calibration error as $\ucal_{\cL} = \mathbb{E}\bigs{\sup_{\ell \in \cL} \textsc{Reg}_{\ell}}$ and pseudo U-calibration error as $\pucal_{\cL} = \sup_{\ell \in \cL} \mathbb{E}[\textsc{Reg}_{\ell}]$ for a family of loss functions $\cL$.
Unless explicitly mentioned, $\cL$ denotes the set of all bounded (in $[-1, 1]$) proper losses and is dropped from the subscripts for convenience.

\paragraph{Oblivious Adversary versus Adaptive Adversary:}
As is standard in online learning, an oblivious adversary decides all outcomes $\y_1, \ldots, \y_T$ ahead of the time with the knowledge of the forecaster's algorithm (but not her random seeds),
while an adaptive adversary decides each $\y_t$ with the knowledge of past forecasts $\p_1, \ldots, \p_{t-1}$.
Except for one result (upper bound on $\ucal$ for a class with low-covering number), all our upper bounds hold for the stronger adaptive adversary, and all our lower bounds hold for the weaker oblivious adversary (which makes the lower bounds stronger).

\section{Optimal U-calibration Error}\label{sec:proposed_alg_and_lower_bound}
In this section, we prove that the minimax optimal pseudo U-calibration error is $\Theta(\sqrt{KT})$.

\subsection{Algorithm}\label{sec:algorithm}
As mentioned, our algorithm makes a simple change to the noise distribution of the FTPL algorithm of~\citet{kleinberg2023u} and in fact coincides with the algorithm of~\citet{daskalakis2016learning} designed for a different setting.
To this end, we start by reviewing their setting and algorithm.
Specifically, \citet{daskalakis2016learning} consider the following online learning problem: at each time $t \in [T]$, a learner chooses an action $\a_{t} \in \mathcal{A}$ for some action set $\mathcal{A}$; at the same time, an adversary selects an outcome $\bm{\theta_{t}}$ from a finite set $\Theta \coloneqq \{\hat{\bi{\theta}}_{1}, \dots, \hat{\bi{\theta}}_{K}\}$ of size $K$; finally, the learner observes $\bi{\theta}_t$ and incurs loss $h(\a_{t}, \bi{\theta}_{t})$ for some arbitrary loss function $h: \mathcal{A}\times \Theta \rightarrow [-1,1]$ that is fixed and known to the learner. 
\citet{daskalakis2016learning} propose the following FTPL algorithm:
at each time $t$, randomly generate a set of hallucinated outcomes, where the number of each possible outcome $\hat{\bi{\theta}}_{i}$ for $i \in [K]$ follows independently a geometric distribution with parameter $\sqrt{K/T}$, and then output the empirical risk minimizer using both the true outcomes and the hallucinated outcomes as the final action $\a_t$.
Formally, $\a_t \in \argmin_{\a\in\mathcal{A}}\sum_{i=1}^K Y_{t,i}h(\a, \hat{\bi{\theta}}_{i})$ where $Y_{t,i} = |\{s < t \mid \bm{\theta}_s = \hat{\bi{\theta}}_{i}\}| + m_{t,i}$ and $m_{t,i}$ is an i.i.d. sample of a geometric distribution with parameter $\sqrt{K/T}$.
This simple algorithm enjoys the following regret guarantee.

\begin{theorem}[Appendix F.3 of~\citet{daskalakis2016learning}] \label{thm:regret_FTPL_geometric}
The FTPL algorithm described above satisfies the following regret bound:
$
\mathbb{E}\left[\sum_{t=1}^T h(\a_t, \bm{\theta}_t) - \inf_{\a\in\mathcal{A}}\sum_{t=1}^T h(\a, \bm{\theta}_t) \right] \leq 4\sqrt{KT},
$
where the expectation is taken over the randomness of both the algorithm and the adversary.   
\end{theorem}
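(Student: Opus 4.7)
The plan is to decompose the expected regret into a \emph{stability} term and a \emph{be-the-leader} (BTL) term by introducing a one-step-lookahead ``ghost'' predictor
\[
\tilde\a_t \in \argmin_{\a\in\mathcal{A}}\sum_{i=1}^K \tilde Y_{t+1,i}\,h(\a,\hat{\bi{\theta}}_i),
\qquad
\tilde Y_{t+1,i} \coloneqq \big|\{s\leq t:\bm{\theta}_s=\hat{\bi{\theta}}_i\}\big| + \tilde m_{t+1,i},
\]
which, unlike the actual $\a_t$, already incorporates the current outcome $\bm{\theta}_t$ into its count; here $\tilde m_{t+1,i}\sim\mathrm{Geom}(p)$ independently with $p=\sqrt{K/T}$. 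One then writes
\begin{align*}
\mathbb{E}\!\left[\sum_{t=1}^T h(\a_t,\bm{\theta}_t)-\inf_{\a\in\mathcal{A}}\sum_{t=1}^T h(\a,\bm{\theta}_t)\right]
&= \underbrace{\sum_{t=1}^T \mathbb{E}\big[h(\a_t,\bm{\theta}_t)-h(\tilde\a_t,\bm{\theta}_t)\big]}_{(\mathrm{I})} \\
&\quad + \underbrace{\mathbb{E}\!\left[\sum_{t=1}^T h(\tilde\a_t,\bm{\theta}_t)-\inf_{\a\in\mathcal{A}}\sum_{t=1}^T h(\a,\bm{\theta}_t)\right]}_{(\mathrm{II})}.
\end{align*}

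For $(\mathrm{I})$, observe that $Y_{t,i}$ and $\tilde Y_{t+1,i}$ differ in their deterministic part only in the single coordinate $j$ satisfying $\hat{\bi{\theta}}_j=\bm{\theta}_t$, where the count grows by one. I would exploit the memoryless property of the geometric distribution (with the convention $\Pr[m=k]=(1-p)^k p$ for $k\in\mathbb{N}_{\geq 0}$) to couple the perturbations as follows: set $\tilde m_{t+1,i}=m_{t,i}$ for every $i\neq j$; for $i=j$, on the event $\{m_{t,j}\geq 1\}$ set $\tilde m_{t+1,j}=m_{t,j}-1$, and on the complementary event $\{m_{t,j}=0\}$ draw $\tilde m_{t+1,j}\sim\mathrm{Geom}(p)$ freshly and independently. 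Memorylessness guarantees that $\tilde m_{t+1,j}$ retains its $\mathrm{Geom}(p)$ marginal, and on $\{m_{t,j}\geq 1\}$ -- which has probability $1-p$ -- we obtain $\tilde Y_{t+1,i}=Y_{t,i}$ for every $i$, so $\a_t=\tilde\a_t$. Since $h\in[-1,1]$, this yields $(\mathrm{I})\leq 2pT=2\sqrt{KT}$.

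For $(\mathrm{II})$, I would first replace the per-round fresh $\tilde m_{t+1,i}$ with a single fixed draw $\tilde m_i\sim\mathrm{Geom}(p)$; against an oblivious adversary this leaves $\mathbb{E}[h(\tilde\a_t,\bm{\theta}_t)]$ unchanged because the marginal law of $\tilde\a_t$ is identical in the two constructions. Then the standard be-the-leader lemma, applied to the augmented sequence with a ``round~$0$'' loss $\a\mapsto\sum_i \tilde m_i h(\a,\hat{\bi{\theta}}_i)$, gives $\sum_{t=1}^T h(\tilde\a_t,\bm{\theta}_t)\leq \inf_{\a\in\mathcal{A}}\sum_t h(\a,\bm{\theta}_t)+2\sum_i \tilde m_i$, whose expectation is $2K(1-p)/p\leq 2\sqrt{KT}$. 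Adding the two bounds yields the claimed $4\sqrt{KT}$, with $p=\sqrt{K/T}$ precisely balancing the stability and BTL contributions. The hardest step will be making the stability coupling fully rigorous -- verifying simultaneously that the shift preserves the $\mathrm{Geom}(p)$ marginal of $\tilde m_{t+1,j}$ and that the coupled $\tilde m_{t+1,\cdot}$ remain independent across $i$ and of the past -- and this is exactly what the memoryless property buys, explaining why the geometric perturbation (rather than the exponential in~\citet{kleinberg2023u}) produces the improved $\sqrt{K}$ rather than $K$ dependence.
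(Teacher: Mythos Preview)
The paper does not prove Theorem~\ref{thm:regret_FTPL_geometric}; it is quoted directly from Appendix~F.3 of \citet{daskalakis2016learning} and used as a black box. So there is no ``paper's own proof'' to compare against. That said, your plan is the standard Kalai--Vempala style FTPL analysis (stability plus be-the-leader) specialized to geometric perturbations, which is exactly what one expects the original Daskalakis--Syrgkanis argument to be, and the arithmetic checks out: $(\mathrm{I})\le 2pT=2\sqrt{KT}$ via the memoryless coupling and $(\mathrm{II})\le 2K(1-p)/p\le 2\sqrt{KT}$ via BTL with a round-$0$ hallucinated loss.

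Two minor remarks. First, your restriction of step $(\mathrm{II})$ to an oblivious adversary is unnecessary and leaves a small gap relative to how the paper uses the theorem (Corollary~\ref{cor:pseudo_UCal_upper_bound} claims the bound against an \emph{adaptive} adversary). The fix is simple: the ghost perturbations $\tilde m_{t+1,\cdot}$ and the fixed draw $\tilde m_{\cdot}$ are independent of the actual algorithm's noise $m_{1,\cdot},\ldots,m_{t-1,\cdot}$, and an adaptive adversary's $\bm{\theta}_t$ is measurable with respect to the latter; hence, conditionally on $\bm{\theta}_{1:t}$, both $\tilde m_{t+1,\cdot}$ and $\tilde m_{\cdot}$ still have the $\mathrm{Geom}(p)^K$ law, and the replacement of fresh by fixed noise in $(\mathrm{II})$ goes through unchanged. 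Second, your closing parenthetical attributes an ``exponential'' perturbation to \citet{kleinberg2023u}; per the paper (paragraph after Corollary~\ref{cor:pseudo_UCal_upper_bound}) their noise is actually uniform on $\{0,1,\ldots,\lfloor\sqrt{T}\rfloor\}$, not exponential. Neither point affects the correctness of your core argument.
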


Now we are ready to discuss how to apply their algorithm to our multiclass U-calibration problem.
Naturally, we take $\cA = \Delta_{K}$ and $\Theta = \cE$.
What $h$ should we use when we care about all proper losses?
It turns out that this dose not matter (an observation made by~\citet{kleinberg2023u} already): according to property~\eqref{eq:mean_forecast_is_benchmark}, the mean forecaster taking into account both the true outcomes and the hallucinated ones (that is, $p_{t,i} = Y_{t,i}/\sum_{k=1}^K Y_{t,k}$) is a solution of $\argmin_{\p\in\Delta_K}\sum_{i=1}^K Y_{t,i}h(\p, \e_{i})$ for \textit{any} proper loss $h\in\cL$!
This immediately leads to the following result.

\begin{corollary}\label{cor:pseudo_UCal_upper_bound}
    Algorithm~\ref{alg:FTPL_geometric} ensures $\pucal\le 4\sqrt{KT}$ against any adaptive adversary. 
\end{corollary}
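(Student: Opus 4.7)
The plan is to apply Theorem~\ref{thm:regret_FTPL_geometric} in a black-box manner, with the only nontrivial observation being that the algorithm's iterates $\p_t$ are simultaneously valid empirical risk minimizers for \emph{every} proper loss. I would proceed in three short steps.

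First, I would fix an arbitrary proper loss $\ell \in \cL$ and instantiate the setting of~\citet{daskalakis2016learning} by taking $\cA = \Delta_K$, $\Theta = \cE$, and $h = \ell$. The FTPL action prescribed by their algorithm is any element of $\argmin_{\p \in \Delta_K}\sum_{i=1}^K Y_{t,i}\,\ell(\p,\e_i)$. Here lies the single key point: by property~\eqref{eq:mean_forecast_is_benchmark} applied to the multiset consisting of $Y_{t,i}$ copies of $\e_i$ for each $i$, the mean forecaster $p_{t,i} = Y_{t,i}/\sum_{k=1}^K Y_{t,k}$ belongs to this $\argmin$ regardless of which proper loss $\ell$ we plug in. In particular, the output of Algorithm~\ref{alg:FTPL_geometric}, which is precisely this mean forecaster, is a legitimate realization of the FTPL algorithm of~\citet{daskalakis2016learning} when run with loss $h = \ell$.

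Second, since $\p_t$ realizes their FTPL algorithm for the loss $\ell$, Theorem~\ref{thm:regret_FTPL_geometric} immediately yields
\[
\mathbb{E}\!\left[\sum_{t=1}^T \ell(\p_t,\y_t) - \inf_{\p \in \Delta_K}\sum_{t=1}^T \ell(\p,\y_t)\right] \le 4\sqrt{KT},
\]
i.e.\ $\mathbb{E}[\textsc{Reg}_\ell] \le 4\sqrt{KT}$. Crucially, the \emph{same} sequence of predictions $\p_1,\ldots,\p_T$ achieves this bound for every proper loss $\ell$, because neither the algorithm's iterates nor the randomness used to generate them depends on the choice of $\ell$; the loss enters only through the analysis. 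I would also note that the bound of~\citet{daskalakis2016learning} holds against an adaptive adversary (this is standard for FTPL-type analyses since the stability argument controls $\mathbb{E}[\ell(\p_t,\y_t)-\ell(\p_{t+1},\y_t)]$ conditional on the past), which is what allows us to claim the same for $\pucal$.

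Third, taking the supremum over $\ell \in \cL$ on both sides (which commutes with the supremum defining $\pucal$) gives
\[
\pucal = \sup_{\ell \in \cL} \mathbb{E}[\textsc{Reg}_\ell] \le 4\sqrt{KT},
\]
completing the proof. The main conceptual obstacle is really the one already flagged in the discussion preceding the corollary — reconciling the fact that Theorem~\ref{thm:regret_FTPL_geometric} assumes a fixed, known loss with our desideratum of simultaneous low regret across all proper losses — and this obstacle dissolves entirely once property~\eqref{eq:mean_forecast_is_benchmark} is invoked. There are no calculations to perform beyond this.
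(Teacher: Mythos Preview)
Your proposal is correct and follows exactly the paper's approach: instantiate Theorem~\ref{thm:regret_FTPL_geometric} with $\cA=\Delta_K$, $\Theta=\cE$, and an arbitrary proper loss $h=\ell$, invoke property~\eqref{eq:mean_forecast_is_benchmark} to argue that the mean forecaster (Algorithm~\ref{alg:FTPL_geometric}) is a valid FTPL action for every such $\ell$, and then take the supremum over $\ell$. The paper presents this argument in the paragraph immediately preceding the corollary, and your write-up matches it essentially verbatim.
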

\begin{algorithm}[t]
                    \caption{FTPL with geometric noise for U-calibration}
                    \label{alg:FTPL_geometric}
                    \begin{algorithmic}[1]
                            \STATE\textbf{for} $t = 1, \dots, T$
                            \STATE\hspace{3mm} For each $i \in [K]$, sample $m_{t, i}$ independently from a geometric distribution with parameter $\sqrt{K/T}$, and compute $Y_{t,i} = |\{s < t \mid  \y_s = \e_i \}| + m_{t,i}$.
                            
                            \STATE\hspace{3mm} Predict $\p_t \in \Delta_K$ such that $p_{t,i} = Y_{t,i}/\sum_{k=1}^K Y_{t,k}$, and observe the true outcome $\y_t \in \cE$.
                            \STATE\textbf{end for}
                        \end{algorithmic}
\end{algorithm}

We remark that the only difference of Algorithm~\ref{alg:FTPL_geometric} compared to that of~\citet{kleinberg2023u} is that $m_{t,i}$ is sampled from a geometric distribution instead of a uniform distribution in $\{0, 1, \ldots, \lfloor \sqrt{T} \rfloor\}$.
Using such noises that are skewed towards smaller values leads to better trade-off between the stability of the algorithm and the expected noise range, which is the key to improve the regret bound from $\cO(K\sqrt{T})$ to $\cO(\sqrt{KT})$.

\subsection{Lower Bound}\label{sec:root_T_lower_bound}
We now complement the upper bound of the previous section with a matching lower bound. Similar to the Multi-Armed Bandit problem, the regime of interest is $T = \Omega(K)$. 
\begin{restatable}{theorem}{lbsqrtKT}\label{thm:lower_bound_pseudo_UCal}
    There exists a proper loss $\ell$ with range $[-1, 1]$  such that the following holds: for any online algorithm $\alg$, there exists a choice of $\y_{1}, \dots, \y_{T}$ by an oblivious adversary such that the expected regret $\mathbb{E}[\textsc{Reg}_{\ell}]$ of $\alg$  is $\Omega(\sqrt{KT})$ when $T \ge 12K$. 
\end{restatable}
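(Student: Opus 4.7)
The plan is to apply Yao's minimax principle: fix a single proper loss $\ell$, draw $\y_1, \dots, \y_T$ i.i.d.\ uniformly from $\cE$, and show that every (randomized) algorithm suffers expected regret $\Omega(\sqrt{KT})$ on this product distribution; a single hard deterministic sequence then exists by averaging. For $\ell$, I would take the concave univariate form $f(\p) = -\|\p - \tfrac{1}{K}\mathbf{1}\|_1$, which has a non-differentiable kink at the uniform distribution so that $f$ decreases \emph{linearly} (not quadratically) as one moves away from $\tfrac{1}{K}\mathbf{1}$. Lemma~\ref{lem:characterization_proper_loss} then produces a proper loss
\[
\ell(\p,\y) = \alpha\bigl(f(\p) + \langle \g_\p, \y - \p\rangle\bigr), \qquad \g_\p = -\mathrm{sign}\bigl(\p - \tfrac{1}{K}\mathbf{1}\bigr) \in \partial f(\p),
\]
with the sign convention at zero coordinates chosen arbitrarily. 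Since $|f(\p)| \le 2$ and $|\langle \g_\p, \y - \p\rangle| \le \|\g_\p\|_\infty \|\y - \p\|_1 \le 2$ on $\Delta_K\times\cE$, the scaling $\alpha = 1/4$ forces $\ell \in [-1,1]$.

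Next, for any fixed deterministic algorithm (the randomized case follows by conditioning on the seed), the benchmark is identified by~\eqref{eq:mean_forecast_is_benchmark} as $\bbeta \coloneqq \tfrac{1}{T}\sum_t \y_t$, and the linearity of $\ell$ in $\y$ gives
\[
\mathbb{E}[\textsc{Reg}_\ell] \;=\; \mathbb{E}\!\left[\sum_{t=1}^T \ell(\p_t, \y_t)\right] - \alpha T\,\mathbb{E}[f(\bbeta)].
\]
For each $t$, $\p_t$ is a function of $\y_1,\ldots,\y_{t-1}$ and hence independent of the fresh uniform $\y_t$; properness of $\ell$ together with $\mathbb{E}[\y_t]=\tfrac{1}{K}\mathbf{1}$ gives $\mathbb{E}_{\y_t}[\ell(\p_t,\y_t)\mid\p_t] \ge \alpha f(\tfrac{1}{K}\mathbf{1}) = 0$. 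Since also $f(\tfrac{1}{K}\mathbf{1})=0$, we obtain the clean bound
\[
\mathbb{E}[\textsc{Reg}_\ell] \;\ge\; -\alpha T\,\mathbb{E}[f(\bbeta)] \;=\; \alpha T\sum_{i=1}^K \mathbb{E}\bigl|\bbeta_i - \tfrac{1}{K}\bigr|.
\]

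The remaining step, which I expect to be the main obstacle, is the coordinate-wise anti-concentration bound $\mathbb{E}|\bbeta_i - \tfrac{1}{K}| = \Omega(1/\sqrt{KT})$, which then sums to $\mathbb{E}\|\bbeta - \tfrac{1}{K}\mathbf{1}\|_1 = \Omega(\sqrt{K/T})$ and delivers $\Omega(\sqrt{KT})$ regret. Here $T\bbeta_i \sim \mathrm{Binomial}(T, 1/K)$ with variance $\sigma^2 = T(1/K)(1-1/K) = \Theta(T/K)$. A direct computation of the centered fourth moment gives $\mathbb{E}(T\bbeta_i - T/K)^4 \le 3\sigma^4 + O(T/K)$; under the hypothesis $T \ge 12K$ the first term dominates, so $\mathbb{E}(T\bbeta_i - T/K)^4 = O(\sigma^4)$. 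Applying the Paley--Zygmund inequality to the nonnegative random variable $(T\bbeta_i - T/K)^2$ at threshold $\sigma^2/2$ then yields $\Pr\bigl[(T\bbeta_i - T/K)^2 \ge \sigma^2/2\bigr] = \Omega(1)$, hence $\mathbb{E}|T\bbeta_i - T/K| = \Omega(\sigma) = \Omega(\sqrt{T/K})$ for an absolute constant. Dividing by $T$, summing over $i \in [K]$, and substituting into the previous display finishes the lower bound; Yao's principle then extracts a specific worst-case oblivious sequence, as required. The care needed to couple the fourth-moment calculation with Paley--Zygmund (and the $T\ge 12K$ regime) is precisely the anti-concentration step the authors flag as the novel ingredient.
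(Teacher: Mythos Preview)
Your proposal is correct and follows essentially the same route as the paper: the same V-shaped loss built from $f(\p)=-\|\p-\tfrac{1}{K}\mathbf{1}\|_1$ (the paper absorbs a factor $\tfrac{1}{2}$ instead of your $\alpha=\tfrac{1}{4}$), the same i.i.d.\ uniform adversary, the same reduction to $\mathbb{E}[\textsc{Reg}_\ell]\ge \text{const}\cdot\sum_i\mathbb{E}|n_i-T/K|$ via properness and~\eqref{eq:mean_forecast_is_benchmark}. The only substantive difference is the anti-concentration tool: the paper invokes a reverse Chernoff bound (Lemma~\ref{lem:anti_concentration}) together with Markov to get $\mathbb{E}|n_i-T/K|=\Omega(\sqrt{T/K})$, whereas your Paley--Zygmund argument on the squared deviation, using $\mu_4(\mathrm{Bin}(T,1/K))\le \sigma^2+3\sigma^4=O(\sigma^4)$ once $T\ge 12K$, gives the same conclusion by a more self-contained route.
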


We defer to the proof to Appendix~\ref{app:proof_lower_bound_sqrt_KT} and highlight the key ideas and novelty here.
First, the proper loss we use to prove the lower bound takes the following univariate form $\ell(\p) = -\frac{1}{2}\sum_{i = 1} ^ {K}\abs{p_{i} - \frac{1}{K}}$, which is in fact a direct generalization of the so-called ``V-shaped loss'' studied in~\citet{kleinberg2023u} for the binary case.
More specifically, they show that in the binary case, V-shaped losses are the ``hardest'' in the sense that low regret with respect to all V-shaped losses directly implies low regret with respect to all proper losses (that is, low U-calibration error).
On the other hand, they also prove that this is \textit{not true} for the general multiclass case.
Despite this fact, here, we show that V-shaped loss is still the ``hardest'' in the multiclass case in a different sense: it is the hardest loss for any algorithm with $\pucal = \cO(\sqrt{KT})$.

With this loss function, we then follow a standard probabilistic argument and consider a randomized oblivious adversary that samples $\y_{1}, \dots, \y_{T}$ i.i.d. from the uniform distribution over $\cE$. For such an adversary, we argue the following: 
    (a) the expected loss incurred by $\alg$ is non-negative, i.e., $\mathbb{E}\bigs{\sum_{t = 1} ^ {T} \ell(\p_{t}, \y_{t})} \ge 0$, where the expectation is taken over $\y_{1}, \dots, \y_{T}$ and any internal randomness in $\alg$;
    (b) the expected loss incurred by the benchmark is bounded as $\mathbb{E}\bigs{\inf_{\p \in \Delta_{K}}\sum_{t = 1} ^ {T} \ell(\p, \y_{t})} \le -c\sqrt{KT}$ for some universal positive constant $c$, where the expectation is over $\y_{1}, \dots, \y_T$. 
Together, this implies that the expected regret of $\alg$ is at least $c\sqrt{KT}$ in this randomized environment, which further implies that there must exist one particular sequence of $\y_1, \ldots, \y_T$ such that the expected regret of $\alg$ is at least $c\sqrt{KT}$, finishing the proof.
We remark that our proof for (b) is novel and based on an anti-concentration inequality for Bernoulli random variables (Lemma \ref{lem:anti_concentration}). 

We discuss some immediate implications of Theorem~\ref{thm:lower_bound_pseudo_UCal} below.
First, it implies that in the online learning setting of \citet{daskalakis2016learning} where the adversary has only $K$ choices (formally defined in Section~\ref{sec:algorithm}), 
without further assumptions on the loss function, their FTPL algorithm is \textit{minimax optimal}.
To our knowledge this is unknown before.

Second, since $\pucal=\sup_{\ell}\mathbb{E}[\textsc{Reg}_{\ell}] \ge \mathbb{E}[\textsc{Reg}_{\ell'}]$ for any $\ell' \in \mathcal{L}$, Theorem \ref{thm:lower_bound_pseudo_UCal} immediately implies a $\Omega(\sqrt{KT})$ lower bound on the pseudo multiclass U-calibration error. 
In fact, since $\ucal \geq \pucal$, the same lower bound holds for the actual U-calibration error. 
\begin{corollary}\label{cor:pseudo_UCal_lower_bound}
    For any online forecasting algorithm, there exists an oblivious adversary such that $\ucal \geq \pucal = \Omega(\sqrt{KT})$.
\end{corollary}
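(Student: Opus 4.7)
The plan is to follow the randomized-adversary outline the excerpt sketches, but to make the two steps (a) and (b) precise. I would first instantiate the proper loss $\ell$ via \Cref{lem:characterization_proper_loss} using the univariate form $f(\p) = -\tfrac12 \sum_{i=1}^K |p_i - 1/K|$. A valid subgradient at $\p$ has $i$-th coordinate $-\tfrac12 \operatorname{sign}(p_i - 1/K)$ (with any value in $[-\tfrac12,\tfrac12]$ on the kink), so
\begin{equation*}
  \ell(\p,\e_j) \;=\; -\tfrac{1}{2}\operatorname{sign}(p_j - 1/K) \;+\; \tfrac{1}{2K}\sum_{i=1}^K \operatorname{sign}(p_i - 1/K),
\end{equation*}
and each of the two terms is bounded by $1/2$ in absolute value, so $\ell$ has range in $[-1,1]$. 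This takes care of the ``proper loss with the right range'' part.

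Next, I would let the oblivious adversary be randomized, drawing $\y_1,\dots,\y_T$ i.i.d.\ uniformly from $\cE$. For step (a), I would not compute $\mathbb{E}[\ell(\p_t,\y_t)]$ directly; instead, since $\y_t$ uniform on $\cE$ corresponds to sampling from the distribution $\bm{q} = \tfrac{1}{K}\mathbf{1}$, propriety of $\ell$ yields $\mathbb{E}_{\y_t}[\ell(\p_t,\y_t)] \ge \mathbb{E}_{\y_t}[\ell(\bm{q},\y_t)] = f(\bm{q}) = 0$ conditional on everything that determines $\p_t$. Summing and taking full expectation gives $\mathbb{E}[\sum_t \ell(\p_t,\y_t)] \ge 0$, which holds against any algorithm.

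For step (b), property~\eqref{eq:mean_forecast_is_benchmark} lets me replace the infimum by the empirical mean $\bbeta = \tfrac{1}{T}\sum_t \y_t$, giving
\begin{equation*}
  \mathbb{E}\Bigl[\inf_{\p \in \Delta_K}\sum_{t=1}^T \ell(\p,\y_t)\Bigr] \;=\; T\,\mathbb{E}[f(\bbeta)] \;=\; -\tfrac{T}{2}\sum_{i=1}^K \mathbb{E}\bigl|\bbeta_i - 1/K\bigr|.
\end{equation*}
Writing $T\bbeta_i \sim \mathrm{Bin}(T,1/K)$, each $\mathbb{E}|T\bbeta_i - T/K|$ is the mean absolute deviation of a Binomial with mean $T/K \ge 12$ (by the assumption $T \ge 12K$) and variance $\tfrac{T}{K}(1-\tfrac1K)$. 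The planned anti-concentration lemma (Lemma~\ref{lem:anti_concentration}) should give $\mathbb{E}|T\bbeta_i - T/K| = \Omega(\sqrt{T/K})$, hence $\mathbb{E}|\bbeta_i - 1/K| = \Omega(1/\sqrt{TK})$, and summing over $i$ yields $-T\,\mathbb{E}[f(\bbeta)] = \Omega(\sqrt{KT})$. Combining (a) and (b), the expected regret under the random adversary is $\Omega(\sqrt{KT})$, and averaging shows some deterministic sequence witnesses the same bound.

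The main obstacle is the clean anti-concentration estimate $\mathbb{E}|X - \mathbb{E}X| = \Omega(\sqrt{\mathrm{Var}(X)})$ for $X \sim \mathrm{Bin}(T,1/K)$, which is needed uniformly in $K$ including the regime of small success probability $1/K$. This is precisely where the hypothesis $T \ge 12K$ enters: it guarantees the mean is large enough (and the success probability not too close to $0$) that a Paley--Zygmund-type or direct lower-tail argument on $|X - \mathbb{E}X|$ produces the right $\sqrt{T/K}$ scaling rather than degrading near the boundary. Once that lemma is in hand, the remaining calculations are arithmetic. A secondary subtlety worth double-checking is that the subgradient choice on the kink $p_i = 1/K$ is immaterial: any consistent choice keeps $\ell$ proper and bounded, and the argument in step (a) uses only propriety, not a specific selection.
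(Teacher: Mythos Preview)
Your proposal is correct and follows essentially the same route as the paper: the same V-shaped univariate form, the same uniform random adversary, the same reduction of the benchmark to $\mathbb{E}|n_i - T/K|$, and the same appeal to the Bernoulli anti-concentration lemma (Lemma~\ref{lem:anti_concentration}) combined with Markov's inequality, with $T \ge 12K$ ensuring the relevant parameter interval is nonempty. The only cosmetic difference is in step~(a): you invoke propriety directly (comparing $\p_t$ to $\bm{q} = \tfrac{1}{K}\mathbf{1}$), whereas the paper equivalently expands $\sum_i \ell(\p,\e_i)$ and uses concavity of the univariate form; the two arguments are the same up to Lemma~\ref{lem:characterization_proper_loss}.
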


\subsection{From $\pucal$ to $\ucal$}\label{subsec:PUCal_to_UCal}
We now make an attempt to bound the U-calibration error $\ucal$ of Algorithm~\ref{alg:FTPL_geometric} for an oblivious adversary.
Specifically, since the perturbations are sampled every round and the adversary is oblivious, using Hoeffding's inequality it is straightforward to show that for a fixed $\ell$ and a fixed $\delta \in (0, 1)$, the regret of Algorithm \ref{alg:FTPL_geometric} with respect to $\ell$ satisfies $\textsc{Reg}_{\ell} \le 4\sqrt{KT} + \sqrt{2T\log \bigc{\nicefrac{1}{\delta}}}$ with probability at least $1-\delta$ (see~\citet[Section~9]{hutter2005adaptive} or Lemma~\ref{lem:high_probability_regret}).
Therefore, for a finite subset $\cL'$ of $\cL$, taking a union bound over all $\ell \in \cL'$ gives $\sup_{\ell \in \cL'} \textsc{Reg}_{\ell}\leq 4\sqrt{KT} + \sqrt{2T\log \bigc{\nicefrac{|\cL'|}{\delta}}}$ with probability at least $1-\delta$.
Picking $\delta = 1/T$ and using the boundedness of losses, we obtain 
   $\ucal_{\cL'} \le 2 + 4\sqrt{KT} + \sqrt{2T\log \bigc{T\abs{\cL'}}}.$
In Appendix~\ref{app:bounding_UCal}, we generalize this simple argument to any infinite subset $\cL'$ of $\cL$ with a finite $\epsilon$-covering number $M(\cL', \epsilon; \norm{.}_{\infty})$ and prove for any $\epsilon > 0$,
\begin{align}\label{eq:UCal_bound}
    \ucal_{\cL'} \le 2 + 4\epsilon T + 4\sqrt{KT} + \sqrt{2T\log \bigc{T \cdot M(\cL', \epsilon; \norm{.}_{\infty})}}.
\end{align}
Using this bound, we now give a concrete example of a simple parameterized family $\cL' \subset \cL$ for which $\ucal_{\cL'} = \cO(\sqrt{KT} + \sqrt{T \log T})$. Consider the parameterized class $$\cL' = \{\alpha \ell_{1}(\p, \y) + (1 - \alpha) \ell_{2} (\p, \y)|\alpha \in [0, 1]\},$$ where $\ell_{1}(\p, \y), \ell_{2}(\p, \y) \in \cL$ are two fixed bounded and proper losses. It is straightforward to verify that $\ell_{\alpha}(\p, \y) \coloneqq \alpha \ell_{1}(\p, \y) + (1 - \alpha) \ell_{2} (\p, \y) \in \cL$, therefore $\cL' \subset \cL$.

To obtain an $\epsilon \in (0, 1)$ cover for $\cL'$, we consider the set $\cC \coloneqq \bigcurl{0, \epsilon, \dots, 1 - \epsilon, 1}$ which partitions the interval $[0, 1]$ to $\frac{1}{\epsilon}$ smaller intervals each of length $\epsilon$. For each $\alpha \in [0, 1]$, let $c_{\alpha} \in \cC$ denote the closest point to $\alpha$ (break ties arbitrarily). Clearly, $\abs{\alpha - c_{\alpha}} \le \epsilon$. Next, consider the function $g_{\alpha}(\p, \y) \coloneqq c_{\alpha} \ell_{1}(\p, \y) + (1 - c_{\alpha}) \ell_{2}(\p, \y)$.
The class $\{g_{\alpha}(\p, \y)| \alpha \in [0, 1]\}$ is clearly a $2\epsilon$ cover of $\cL'$ with size $\frac{1}{\epsilon}$. Thus, $M(\cL', \epsilon; \norm{.}_{\infty}) = \cO(\frac{1}{\epsilon})$. It then follows from \eqref{eq:UCal_bound} that \begin{align*}
    \ucal_{\cL'} = \cO\bigc{\epsilon T + \sqrt{KT} + \sqrt{T \log \bigc{\frac{T}{\epsilon}}}} = \cO\bigc{\sqrt{KT} + \sqrt{T\log T}}
\end{align*}
on choosing $\epsilon = \frac{1}{T}$. On the other hand, in subsection \ref{sec:T_lower_bound} we shall argue that for this class with a specific example of $\ell_2$, FTL suffers linear U-calibration error (that is, $\ucal_{\cL'} = \Omega(T)$).

\section{Improved Bounds for Important Sub-Classes}

In this section, we show that it is possible to go beyond the $\Theta(\sqrt{KT})$ U-calibration error for several broad sub-classes of $\cL$ that include important and common proper losses.
These results are achieved by an extremely simple algorithm called
Follow-the-Leader (FTL), which at time $t > 1$ forecasts\footnote{The forecast at time $t=1$ can be arbitrary.} \begin{align}\label{eq:FTL_forecast_y_space}
    \p_{t} = \frac{1}{t - 1}\sum_{s = 1} ^ {t - 1} \y_t \in \argmin_{\p \in \Delta_{K}} \sum_{s = 1} ^ {t - 1} \ell(\p, \y_s),
\end{align} 
that is, the average of the past outcomes.
For notational convenience, we define $\n_{t} = \sum_{s=1}^t \y_s$ so that FTL predicts $\p_{t} = \frac{\n_{t - 1}}{t - 1}$, with $\n_{t-1, i}$ being the count of outcome $i$ before time $t$.

Importantly, since FTL is a deterministic algorithm, 
it's $\pucal$ and $\ucal$ are always trivially the same.
Moreover, there is also no distinction between an oblivious adversary and an adaptive adversary because of this deterministic nature.

\subsection{Proper Lipschitz Losses}\label{sec:Lipschitz}
In this section, we show that $\Theta(\log T)$ is the minimax optimal bound for $\pucal$ and $\ucal$ for Lipschitz proper losses.
Specifically, we consider the following class of $G$-Lipschitz proper losses
\begin{align*}
    \cL_{G} \coloneqq \bigcurl{\ell \in \cL \mid \abs{\ell(\p, \y) - \ell(\p', \y)} \le G\norm{\p - \p'}, \forall \p, \p' \in \Delta_{K}, \y \in \cE}. 
\end{align*}

As discussed in Section~\ref{sec:prelim}, the two common proper losses, squared loss and spherical loss, are both in $\cL_G$ for some $G$.
Note that the class of $\cL_{G}$ is rich since according to Lemma~\ref{lem:characterization_proper_loss} it corresponds to the class of concave univariate forms that are Lipschitz and smooth (see Lemma~\ref{lem:rich_lipschitz_proper}).
We now show that FTL enjoys logarithmic U-calibration error with respect to $\cL_G$.

\begin{theorem}\label{thm:reg_FTL_LG}
    The regret of FTL for learning any $\ell \in \mathcal{L}_{G}$ is at most $2 + 2G \log T$.
    Consequently, FTL ensures $\pucal_{\cL_{G}} = \ucal_{\cL_{G}} = \mathcal{O}(G\log T)$.
\end{theorem}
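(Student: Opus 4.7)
The plan is to prove the per-loss statement first, since the statement about $\pucal_{\cL_G}$ and $\ucal_{\cL_G}$ follows immediately from the fact that FTL is deterministic: once we have $\textsc{Reg}_\ell \leq 2 + 2G\log T$ for every $\ell \in \cL_G$ (with no randomness in either side), we get $\ucal_{\cL_G} = \sup_{\ell \in \cL_G}\textsc{Reg}_\ell \leq 2 + 2G\log T$, and $\pucal_{\cL_G} \leq \ucal_{\cL_G}$ holds trivially.

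For the per-loss regret, I would use the standard Be-the-Leader argument adapted to proper losses. Let $\p^*_t \in \argmin_{\p \in \Delta_K}\sum_{s=1}^{t}\ell(\p,\y_s)$; by property~\eqref{eq:mean_forecast_is_benchmark} we can choose $\p^*_t = \n_t/t$, so FTL's prediction is $\p_t = \p^*_{t-1}$ for $t \geq 2$. A short induction on $T$ yields the Be-the-Leader inequality $\sum_{t=1}^{T}\ell(\p^*_t,\y_t) \leq \sum_{t=1}^{T}\ell(\p^*_T,\y_t)$, which, since $\p^*_T$ achieves the benchmark $\inf_{\p}\sum_t\ell(\p,\y_t)$, gives
\begin{align*}
\textsc{Reg}_\ell \;\leq\; \sum_{t=1}^{T}\bigl[\ell(\p_t,\y_t)-\ell(\p^*_t,\y_t)\bigr] \;=\; \sum_{t=1}^{T}\bigl[\ell(\p_t,\y_t)-\ell(\p_{t+1},\y_t)\bigr].
\end{align*}

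Next I would control each stability term $\ell(\p_t,\y_t)-\ell(\p_{t+1},\y_t)$ via $G$-Lipschitzness, which reduces the task to bounding $\|\p_{t+1}-\p_t\|$. For $t \geq 2$, a direct computation gives
\begin{align*}
\p_{t+1}-\p_t \;=\; \frac{\n_t}{t}-\frac{\n_{t-1}}{t-1} \;=\; \frac{(t-1)\y_t-\n_{t-1}}{t(t-1)},
\end{align*}
and since $\|\y_t\|_2=1$ and $\|\n_{t-1}\|_2 \leq \|\n_{t-1}\|_1 = t-1$, we get $\|\p_{t+1}-\p_t\| \leq 2/t$. The $t=1$ term has to be handled separately because $\p_1$ is arbitrary, but the $[-1,1]$-boundedness of $\ell$ makes that term at most $2$. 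Summing yields $\textsc{Reg}_\ell \leq 2+\sum_{t=2}^{T}2G/t \leq 2+2G\log T$.

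There is no real obstacle here — the argument is a textbook FTL stability analysis once one notices that on the simplex the mean forecaster~\eqref{eq:mean_forecast_is_benchmark} lets us write $\p_t$ and $\p^*_t$ in closed form regardless of which proper loss is used, so a single coupling argument works uniformly for every $\ell \in \cL_G$. The only small subtlety to double-check is the $\|\n_{t-1}\|_2 \leq t-1$ bound, which holds because $\n_{t-1}$ has nonnegative coordinates summing to $t-1$.
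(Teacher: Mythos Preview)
Your proof is correct and follows essentially the same approach as the paper's: both use the Be-the-Leader inequality, bound the $t=1$ term by $2$ via boundedness, apply $G$-Lipschitzness for $t\ge 2$, and control $\|\p_{t+1}-\p_t\|$ by $2/t$ through the identical algebraic manipulation (up to sign) using $\|\n_{t-1}\|_2 \le \|\n_{t-1}\|_1 = t-1$. The deduction of the $\ucal_{\cL_G}$ and $\pucal_{\cL_G}$ bounds from determinism of FTL also matches the paper.
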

\begin{proof}
        Using the standard Be-the-Leader lemma (see e.g.,~\cite[Lemma 1.2]{orabona2019modern}) that says $\sum_{t=1}^T\ell(\p_{t + 1}, \y_{t}) \leq \inf_{\p \in \Delta_K}\sum_{t=1}^T\ell(\p, \y_{t})$, 
        the regret of FTL can be bounded as
        \begin{align*}
        \textsc{Reg}_{\ell} \le 2 + \sum_{t = 2} ^ {T}\ell(\p_{t}, \y_{t}) - \ell(\p_{t + 1}, \y_{t}) \le 2 + G\sum_{t = 2} ^ {T} \norm{\p_{t} - \p_{t + 1}},
    \end{align*}
    where 
    the second inequality is because $\ell \in \cL_{G}$. Next, since $\p_{t} = \frac{\n_{t - 1}}{t - 1}$ and $\p_{t + 1} = \frac{\n_{t}}{t}$, we obtain \begin{align*}
        \textsc{Reg}_{\ell} \le 2 + G\sum_{t = 2} ^ {T} \norm{\frac{\n_{t - 1}}{t - 1} - \frac{\n_{t}}{t}} = 2 + G\sum_{t = 2} ^ {T} \norm{\frac{\n_{t - 1}}{t(t - 1)} - \frac{\y_{t}}{t}} \le 2 + 2G\sum_{t = 2} ^ {T}\frac{1}{t},
    \end{align*}
    where the equality follows since $\n_{t} = \n_{t - 1} + \y_{t}$ and the last inequality follows from the triangle inequality and $\norm{\n_{t - 1}} \le \norm{\n_{t - 1}}_{1} = t - 1$. Finally, since $\sum_{t = 2}^{T}\frac{1}{t} \le \int_{1} ^ {T} \frac{1}{z}dz = \log T$, we obtain $\textsc{Reg}_{\ell} \le 2 + 2G \log T$, which completes the proof.
\end{proof}

A closer look at the proof reveals that global Lipschitzness over the entire simplex $\Delta_K$ is in fact not necessary.
This is because, for example, in the term $\ell(\p_{t}, \e_i) - \ell(\p_{t + 1}, \e_i)$ for some $i\in [K]$, by the definition of FTL the corresponding coordinates $p_{t,i}$ and $p_{t+1, i}$ are almost always at least $1/T$, with only one exception which is when $t$ is the first time we have $\y_t = \e_i$ and which we can ignore since
the regret incurred is at most a constant.
This means that having local Lipschitzness in a certain region is enough; see Lemma~\ref{lem:reg_FTL_locally_Lipschitz} for details.
Note that the loss induced by the Tsallis entropy (mentioned in Section~\ref{sec:prelim}) is exactly one such example where global Lipschitzness does not hold but local Lipschitzness does. 
We defer the concrete discussion of the regret bounds of FTL on this example to Section~\ref{sec:decomposable} (where yet another different analysis is introduced). 
 
In the rest of this subsection, we argue that no algorithm can guarantee regret better than $\Omega(\log T)$ for one particular Lipschitz proper loss, making FTL minimax optimal for this class.

\begin{restatable}{theorem}{logTlb}\label{thm:lb_proper_Lipschitz}
   There exists a proper Lipschitz loss $\ell$ such that: for any algorithm $\alg$, there exists a choice of $\y_{1}, \dots, \y_{T}$ by an oblivious adversary such that the expected regret of $\alg$ is $\Omega(\log T)$.
\end{restatable}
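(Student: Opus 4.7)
The natural candidate for the hard Lipschitz proper loss is the squared loss $\ell(\p,\y) = \frac{1}{2}\|\p-\y\|^2$, which is proper, $2$-Lipschitz, and bounded in $[0,1] \subset [-1,1]$. Since an adversary can restrict to using only $\e_1$ and $\e_2$, it suffices to prove the lower bound for $K = 2$. Writing $q_t := p_{t,1}$ and $z_t := \mathbf{1}\{\y_t = \e_1\}$, a direct expansion gives $\ell(\p_t, \y_t) = (q_t - z_t)^2$, and by property~\eqref{eq:mean_forecast_is_benchmark} the benchmark loss equals $\sum_t (\bar z - z_t)^2 = T\bar z(1 - \bar z)$ where $\bar z = \frac{1}{T}\sum_t z_t$. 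My plan is to lower bound the regret via a Bayesian argument and then invoke the probabilistic method to extract a deterministic oblivious adversary.

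Concretely, I would consider the randomized adversary that first samples $\mu \sim \mathrm{Uniform}([0,1])$ and then draws $z_1,\dots,z_T$ i.i.d.\ $\mathrm{Bernoulli}(\mu)$. For any algorithm producing $q_t$ as a measurable function of $z_{1:t-1}$ and its internal randomness, the tower property gives
\[
\mathbb{E}[(q_t - z_t)^2] \;\ge\; \mathbb{E}[(q_t^\star - z_t)^2], \qquad q_t^\star := \mathbb{E}[z_t \mid z_{1:t-1}],
\]
where the Bayes-optimal predictor $q_t^\star = (n_{t-1}+1)/(t+1)$ is Laplace's rule of succession, with $n_{t-1} = \sum_{s<t} z_s$. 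Hence the Bayesian expected cumulative loss of any algorithm is at least that of this Bayes-optimal predictor, and the $\Omega(\log T)$ lower bound will follow from an exact computation of the latter's Bayesian regret.

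The crux is then the arithmetic. Using the classical fact that, under the uniform prior on $\mu$, the marginal distribution of $n_{t-1}$ is uniform on $\{0,1,\dots,t-1\}$, one computes $\mathbb{E}[(q_t^\star)^2] = \tfrac{2t+1}{6(t+1)}$, and the Bernoulli identity $z_t^2 = z_t$ together with $\mathbb{E}[z_t] = \tfrac{1}{2}$ yields $\mathbb{E}[(q_t^\star - z_t)^2] = \tfrac{1}{6} + \tfrac{1}{6(t+1)}$. Summing over $t$ gives a Bayes-optimal cumulative expected loss of $\tfrac{T}{6} + \Theta(\log T)$. A parallel conditional calculation using $\mathbb{E}[\mu^2] = \tfrac{1}{3}$ shows the benchmark expected loss equals $\mathbb{E}[T\bar z(1-\bar z)] = \tfrac{T}{6} - \tfrac{1}{6}$. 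Subtracting yields a Bayesian expected regret of $\Omega(\log T)$ for every algorithm, and the probabilistic method extracts a particular realization of $(\mu, z_{1:T})$ --- equivalently, a fixed oblivious adversary choice of $\y_{1:T}$ --- against which the algorithm's expected regret (over its own randomness) remains $\Omega(\log T)$.

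The main obstacle I anticipate is the delicate cancellation of the leading $\tfrac{T}{6}$ terms: both the algorithm's expected loss and the benchmark's expected loss grow linearly in $T$, and the $\Omega(\log T)$ signal is only a lower-order correction, so the moment computations must be carried out with care. The two clean ingredients that drive everything --- the marginal uniformity of $n_{t-1}$ under the $\mathrm{Beta}(1,1)$ prior and the Bernoulli identity $z_t^2 = z_t$ --- are both standard, after which the reduction to $K=2$ and the probabilistic-method conclusion are routine.
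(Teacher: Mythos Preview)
Your proposal is correct and takes a genuinely different route from the paper's own proof. The paper proceeds via an exact minimax computation: after reducing to deterministic algorithms (via convexity), it writes the minimax value $\val = \cV_{\bm{0},T}$ as a backward recursion, applies the minimax theorem to rewrite each step as a one-dimensional maximization, proves by induction that the recursion always falls into the ``interior'' case, and then shows that $\cV_{n_1,n_2,r}$ has the special form $\tfrac{(n_1-n_2)^2}{2}u_r - \tfrac{2n_1 n_2}{T} + v_r$ for scalar sequences $\{u_r\},\{v_r\}$ satisfying their own coupled recurrences; finally, two further technical lemmas bound $\sum_r u_r = \Omega(\log T)$. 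This yields the minimax value exactly but requires several inductions and careful bookkeeping.

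Your Bayesian argument is considerably more elementary and classical: it is essentially the universal-prediction lower bound for Bernoulli sequences under the uniform (Jeffreys-type) prior, where Laplace's rule of succession is the Bayes predictor and the Beta--Bernoulli conjugacy makes all moments computable in closed form. The two ingredients you flag (marginal uniformity of $n_{t-1}$ and $z_t^2 = z_t$) are exactly the right ones, and your moment calculations check out, including the delicate cancellation of the $T/6$ terms. The trade-off is that your approach gives only a lower bound on the minimax value (not its exact characterization), and does not identify the worst-case deterministic sequence explicitly; on the other hand, it avoids the paper's three layers of induction and the auxiliary $\{a_r\},\{b_r\}$ recurrences entirely. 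One minor remark: the reduction to $K=2$ is not even needed, since the theorem only asserts existence of \emph{some} proper Lipschitz loss, and the paper itself works directly with $K=2$.
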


The loss we use in this lower bound is simply the squared loss $\ell(\p, \y) = \norm{\p-\y}^2$ with $K=2$.
While squared loss is known to admit $\Theta(\log T)$ regret in other online learning problems such as that from~\citet{abernethy2008optimal}, as far as we know there is no study on our setting where the decision set is the simplex and the adversary has only finite choices.
It turns out that this variation brings significant technical challenges, and our proof is substantially different from that of~\citet{abernethy2008optimal}.
We defer the details to Appendix~\ref{app:proof_lower_bound_squared_loss} and discuss the key steps below.

\paragraph{Step 1:} Since squared loss is convex in $\p$, by standard arguments it suffices to consider deterministic algorithms only (see Lemma~\ref{lem:rand_equal_det}).
Moreover, for deterministic algorithms, there is no difference between an oblivious adversary and an adaptive adversary so that the minimax regret can be written as
\begin{align*}
    \val = \inf_{\p_1 \in \Delta_K}\sup_{\y_1 \in \cE} \cdots \inf_{\p_T \in \Delta_K}\sup_{\y_T \in \cE} \bigs{\sum_{t = 1} ^ {T} \ell(\p_{t}, \y_{t}) - \inf_{\p \in \Delta_{K}} \sum_{t = 1} ^ {T} \ell(\p, \y_{t})}.
\end{align*}

To solve this, further define $\cV_{\n, r}$ recursively as $\cV_{\n, r} = \inf_{\p \in \Delta_{K}} \sup_{\y \in \cE} \cV_{\n + \y, r - 1} + \ell(\p, \y)$ with $\cV_{\n, 0} = -\inf_{\p \in \Delta_{K}} \sum_{i = 1} ^ {K} n_{i} \ell(\p, \e_{i})$, so that $\val$ is simply $\cV_{\bm{0}, T}$.

\paragraph{Step 2:} Using the minimax theorem, we further show that $\cV_{\n, r} = \sup_{\q \in \Delta_{K}} \sum_{i = 1} ^ {K} q_{i} \cV_{\n + \e_{i}, r - 1} + \ell(\q)$ where the univariate form $\ell(\q)$ is $1 - \norm{\q}^2$ (as mentioned in Section~\ref{sec:prelim}).
Recall that we consider only the binary case $K=2$,
so it is straightforward to give an analytical form of the solution to the maximization over $\q \in \Delta_K$.
Specifically, writing $\cV_{\n, r} = \cV_{(n_1, n_2), r}$ as $\cV_{n_1, n_2, r}$ to make notation concise, we show
\begin{align*}
    \cV_{n_{1}, n_{2}, r} = \begin{cases}
        \cV_{2} & \text{\,if\,} \cV_{1} - \cV_{2} < -2, \\
        \frac{(\cV_{1} - \cV_{2}) ^ {2}}{8} + \frac{\cV_{1} + \cV_{2}}{2} + \frac{1}{2} & \text{\,if\,} -2 \le \cV_{1} - \cV_{2} \le 2, \\
        \cV_{1} & \text{\,if\,} \cV_{1} - \cV_{2} > 2,
    \end{cases}
\end{align*}
where $\cV_{1}$ and $\cV_{2}$ are shorthands for $\cV_{n_{1} + 1, n_{2}, r - 1}$ and $\cV_{n_{1}, n_{2} + 1, r - 1}$ respectively. Next, by an induction on $r$ we show that for all valid $n_{1}, n_{2}, r$ it holds that $-2 \le \cV_{1} - \cV_{2} \le 2$ (Lemma~\ref{lem:simplify_recurrence}), therefore $\cV_{n_{1}, n_{2}, r}$ is always equal to $\frac{(\cV_{1} - \cV_{2}) ^ {2}}{8} + \frac{\cV_{1} + \cV_{2}}{2} + \frac{1}{2}$. 

\paragraph{Step 3:} By an induction on $r$ again, we show that $\cV_{n_{1}, n_{2}, r}$ exhibits a special structure of the form \begin{align*}\cV_{n_{1}, n_{2}, r} = \frac{(n_{1} - n_{2}) ^ {2}}{2} \cdot u_{r} - \frac{2n_{1}n_{2}}{T} + v_{r},\end{align*}
where $\{u_{r}\}_{r = 0} ^ {T}$ and $\{v_{r}\}_{r = 0} ^ {T}$ are recursively defined via
$u_{r + 1} = u_{r} + \bigc{u_{r} + \frac{1}{T}} ^ {2}$ and $v_{r + 1} = \frac{u_{r}}{2} + v_{r} + \frac{r + 1}{T} - \frac{1}{2}$ with $u_{0} = v_{0} = 0$
(Lemma~\ref{lem:simplify_recurrence_p_q}). Since $\val = \cV_{0, 0, T} = v_{T}$, it remains to show $v_T = \Omega(\log T)$, which is done via two technical lemmas~\ref{lem:upper_bound_recurrence} and~\ref{lem:lower_bound_recurrence}.

\subsection{Decomposable Losses}\label{sec:decomposable}
Next, we consider another sub-class of proper losses that are not necessarily Lipschitz.
Instead, their univariate form is decomposable over the $K$ outcomes and additionally satisfies a mild regularity condition.
Specifically, we define the following class
\begin{align*}
    \cL_{\dec} \coloneqq \bigcurl{\ell \in \cL \;\Bigg|\; \ell(\p) \propto \sum_{i=1}^K \ell_i(p_i) \text{ where each $\ell_i$ is twice continuously differentiable in $(0,1)$
    }}. 
\end{align*}

Both the squared loss and its generalization via Tsallis entropy discussed in Section~\ref{sec:prelim} are clearly in this class $\cL_\dec$, with the latter being non-Lipschitz when $\alpha \in (1,2)$. The spherical loss, however, is not decomposable and thus not in $\cL_\dec$.
We now show that FTL achieves logarithmic regret against any $\ell \in \cL_\dec$ (see Appendix~\ref{app:regret_FTL_decomposable} for the full proof).

\begin{restatable}{theorem}{regrethessian}\label{thm:regret_FTL_decomposable}
    The regret of FTL for learning any $\ell \in \mathcal{L}_{\dec}$ is at most $2K + (K + 1)\beta_{\ell} (1 + \log T)$ for some universal constant $\beta_{\ell}$ which only depends on $\ell$ and $K$.
    Consequently, FTL ensures $\pucal_{\cL_{\dec}} = \ucal_{\cL_{\dec}} = \mathcal{O}((\sup_{\ell\in \cL_\dec}\beta_\ell)K\log T)$.    
\end{restatable}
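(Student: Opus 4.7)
The plan is to adapt the Be-the-Leader argument used in the proof of Theorem~\ref{thm:reg_FTL_LG}, but replace the crude $G\|\p_t-\p_{t+1}\|$ bound (which would be infinite for non-Lipschitz $\ell$) by a careful second-order Taylor expansion that exploits the decomposable structure. Be-the-Leader gives $\textsc{Reg}_\ell \le 2+\sum_{t=2}^T [\ell(\p_t, \y_t) - \ell(\p_{t+1}, \y_t)]$, and a direct computation from the definition of FTL yields the identity $\p_{t+1}-\p_t = (\y_t - \p_t)/t$. Combined with Lemma~\ref{lem:characterization_proper_loss} and the decomposition $\ell(\p) \propto \sum_i \ell_i(p_i)$, this reduces everything to a sum of per-coordinate, per-round quantities to which one can apply Taylor expansions of $\ell_i$ and $\ell_i'$.

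Expanding $\ell_i$ to second order and $\ell_i'$ to first order about $p_{t+1,i}$ causes the first-order terms in $p_{t,i}-p_{t+1,i}$ to cancel, leaving an identity of the form
$$\ell(\p_t,\y_t) - \ell(\p_{t+1},\y_t) = c_\ell\sum_{i=1}^K (y_{t,i}-p_{t,i})^2\!\left[\tfrac{\ell_i''(\xi_i)}{2t^{2}} - \tfrac{\ell_i''(\eta_i)}{t}\right],$$
where $\xi_i,\eta_i$ lie between $p_{t,i}$ and $p_{t+1,i}$ and $c_\ell$ is a proportionality constant to be absorbed into $\beta_\ell$. Because $\ell_i''$ may blow up as its argument approaches $0$, the rounds at which a coordinate appears for the first time (so $p_{t,i}=0$) must be handled separately; there are at most $K$ such rounds and, using only $\ell \in [-1,1]$, they contribute at most $2K$ to the regret. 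In all other rounds, $p_{t,i}\ge 1/(t-1)$ for every coordinate that contributes non-trivially, so both $\xi_i$ and $\eta_i$ stay in a compact subinterval of $(0,1)$ on which the assumed regularity of $\ell_i$ controls $|\ell_i''|$; the maximal such growth rate (over $\ell$ and $K$) is what will be hidden inside $\beta_\ell$.

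The main obstacle is bounding $\sum_t \sum_i (y_{t,i}-p_{t,i})^2 |\ell_i''(\eta_i)|/t$ by $O(K\log T)$ in this second regime. For the \emph{miss} coordinates $i \ne y_t$ one has $(y_{t,i}-p_{t,i})^2 = p_{t,i}^2$ and $\eta_i \ge p_{t,i}/2$, so the product $p_{t,i}^{2}|\ell_i''(\eta_i)|$ is uniformly controlled (under the growth of $\ell_i''$ encoded by $\beta_\ell$), and summing over $i$ and then $t$ gives $O(\beta_\ell \log T)$. The delicate case is the single \emph{hit} coordinate $j=y_t$: here $(1-p_{t,j})^2$ can be $\Theta(1)$ while $p_{t,j}$ is as small as $1/(t-1)$, which makes $|\ell_j''(\eta_j)|$ potentially large. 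To handle it, I would index these contributions by the appearance times $T_1^{(j)}<T_2^{(j)}<\cdots$ of outcome $\e_j$, use $p_{T_k^{(j)},j}=(k-1)/(T_k^{(j)}-1)$, and exploit the cancellation $(1-p_{T_k^{(j)},j})^2 \cdot |\ell_j''(\eta_j)|/T_k^{(j)} = O(1/(k-1))$ --- the key step where the specific growth of $|\ell_j''|$ near $0$ plays off against the $(1-p)^2$ factor --- to telescope the per-coordinate contribution into $\sum_{k\ge 2} 1/(k-1) = O(\log T)$. Summing over the $K$ coordinates and combining with the $2K$ from the first-occurrence rounds then yields the claimed $2K + (K+1)\beta_\ell(1+\log T)$ bound.
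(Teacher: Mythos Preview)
Your overall strategy is the same as the paper's: Be-the-Leader, peel off first-occurrence rounds (contributing $2K$), and control the remaining increments via second-order information on the decomposable univariate form. Your Taylor identity is correct and is essentially equivalent to the paper's bound $\delta_{t,i}\le \sum_j |\ell_j''(\xi_{t,j})|\,|p_{t,j}-p_{t+1,j}|$ (since $|p_{t,j}-p_{t+1,j}|=|y_{t,j}-p_{t,j}|/t$, your expression just carries an extra harmless factor $|y_{t,j}-p_{t,j}|\le 1$). The paper then invokes Lemma~\ref{lem:hessian_growth}, which shows $|\ell_i''(p)|\le c_i\max(1/p,\,1/(1-p))$, and bounds the four ratios $|p_{t,j}-p_{t+1,j}|/\xi_{t,j}$ and $|p_{t,j}-p_{t+1,j}|/(1-\xi_{t,j})$ for hit and miss coordinates separately.

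There is a genuine gap in your miss-coordinate step. You argue only that $\eta_i\ge p_{t,i}/2$ and conclude ``$p_{t,i}^{2}|\ell_i''(\eta_i)|$ is uniformly controlled.'' This works when $|\ell_i''|$ blows up only near $0$, but $\cL_{\dec}$ also contains losses whose second derivative blows up near $1$ (e.g.\ $\ell_i(p)=-(1-p)^\alpha$ with $\alpha\in(1,2)$; Lemma~\ref{lem:hessian_growth} allows rate $1/(1-p)$). For a miss coordinate with $p_{t,i}=1-1/(t-1)$ (which occurs whenever almost all past outcomes equal $\e_i$), one has $p_{t,i}^2|\ell_i''(\eta_i)|=\Theta(t)$, so your ``uniformly controlled, hence $\sum_t O(1)/t=O(\log T)$'' conclusion fails. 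Relatedly, your claim that $\xi_i,\eta_i$ ``stay in a compact subinterval of $(0,1)$'' is not justified: you only establish a lower bound $p_{t,i}\ge 1/(t-1)$, never an upper bound away from $1$. The paper's $\cT_4$ computation is exactly what rescues this case: it shows $|p_{t+1,j}-p_{t,j}|/(1-\xi_{t,j})\le 1/n_{t-1,i}$ (with $i$ the hit coordinate), and in your notation the analogous bound is $p_{t,j}^2|\ell_j''(\eta_j)|/t\le c_j/n_{t-1,i}$, which then sums to $O(K\log T)$ over the appearances of each $i$. So your route is salvageable, but as written it handles only half of the boundary behavior.

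A smaller point: you never state the growth bound on $|\ell_i''|$ that you are implicitly using. This is the content of Lemma~\ref{lem:hessian_growth} and is not immediate from ``twice continuously differentiable on $(0,1)$'' alone; it requires the concavity, boundedness, and Lipschitzness of $\ell_i$ that come from $\ell\in\cL$.
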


\begin{proof}[Proof Sketch]
We start by showing a certain controlled growth rate of the second derivative of the univariate form (see Appendix~\ref{app:hessian_growth} for the proof).
\begin{restatable}{lemma}{hessiangrowth}\label{lem:hessian_growth}
    For a function $f$ that is concave, Lipschitz, and bounded over $[0, 1]$ and twice continuously differentiable over $(0,1)$, there exists a constant $c > 0$ such that $|f''(p)| \le c \cdot \max\bigc{\frac{1}{p}, \frac{1}{1 - p}}$ for all $p \in (0, 1)$.
\end{restatable}
Note that according to Lemma~\ref{lem:characterization_proper_loss}, each $\ell_i$ must be concave, Lipschitz, and bounded, for the induced loss to be proper and bounded.
Therefore, using Lemma~\ref{lem:hessian_growth}, there exists a constant $c_i > 0$ such that $\abs{\ell_{i}''(p)} \le c_{i} \max\bigc{\frac{1}{p}, \frac{1}{1 - p}}$ for each $i$.
The rest of the proof in fact only relies on this property;
in other words, the regret bound holds even if one replaces the twice continuous differentiability condition with this (weaker) property.

More specifically, for each $i \in [K]$, let $\cT_{i} \coloneqq \{t_{i, 1}, \dots, t_{i, k_{i}}\} \subset [T]$ be the subset of rounds where the true outcome is $i$ (which could be empty).
Then, using the Be-the-Leader lemma again and trivially bounding the regret by its maximum value for the (at most $K$) rounds when an outcome appears for the first time, we obtain
 \begin{align*}
    \textsc{Reg}_\ell \le 2K + \sum_{i = 1} ^ {K}\sum_{t \in \cT_{i} \backslash\{t_{i, 1}\}} \underbrace{\ell(\p_{t}, \e_{i}) - \ell(\p_{t + 1}, \e_{i})}_{\delta_{t, i}}.
\end{align*}
By using the characterization result in Lemma \ref{lem:characterization_proper_loss}, we then express $\ell(\p_{t}, \e_{i})$ and $\ell(\p_{t + 1}, \e_{i})$ in terms of the univariate forms $\ell(\p_{t}), \ell(\p_{t + 1})$, and their respective gradients $\nabla \ell(\p_{t}), \nabla \ell(\p_{t + 1})$. Next, using the concavity of $\ell_{i}$, the Mean Value Theorem, and Lemma~\ref{lem:hessian_growth}, we argue that \begin{align}
    \delta_{t, i} &\le \sum_{j=1}^K \beta_{\ell, j} \cdot \abs{p_{t + 1, j} - p_{t, j}} \cdot \max\bigc{\frac{1}{\xi_{t, j}}, \frac{1}{1 - \xi_{t, j}}},\label{eq:bound_delta_t_i}
\end{align}
for some $\bi{\xi}_{t}$ that is a convex combination of $\p_{t}$ and $\p_{t + 1}$, and constant $\beta_{\ell, i} = \tilde{c}_K \cdot c_i$ ($\tilde{c}_K$ is the scaling constant such that $\ell(\p) = \tilde{c}_K\sum_{i=1}^K \ell_i(p_i)$). 
To bound \eqref{eq:bound_delta_t_i}, we consider the terms $\frac{|p_{t + 1, j} - p_{t, j}|}{\xi_{t, j}}$ and $\frac{|p_{t + 1, j} - p_{t, j}|}{1 - \xi_{t, j}}$ individually and find that they are always bounded by either $\frac{1}{n_{t - 1, i}}$ or $\frac{1}{t - 1}$ according to the update rule of FTL. Thus, we obtain \begin{align*}\textsc{Reg}_{\ell} \le 2K + \beta_{\ell}(\cS_{1} + \cS_{2}), \text{ where } \cS_{1} = \sum_{i = 1} ^ {K}\sum_{t \in \cT_{i}\backslash\{t_{i, 1}\}} \frac{1}{n_{t - 1, i}}, \cS_{2} = \sum_{i = 1} ^ {K}\sum_{t \in \cT_{i}\backslash\{t_{i, 1}\}} \frac{1}{t - 1},\end{align*} and $\beta_{\ell} = \sum_{i=1}^K \beta_{\ell, i}$. 
Finally, direct calculation shows $\cS_{1} \leq K(1 + \log T)$ and $\cS_{2} \leq 1+\log T$, which finishes the proof.
\end{proof}

To showcase the usefulness of this result, we go back to the Tsallis entropy example.

\begin{corollary}
For any loss $\ell$ with univariate form $\ell(\p) = -\tilde{c}_{K} \sum_{i = 1} ^ {K} p_{i} ^ {\alpha}$ for $\alpha \in (1,2)$ (the constant $\tilde{c}_{K}$ is such that the loss has range $[-1,1]$), FTL ensures $\textsc{Reg}_\ell = \cO(\tilde{c}_{K} \alpha(\alpha-1)  K^2\log T)$.
\end{corollary}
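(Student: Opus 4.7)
The plan is to invoke Theorem~\ref{thm:regret_FTL_decomposable} directly, so the only real work is to (i) verify membership of $\ell$ in $\cL_{\dec}$, and (ii) explicitly compute the constant $\beta_\ell$ that appears in that theorem's bound.

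First I would observe that the given univariate form is already in decomposable shape: $\ell(\p) = \tilde{c}_K \sum_{i=1}^K \ell_i(p_i)$ with $\ell_i(p) = -p^\alpha$, which is twice continuously differentiable on $(0,1)$ for any $\alpha \in (1,2)$. So $\ell \in \cL_{\dec}$ and Theorem~\ref{thm:regret_FTL_decomposable} applies.

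The main computation is then to pin down $\beta_\ell$. Following the proof sketch, $\beta_{\ell,i} = \tilde{c}_K \cdot c_i$ where $c_i$ is a constant satisfying $|\ell_i''(p)| \le c_i \max(1/p, 1/(1-p))$ on $(0,1)$, as guaranteed by Lemma~\ref{lem:hessian_growth}. Differentiating twice gives $|\ell_i''(p)| = \alpha(\alpha-1)\, p^{\alpha-2}$. Since $\alpha \in (1,2)$, for any $p \in (0,1]$ we have $p^{\alpha-1} \le 1$, hence $p^{\alpha-2} \le 1/p \le \max(1/p, 1/(1-p))$. Thus one may take $c_i = \alpha(\alpha-1)$ for every $i \in [K]$, so $\beta_{\ell,i} = \tilde{c}_K \alpha(\alpha-1)$ and $\beta_\ell = \sum_{i=1}^K \beta_{\ell,i} = K\tilde{c}_K \alpha(\alpha-1)$.

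Plugging this into the bound of Theorem~\ref{thm:regret_FTL_decomposable} yields
\[
\textsc{Reg}_\ell \;\le\; 2K + (K+1)\,\beta_\ell\,(1 + \log T) \;=\; 2K + (K+1)K\,\tilde{c}_K \alpha(\alpha-1)(1 + \log T),
\]
which is $\cO\bigc{\tilde{c}_K \alpha(\alpha-1) K^2 \log T}$, finishing the proof. No step is really an obstacle here — everything reduces to the elementary derivative bound $p^{\alpha-2} \le 1/p$ on $(0,1]$, which is precisely what makes $-p^\alpha$ fit within the scope of the decomposable analysis despite its failure to be globally Lipschitz for $\alpha \in (1,2)$.
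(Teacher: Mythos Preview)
Your proposal is correct and follows exactly the paper's approach: the paper simply asserts that one can take $c=\alpha(\alpha-1)$ in Lemma~\ref{lem:hessian_growth} and hence $\beta_\ell = K\tilde{c}_K\alpha(\alpha-1)$, and you supply the one-line verification $p^{\alpha-2}\le 1/p$ that justifies this choice. Nothing further is needed.
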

\begin{proof}
As mentioned, our proof of Theorem~\ref{thm:regret_FTL_decomposable} only relies on Lemma~\ref{lem:hessian_growth}, and it is straightforward to verify that for the loss considered here, one can take the constant $c$ in Lemma~\ref{lem:hessian_growth} to be $\alpha(\alpha-1)$, and thus the regret of FTL is $\cO(K\beta_\ell \log T)$ with $\beta_\ell = K\tilde{c}_{K}\alpha(\alpha-1)$.
\end{proof}
On the other hand, if one were to use the proof based on local Lipschitzness (mentioned in Section~\ref{sec:Lipschitz} and discussed in Appendix~\ref{app:reg_FTL_locally_Lipschitz}), one would only obtain a regret bound of order $\mathcal{O}(K + \tilde{c}_K\alpha(\alpha-1)T^{2 - \alpha}\log T)$, which is much worse (especially for small $\alpha$).
Finally, we remark that for $\alpha \geq 2$, the bivariate form is Lipschitz, and thus FTL also ensures logarithmic regret according to Theorem~\ref{thm:reg_FTL_LG}.

\subsection{FTL Cannot Handle General Proper Losses}\label{sec:T_lower_bound}

Despite yielding improved regret for Lipschitz and other special classes of proper losses, unfortunately, FTL is not a  good algorithm in general when dealing with proper losses, as shown below. 

\begin{theorem}\label{thm:lb_FTL}
    There exists a proper loss $\ell$ and a choice of $\y_{1}, \dots, \y_{T}$ by an oblivious adversary such that the regret $\textsc{Reg}_\ell$ of FTL is $\Omega(T)$.
\end{theorem}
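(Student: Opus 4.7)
I would prove the theorem already in the binary case $K=2$ with a V-shaped proper loss, reusing the idea that FTL deterministically chases whichever side of the kink the adversary has been feeding it. Concretely, take the univariate form $\ell(\p) = -\tfrac{1}{2}|p_{1} - \tfrac{1}{2}|$, which is concave, so Lemma~\ref{lem:characterization_proper_loss} produces a proper bivariate loss. Choosing the subgradient $\g_{\p} = (-\tfrac{1}{2}\,\mathrm{sgn}(p_{1}-\tfrac{1}{2}), 0)$ when $p_{1}\ne 1/2$ and $\g_{\p}=\mathbf{0}$ at $p_{1}=1/2$ yields
\begin{align*}
\ell(\p, \e_{1}) \;=\; -\tfrac{1}{4}\,\mathrm{sgn}(p_{1}-\tfrac{1}{2}), \qquad \ell(\p, \e_{2}) \;=\; \tfrac{1}{4}\,\mathrm{sgn}(p_{1}-\tfrac{1}{2}),
\end{align*}
which takes values in $[-1/4,1/4]\subset[-1,1]$, so $\ell\in\cL$. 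The essential feature is that $\ell$ depends only on which side of $1/2$ the prediction $p_{1}$ lies.

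Since FTL is deterministic, an oblivious adversary can simulate its predictions and tailor the outcomes to attack it. I would set $\y_{1}=\e_{1}$ and then recursively let $\y_{t}=\e_{2}$ when $p_{t,1}>1/2$, $\y_{t}=\e_{1}$ when $p_{t,1}<1/2$, and $\y_{t}=\e_{1}$ at the tie $p_{t,1}=1/2$. A short induction on $\n_{t}$ then shows that for every even $t$ we have $n_{t-1,1}=n_{t-1,2}+1$ (so $p_{t,1}>1/2$) and for every odd $t\ge 3$ we have $n_{t-1,1}=n_{t-1,2}$ (so $p_{t,1}=1/2$). Consequently the adversary plays $\e_{2}$ at every even round, where FTL incurs loss $+1/4$, and $\e_{1}$ at every odd round, where FTL incurs loss $0$. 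Summing over $t$ gives FTL total loss at least $T/8 - \cO(1)$.

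For the benchmark, the construction keeps $|n_{T,1}-n_{T,2}|\le 1$, so the competitor $\p^{*}=(\tfrac{1}{2},\tfrac{1}{2})$ is feasible and incurs per-round loss $\ell(\p^{*},\e_{i})=0$ thanks to the subgradient choice $\g_{\p^{*}}=\mathbf{0}$, giving cumulative benchmark loss at most $0$. Combining the two bounds gives
\begin{align*}
\textsc{Reg}_{\ell} \;\ge\; \sum_{t=1}^{T}\ell(\p_{t},\y_{t}) - \sum_{t=1}^{T}\ell(\p^{*},\y_{t}) \;\ge\; T/8 - \cO(1) \;=\; \Omega(T).
\end{align*}

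The main subtlety I anticipate is the handling of ties at $p_{1}=1/2$: the chosen subgradient there makes $\ell$ vanish, so half the rounds contribute nothing to FTL's loss. This is not a problem because the even rounds alone already supply $\Theta(T)$ loss, but to avoid any concern one can alternatively perturb the initial segment (e.g.\ start with $\y_{1}=\y_{2}=\e_{1}$) so that $p_{t,1}$ stays strictly bounded away from $1/2$ for all $t\ge 3$, in which case \emph{every} subsequent round contributes loss $1/4$ and the benchmark $\p^{*}=(1/2,1/2)$ still pays $0$. The conceptual content is that the V-shaped kink makes FTL's deterministic mean-tracking behavior pathological: the learner is always on the side of $1/2$ opposite to the next outcome, while the constant forecast $(1/2,1/2)$ is flawless.
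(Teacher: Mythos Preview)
Your proof is correct and follows essentially the same approach as the paper: both use the binary V-shaped loss (the paper takes $\ell(\p)=-\tfrac12(|p_1-\tfrac12|+|p_2-\tfrac12|)$, which on $\Delta_2$ is just twice your univariate form) together with the alternating outcome sequence $\e_1,\e_2,\e_1,\e_2,\ldots$, which is precisely what your adaptive-but-simulable rule produces. One minor quibble: your optional alternative of starting with $\y_1=\y_2=\e_1$ does \emph{not} keep $p_{t,1}$ strictly bounded away from $1/2$ (it returns to $1/2$ at $t=5,7,\ldots$), but this aside is unnecessary since your main argument already yields $\textsc{Reg}_\ell\ge T/8-\cO(1)$.
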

\begin{proof}
    The loss we consider is in fact the same V-shaped loss  used in the proof of Theorem~\ref{thm:lower_bound_pseudo_UCal} that shows all algorithms must suffer $\Omega(\sqrt{KT})$ regret.
    Here, we show that FTL even suffers linear regret for this loss.
    Specifically, it suffices to consider the binary case $K=2$ and the univariate form $\ell(\p) = -\frac{1}{2}\bigc{\abs{p_{1} - \frac{1}{2}} + \abs{p_{2} - \frac{1}{2}}}$.
   Using Lemma \ref{lem:characterization_proper_loss}, we obtain the following bivariate form:
   \begin{align*}
    \ell(\p, \e_{1}) = -\frac{1}{2}\text{sign}\bigc{p_{1} - \frac{1}{2}}, \quad \ell(\p, \e_{2}) = -\frac{1}{2}\text{sign}\bigc{p_{2} - \frac{1}{2}},
    \end{align*}
    where the sign function is defined as $\text{sign}(x) = 1$ if $x > 0$; $-1$ if $x < 0$; $0$ if $x = 0$. Therefore, $\ell(\p, \e_{1})$ is equal to $\frac{1}{2}$ if $p_{1} < \frac{1}{2}$; $0$ if $p_{1} = \frac{1}{2}$; $-\frac{1}{2}$ if $p_{1} \ge \frac{1}{2}$. Similarly, $\ell(\p, \e_{2})$ is equal to $-\frac{1}{2}$ if $p_{1} \le \frac{1}{2}$; $0$ if $p_{1} = \frac{1}{2}$; $\frac{1}{2}$ if $p_{1} \ge \frac{1}{2}$. Let $T$ be even and $\y_{t} = \e_{1}$ if $t$ is odd, and $\e_{2}$ otherwise. For such a sequence $\y_{1}, \dots, \y_{T}$, the benchmark selects $\bbeta = \frac{1}{T}\sum_{t = 1} ^ {T} \y_{t} = [\frac{1}{2}, \frac{1}{2}]$ and incurs $0$ cost. On the other hand, FTL chooses $\p_{t} = [\frac{1}{2}, \frac{1}{2}]$ when $t$ is odd, and $\p_{t} = \bigs{\frac{t}{2(t - 1)}, \frac{t - 2}{2(t - 1)}}$ otherwise. Thus, the regret of FTL is $\textsc{Reg}_{\ell} = \sum_{t = 2} ^ {T} \ell(\p_{t}, \e_{2}) = \frac{T}{4}$. This completes the proof.
\end{proof}

Consider the parametrized class in subsection \ref{subsec:PUCal_to_UCal}, let $K = 2$, $\ell_{2}(\p, \y)$ correspond to the V-shaped loss in Theorem \ref{thm:lb_FTL}, and consider any $\ell_{1}(\p, \y) \in \cL$. It follows from Theorem \ref{thm:lb_FTL} that $\textsc{Reg}_{\ell_{\alpha}} = \Omega(T)$ when $\alpha = 0$, therefore $\ucal_{\cL'} = \sup_{\alpha \in [0, 1]} \textsc{Reg}_{\ell_{\alpha}} = \Omega(T)$, whereas Algorithm \ref{alg:FTPL_geometric} ensures $\ucal_{\cL'} = \cO(\sqrt{T\log T})$.
\section{Conclusion and Future Directions}\label{sec:conclusion}
In this paper, we give complete answers to various questions regarding the minimax optimal bounds on multiclass U-calibration error, a notion of simultaneous loss minimization proposed by~\citep{kleinberg2023u} for the fundamental problem of making online forecasts on unknown outcomes.
We not only improve their $\pucal=\cO(K\sqrt{T})$ upper bound and show that the minimax pseudo U-calibration error is $\Theta(\sqrt{KT})$, but also further show that logarithmic U-calibration error can be achieved by an extremely simple algorithm for several important classes of proper losses.

There are many interesting future directions, including 1) understanding the optimal bound on the actual U-calibration error $\ucal$,
2) generalizing the results to losses that are not necessarily proper,
and 3) studying the contextual case and developing more efficient algorithms with better bounds compared to those in the recent work of~\citet{garg2024oracle}.
\section*{Acknowledgements}
We thank mathoverflow user \href{https://mathoverflow.net/users/129185/mathworker21}{\textsf{mathworker21}} for their help in the proof of Lemma \ref{lem:lower_bound_recurrence}.

\bibliographystyle{apalike}
\bibliography{references}

\newpage
\appendix
\section{Concentration and Anti-Concentration Inequalities}\label{app:concentration_anti_concentration}
\begin{newlemma}[Markov's inequality]\label{lem:markov}
    For a non-negative random variable $X$, and any $a > 0$, we have
        $\mathbb{P}(X \ge a) \le \frac{\mathbb{E}[X]}{a}$.
\end{newlemma}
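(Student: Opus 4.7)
The plan is to use the standard pointwise-then-expectation argument that underlies Markov's inequality. First I would establish the key pointwise bound $a \cdot \mathbf{1}\{X \ge a\} \le X$, which holds for every sample point: on the event $\{X \ge a\}$ the indicator equals $1$ and the inequality reduces to $a \le X$, which is the defining condition of the event; on the complementary event $\{X < a\}$ the indicator equals $0$ and the right side is non-negative by the hypothesis on $X$. This is the only step where non-negativity of $X$ is used, and it is essential, since otherwise the right side could be strictly negative on $\{X < a\}$ while the left side equals $0$.

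Next I would take expectations of both sides. By linearity and the fact that $\mathbb{E}[\mathbf{1}\{X \ge a\}] = \mathbb{P}(X \ge a)$, the left side becomes $a \cdot \mathbb{P}(X \ge a)$, yielding $a \cdot \mathbb{P}(X \ge a) \le \mathbb{E}[X]$. Since $a > 0$ by hypothesis, dividing through gives the stated bound $\mathbb{P}(X \ge a) \le \mathbb{E}[X]/a$.

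There is no substantive obstacle here beyond taking care that both hypotheses (non-negativity of $X$ and strict positivity of $a$) are invoked at the correct step. An equally short alternative would be to use the tail formula $\mathbb{E}[X] = \int_0^\infty \mathbb{P}(X > t)\, dt$ valid for non-negative $X$: monotonicity of $t \mapsto \mathbb{P}(X > t)$ on $[0,a]$ yields $\mathbb{E}[X] \ge \int_0^a \mathbb{P}(X \ge a)\, dt = a \cdot \mathbb{P}(X \ge a)$. I would prefer the indicator-based argument since it avoids measure-theoretic machinery and makes the role of each hypothesis completely transparent in one line.
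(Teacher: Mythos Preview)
Your argument is correct and complete; the indicator bound $a\cdot\mathbf{1}\{X\ge a\}\le X$ followed by taking expectations is the standard textbook proof, and you have identified exactly where each hypothesis is used. The paper itself does not supply a proof of this lemma---it is simply stated as a known result in the appendix of auxiliary concentration inequalities---so there is nothing to compare against.
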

\begin{newlemma}[Khintchine's inequality]\label{lem:Khintchine}
    Let $\epsilon_{1}, \dots, \epsilon_{T}$ be i.i.d. Rademacher random variables, i.e., $\mathbb{P}(\epsilon_{i} = +1) = \mathbb{P}(\epsilon_{i} = -1) = \frac{1}{2}$ for all $i \in [T]$. Then, \begin{align*}
        {\mathbb{E}\abs{{\sum_{i = 1} ^ {T} \epsilon_{i} x_{i}}}} \ge \frac{1}{\sqrt{2}}\bigc{\sum_{i = 1} ^ {T} x_{i} ^ {2}}^{\frac{1}{2}}
    \end{align*}
    for any $x_{1}, \dots, x_{T} \in \mathbb{R}$.
\end{newlemma}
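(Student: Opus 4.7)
The plan is to apply H\"older's inequality to interpolate between $\mathbb{E}|S|$ and the higher moments of $S \coloneqq \sum_{i=1}^T \epsilon_i x_i$, and then to compute those moments explicitly using the orthogonality of the Rademacher variables. Specifically, I would pick H\"older exponents $p = 3/2$ and $q = 3$ (so $\tfrac{1}{p}+\tfrac{1}{q}=1$) and apply H\"older to the factorization $S^2 = |S|^{2/3}\cdot|S|^{4/3}$ to obtain
\begin{align*}
\mathbb{E}[S^2] \;\leq\; (\mathbb{E}|S|)^{2/3}\,(\mathbb{E}|S|^4)^{1/3},
\end{align*}
which rearranges to $\mathbb{E}|S| \geq (\mathbb{E}[S^2])^{3/2}/(\mathbb{E}[S^4])^{1/2}$.

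\textbf{Computing the moments.} For the second moment, $\mathbb{E}[\epsilon_i\epsilon_j] = \ind\{i=j\}$ immediately yields $\mathbb{E}[S^2] = \sum_i x_i^2$. For the fourth moment, I would expand $\mathbb{E}[S^4] = \sum_{i,j,k,l} x_i x_j x_k x_l\,\mathbb{E}[\epsilon_i\epsilon_j\epsilon_k\epsilon_l]$; the expectation is $1$ precisely when the multiset $\{i,j,k,l\}$ pairs up (all four indices equal, or two pairs of distinct indices with $\binom{4}{2}=6$ position assignments) and $0$ otherwise. Collecting terms gives $\mathbb{E}[S^4] = \sum_i x_i^4 + 3\sum_{i\ne j} x_i^2 x_j^2 \leq 3\bigc{\sum_i x_i^2}^2$, which plugged into the H\"older bound yields $\mathbb{E}|S| \geq \bigc{\sum_i x_i^2}^{1/2}/\sqrt{3}$.

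\textbf{Main obstacle: sharpening to $1/\sqrt{2}$.} The difficulty is that this direct fourth-moment route falls short of the claimed sharp constant $1/\sqrt{2}$ by a factor of $\sqrt{2/3}$; the stronger bound $\mathbb{E}[S^4] \leq 2(\mathbb{E}[S^2])^2$ one would need in its place already fails at $T=3$ with equal coordinates. To recover the sharp constant, I would first locate the extremal configuration by direct computation at $T=2$: one has $\mathbb{E}|\epsilon_1 x_1 + \epsilon_2 x_2| = \tfrac{1}{2}(|x_1+x_2|+|x_1-x_2|) = \max(|x_1|,|x_2|)$, and the ratio $\max(|x_1|,|x_2|)/\sqrt{x_1^2+x_2^2}$ is minimized at $x_1 = x_2$ with value $1/\sqrt{2}$. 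Establishing that this $T=2$ minimum remains global for all $T$ is precisely the content of Szarek's sharp Khintchine inequality (1976), which I would invoke as a black box; Haagerup's alternative proof rewrites $\mathbb{E}|S| = \tfrac{2}{\pi}\int_0^\infty t^{-2}\bigc{1-\prod_i \cos(tx_i)}\,dt$ via the Fourier identity $|s| = \tfrac{2}{\pi}\int_0^\infty t^{-2}(1-\cos ts)\,dt$ and minimizes the right-hand side under $\sum_i x_i^2 = 1$. I note that in this paper the inequality is only used to extract a $\sqrt{KT}$-order lower bound (Theorem~\ref{thm:lower_bound_pseudo_UCal}), so the weaker self-contained bound with constant $1/\sqrt{3}$ obtained above would in fact suffice for that application without invoking Szarek at all.
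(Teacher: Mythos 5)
The paper offers no proof of this lemma at all --- it is stated as a classical black-box fact in the appendix --- so there is nothing to compare against line by line; the question is only whether your route is sound. It is. The H\"older step with exponents $(3/2,3)$ applied to $S^2=|S|^{2/3}|S|^{4/3}$ correctly yields $\mathbb{E}|S|\ge(\mathbb{E}[S^2])^{3/2}/(\mathbb{E}[S^4])^{1/2}$, your moment computations $\mathbb{E}[S^2]=\sum_i x_i^2$ and $\mathbb{E}[S^4]=\sum_i x_i^4+3\sum_{i\ne j}x_i^2x_j^2\le 3(\sum_i x_i^2)^2$ are right, and so is your check that $\mathbb{E}[S^4]\le 2(\mathbb{E}[S^2])^2$ fails at $T=3$ with equal coordinates ($21>18$), which correctly rules out upgrading the moment method to the stated constant. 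You are also right that, taken literally as a proof of the lemma with constant $1/\sqrt{2}$, your self-contained argument falls short by a factor $\sqrt{2/3}$ and the sharp constant genuinely requires Szarek's theorem (or Haagerup's integral representation), which you invoke as a black box --- no worse than what the paper does by citing the inequality without proof. Your closing observation is the most useful part: the lemma is used only in the $K=2$ remark after Theorem~\ref{thm:lower_bound_pseudo_UCal} to produce an $\Omega(\sqrt{T})$ bound, where replacing $1/\sqrt{2}$ by $1/\sqrt{3}$ merely changes $\sqrt{T/8}$ to $\sqrt{T/12}$, so your elementary moment argument would make that part of the paper fully self-contained at the cost of an immaterial constant.
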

\begin{newlemma}[Hoeffding's inequality]\label{lem:Hoeffding}
    Let $X_{1}, \dots, X_{T}$ be independent random variables satisfying $a_{i} \le X_{i} \le b_{i}$ for all $i \in [T]$. Then, for any $\epsilon > 0$, we have \begin{align*}
        & \mathbb{P}\bigc{\sum_{t = 1} ^ {T} X_{T} - \mathbb{E}[X_{t}] \ge \epsilon} \le \exp\bigc{-\frac{2\epsilon ^ {2}}{\sum_{i = 1} ^ {T} (b_{i} - a_{i}) ^ {2}}}, \text{and}\\
        & \mathbb{P}\bigc{\sum_{t = 1} ^ {T} X_{T} - \mathbb{E}[X_{t}] \le -\epsilon} \le \exp\bigc{-\frac{2\epsilon ^ {2}}{\sum_{i = 1} ^ {T} (b_{i} - a_{i}) ^ {2}}}. \\
    \end{align*}
\end{newlemma}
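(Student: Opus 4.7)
The plan is to carry out the classical Chernoff bounding approach and reduce the problem to a moment-generating-function bound on a bounded centered random variable (Hoeffding's lemma). I will focus on the upper tail; the lower tail follows by applying the same argument to $-X_t$, which preserves both independence and the interval width $b_t - a_t$. Fixing any $\lambda > 0$ and denoting $S = \sum_{t=1}^T (X_t - \mathbb{E}[X_t])$, Markov's inequality (Lemma~\ref{lem:markov}) applied to the non-negative random variable $e^{\lambda S}$ yields
\begin{align*}
\mathbb{P}(S \ge \epsilon) = \mathbb{P}\left(e^{\lambda S} \ge e^{\lambda \epsilon}\right) \le e^{-\lambda \epsilon}\, \mathbb{E}\left[e^{\lambda S}\right] = e^{-\lambda \epsilon} \prod_{t=1}^T \mathbb{E}\left[e^{\lambda(X_t - \mathbb{E}[X_t])}\right],
\end{align*}
where the final equality uses independence of the $X_t$ to factorize the joint MGF.

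The heart of the argument is to establish Hoeffding's lemma: for any random variable $Y$ with $\mathbb{E}[Y] = 0$ and $a \le Y \le b$, the bound $\mathbb{E}[e^{\lambda Y}] \le \exp\left(\lambda^2 (b-a)^2/8\right)$ holds. I would prove this by first exploiting the convexity of $y \mapsto e^{\lambda y}$ on $[a,b]$ to write $e^{\lambda Y} \le \frac{b - Y}{b - a} e^{\lambda a} + \frac{Y - a}{b - a} e^{\lambda b}$ pointwise, and then taking expectations with $\mathbb{E}[Y] = 0$ to obtain $\mathbb{E}[e^{\lambda Y}] \le e^{\varphi(u)}$ with $u = \lambda(b - a)$, $\theta = -a/(b - a) \in [0,1]$, and $\varphi(u) = -\theta u + \log(1 - \theta + \theta e^u)$. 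A Taylor expansion of $\varphi$ around $u = 0$ with $\varphi(0) = \varphi'(0) = 0$, together with the uniform bound $\varphi''(u) \le 1/4$ (which follows from the AM-GM-type observation that $\tfrac{\theta(1-\theta)e^u}{(1-\theta + \theta e^u)^2} \le \tfrac14$), gives $\varphi(u) \le u^2/8$ as desired.

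Applying Hoeffding's lemma to each $Y_t = X_t - \mathbb{E}[X_t]$ (whose support is contained in an interval of width $b_t - a_t$) and substituting into the Chernoff bound produces
\begin{align*}
\mathbb{P}(S \ge \epsilon) \le \exp\left(-\lambda \epsilon + \frac{\lambda^2}{8} \sum_{t=1}^T (b_t - a_t)^2\right).
\end{align*}
Optimizing the exponent over $\lambda > 0$ by setting its derivative to zero gives $\lambda^\star = 4\epsilon / \sum_{t=1}^T (b_t - a_t)^2$; plugging this back produces the stated $\exp\left(-2\epsilon^2 / \sum_{t=1}^T (b_t - a_t)^2\right)$ bound. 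The main obstacle is securing the sharp constant $1/8$ in Hoeffding's lemma: a looser estimate on $\varphi''$ would yield a strictly weaker tail bound, so the AM-GM step for $\theta(1-\theta)e^u/(1-\theta+\theta e^u)^2$ must be handled carefully, but apart from this everything reduces to elementary calculus.
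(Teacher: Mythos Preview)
Your argument is correct and is exactly the standard textbook derivation of Hoeffding's inequality via the Chernoff method and Hoeffding's lemma. The paper itself does not prove Lemma~\ref{lem:Hoeffding}; it is listed among well-known concentration/anti-concentration inequalities in Appendix~\ref{app:concentration_anti_concentration} and simply cited as a tool, so there is no alternative approach to compare against.
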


\begin{newlemma}[Reverse Chernoff bounds]\label{lem:anti_concentration}\cite[Lemma 5.2]{klein1999number}
    Let $\bar{X} \coloneqq \frac{1}{T}\sum_{i = 1} ^ {T} X_{i}$ be the average of $T$ i.i.d. Bernoulli random variables $X_{1}, \dots, X_{T}$ with $\mathbb{E}[X_{i}] = p$ for all $i \in [T]$. If $\epsilon \in (0, \frac{1}{2}], p \in (0, \frac{1}{2}]$ are such that $\epsilon ^ {2} pT \ge 3$, then \begin{align*}
        \mathbb{P}(\bar{X} \le (1 - \epsilon)p) \ge \exp(-9\epsilon^{2}pT), \quad \mathbb{P}(\bar{X} \ge (1 + \epsilon)p) \ge \exp(-9\epsilon^{2}pT).
    \end{align*}
\end{newlemma}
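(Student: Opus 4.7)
The plan is to derive each of the two anti-concentration bounds by lower-bounding the corresponding tail probability by a single dominant binomial point probability, then estimating that point probability via Stirling's approximation together with a Taylor expansion of the binary Kullback-Leibler divergence. I focus on the upper tail $\mathbb{P}(\bar X \ge (1+\epsilon)p)$; the lower tail is handled by the same template.

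Write $S = \sum_{i=1}^T X_i$, which is $\text{Binomial}(T,p)$-distributed, and set $k = \lceil (1+\epsilon)pT \rceil$. The first step is simply
\begin{align*}
\mathbb{P}\bigc{\bar X \ge (1+\epsilon)p} \;\ge\; \mathbb{P}(S = k) \;=\; \binom{T}{k}p^k(1-p)^{T-k}.
\end{align*}
Applying the two-sided Stirling bound $\sqrt{2\pi n}(n/e)^n \le n! \le e\sqrt{n}(n/e)^n$ to each of the three factorials in $\binom{T}{k}$ rewrites the right-hand side, up to an absolute constant, as $\frac{c}{\sqrt{T q(1-q)}}\exp\bigc{-T D(q\|p)}$, where $q = k/T$ and $D(q\|p) = q\log(q/p)+(1-q)\log((1-q)/(1-p))$ is the binary KL divergence.

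The next step is to control $D(q\|p)$ for $q=(1+\epsilon)p$ (plus an $O(1/T)$ rounding error). A second-order Taylor expansion of $D$ around $q=p$, using $\log(1+x)\le x$ and $\log(1-x)\le -x$, combined with $p\le 1/2$ and $\epsilon\le 1/2$, yields $T D(q\|p) \le C_1\,\epsilon^2 pT$ for an explicit constant $C_1$. It then remains to absorb the $\frac{c}{\sqrt{Tq(1-q)}}$ prefactor into the exponent, which is exactly where the regime hypothesis $\epsilon^2 pT \ge 3$ is used: it guarantees that $\log(Tp)$ is dominated by $\epsilon^2 pT$ up to a constant, so the prefactor is at least $\exp(-C_2\,\epsilon^2 pT)$ for another explicit $C_2$. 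Summing the two contributions gives $\exp(-(C_1+C_2)\epsilon^2 pT) \ge \exp(-9\epsilon^2 pT)$ after bookkeeping.

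The lower tail $\mathbb{P}(\bar X \le (1-\epsilon)p)$ is proved in exactly the same way with $k=\lfloor (1-\epsilon)pT\rfloor$ and the symmetric Taylor estimate of $D((1-\epsilon)p\|p)$. The main obstacle is constant bookkeeping: both the Taylor expansion of $D$ and the Stirling-prefactor absorption have to be kept explicit enough to meet the constant $9$, and the hypotheses $\epsilon,p \le 1/2$ and $\epsilon^2 pT \ge 3$ must be used somewhat tightly in the two steps; a loose inequality anywhere in the chain pushes the constant above $9$. A minor secondary issue is the rounding $q = k/T \ne (1\pm\epsilon)p$, but the perturbation is at most $1/T$ and is absorbed by the same hypothesis.
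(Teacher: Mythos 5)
The paper does not actually prove this lemma; it is quoted verbatim from Klein and Young (1999), so your proposal has to stand on its own --- and it has a genuine gap in the step that absorbs the Stirling prefactor into the exponent. After the point-mass and Stirling steps you are left with a bound of the form $\frac{c}{\sqrt{Tq(1-q)}}\exp(-TD(q\|p)) \ge \frac{c'}{\sqrt{pT}}\exp(-C_1\epsilon^2 pT)$, and you claim the hypothesis $\epsilon^2 pT\ge 3$ guarantees that $\log(pT)$ is dominated by $\epsilon^2 pT$ up to a constant. This is false: the hypothesis bounds $\epsilon^2 pT$ from below but puts no upper bound on $pT$ itself. Taking $\epsilon\to 0$ with $pT = 3/\epsilon^2$ keeps $\epsilon^2 pT = 3$ fixed while $\tfrac{1}{2}\log(pT) = \tfrac{1}{2}\log 3 + \log(1/\epsilon)\to\infty$, so no constant $C_2$ satisfies $\tfrac{1}{2}\log(pT)\le C_2\,\epsilon^2 pT$ uniformly. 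This is not an exotic corner case --- it is exactly the regime in which the paper applies the lemma ($p = 1/K$, $\epsilon = \sqrt{3K/T}$, so $\epsilon^2 pT = 3$ while $pT = T/K$ is arbitrarily large). There, the deviation is only $O(1)$ standard deviations, the tail probability is $\Theta(1)$, but any single point mass has probability $\Theta(1/\sqrt{pT})\to 0$, so no amount of constant bookkeeping can rescue the single-point-mass lower bound.

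The fix is the one used in the standard proofs, including Klein--Young's: lower bound the tail by the \emph{sum} of point masses over an interval of consecutive values, say $k\in[\lceil(1+\epsilon)pT\rceil,\ \lceil(1+2\epsilon)pT\rceil]$. Since the binomial pmf is decreasing above the mode, this sum is at least $(\epsilon pT)\cdot\mathbb{P}(S=k_1)$ with $k_1\approx(1+2\epsilon)pT$; your KL estimate (which is correct: $D((1\pm\epsilon)p\|p)\le \epsilon^2 p/(1-p)\le 2\epsilon^2 p$, so at $2\epsilon$ one gets $TD\le 8\epsilon^2 pT$) together with Stirling gives $\mathbb{P}(S=k_1)\gtrsim \exp(-8\epsilon^2 pT)/\sqrt{pT}$, and now the prefactor is $\epsilon pT/\sqrt{pT} = \sqrt{\epsilon^2 pT}\ge\sqrt{3}$ --- a harmless constant rather than a vanishing one --- with the leftover budget $e^{-\epsilon^2 pT}\le e^{-3}$ absorbing the Stirling constants to reach $\exp(-9\epsilon^2 pT)$. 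The remaining ingredients of your outline (point-mass identity, Stirling, Taylor expansion of the KL divergence, handling the $O(1/T)$ rounding via $1/(pT)\le\epsilon^2/3$) are sound.
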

\section{Deferred Proofs for Proper Losses}
\label{app:prelims_proper_loss}
\begin{newlemma}\label{lem:mean_forecast}
    For a proper loss $\ell(\p, \y)$, the following holds true for any $n \in \mathbb{N}$ and $\y_{1}, \dots, \y_{n} \in \mathcal{E}$: \begin{align*}
        \frac{1}{n}\sum_{i = 1} ^ {n} \y_{i} \in \argmin_{p \in \Delta_{K}} \frac{1}{n}\sum_{i = 1} ^ {n} \ell(\p, \y).
        \end{align*}
\end{newlemma}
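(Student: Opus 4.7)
The plan is to prove this as an essentially direct consequence of the definition of properness, by recognizing $\frac{1}{n}\sum_{i=1}^n \y_i$ as the empirical distribution over $\mathcal{E}$ induced by the sequence $\y_1,\ldots,\y_n$.

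First I would set $\bar{\p} \coloneqq \frac{1}{n}\sum_{i=1}^n \y_i$ and observe that $\bar{\p} \in \Delta_K$ since each $\y_i \in \mathcal{E}$ is a standard basis vector, so $\bar{\p}$ is a convex combination of points in $\Delta_K$. The key identity to establish is that for any $\p \in \Delta_K$,
\begin{equation*}
\frac{1}{n}\sum_{i=1}^n \ell(\p, \y_i) \;=\; \mathbb{E}_{\y \sim \bar{\p}}[\ell(\p, \y)].
\end{equation*}
This follows by grouping the sum by outcome: writing $n_k \coloneqq |\{i : \y_i = \e_k\}|$ so that $\bar{p}_k = n_k/n$, we get $\frac{1}{n}\sum_{i=1}^n \ell(\p, \y_i) = \sum_{k=1}^K \frac{n_k}{n} \ell(\p, \e_k) = \sum_{k=1}^K \bar{p}_k\,\ell(\p, \e_k) = \mathbb{E}_{\y \sim \bar{\p}}[\ell(\p,\y)]$.

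Next I would invoke properness of $\ell$ with the pair $(\bar{\p}, \p)$, which gives
\begin{equation*}
\mathbb{E}_{\y \sim \bar{\p}}[\ell(\bar{\p}, \y)] \;\le\; \mathbb{E}_{\y \sim \bar{\p}}[\ell(\p, \y)] \qquad \text{for all } \p \in \Delta_K.
\end{equation*}
Combining with the identity above yields $\frac{1}{n}\sum_{i=1}^n \ell(\bar{\p}, \y_i) \le \frac{1}{n}\sum_{i=1}^n \ell(\p, \y_i)$ for every $\p \in \Delta_K$, which is exactly the statement $\bar{\p} \in \arg\min_{\p \in \Delta_K} \frac{1}{n}\sum_{i=1}^n \ell(\p, \y_i)$.

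There is no real obstacle here — the lemma is essentially a restatement of properness applied to an empirical distribution supported on $\mathcal{E}$. The only subtlety worth flagging in the write-up is that the argmin need not be unique (properness yields $\le$, not $<$), so the conclusion is inclusion in the argmin set rather than equality, consistent with the lemma's statement.
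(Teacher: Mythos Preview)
Your proposal is correct and follows essentially the same approach as the paper: define the empirical mean $\bar{\p}$, rewrite the empirical average loss as $\mathbb{E}_{\y\sim\bar{\p}}[\ell(\p,\y)]$ by grouping outcomes, and then apply the definition of properness. The paper's proof is identical in structure, so there is nothing to add.
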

\begin{proof}
    Let $\bi{\beta} \coloneqq \frac{1}{n}\sum_{i = 1} ^ {n} \y_{i}$. Since $\ell$ is proper, $
        \mathbb{E}_{\y \sim \bi{\beta}} [\ell(\bi{\beta}, \y)] \le \mathbb{E}_{\y \sim \bi{\beta}} [\ell(\p', \y)]$ for all $\p' \in \Delta_{K}$. Notably, for any $\p \in \Delta_{K}$, we have \begin{align*}\sum_{i = 1} ^ {K} \beta_{i} \ell(\p, \e_{i}) = \frac{1}{n}\sum_{i = 1} ^ {K}\sum_{j = 1} ^ {n} \ind[\y_{j} = \e_{i}]\ell(\p, \e_{i}) = \frac{1}{n}\sum_{j = 1} ^ {n} \sum_{i = 1} ^ {K} \ind[\y_{j} = \e_{i}]\ell(\p, \e_{i}) = \frac{1}{n}\sum_{j = 1} ^ {n} \ell(\p, \y_{j}).
        \end{align*}
        Thus, $\frac{1}{n} \sum_{j = 1} ^ {n} \ell(\bi{\beta}, \y_{j}) \le \frac{1}{n}\sum_{j = 1} ^ {n} \ell(\p', \y_{j})$ for any $\p' \in \Delta_{K}$. This completes the proof.
\end{proof}

\begin{newlemma}\label{lem:rich_lipschitz_proper}
    Let $\ell(\p)$ be a differentiable and concave, $\alpha$-Lipschitz function over $\Delta_{K}$ such that $\nabla \ell(\p)$ is $\beta$-Lipschitz over $\Delta_{K}$. Then, $\ell(\p, \y)$ is proper and $(2\alpha + 2\beta)$-Lipschitz in $\p \in \Delta_{K}$ for each $\y \in \cE$.
\end{newlemma}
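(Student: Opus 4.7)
The plan is to invoke the characterization in Lemma~\ref{lem:characterization_proper_loss} to construct the bivariate form and then bound its modulus of continuity by a direct expansion.

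\textbf{Step 1 (Constructing the bivariate form).} Since $\ell(\p)$ is concave and differentiable, its unique subgradient at any $\p \in \Delta_K$ is $\nabla \ell(\p)$. By the ``if'' direction of Lemma~\ref{lem:characterization_proper_loss}, the function
\[
\ell(\p,\y) \coloneqq \ell(\p) + \langle \nabla \ell(\p),\,\y - \p\rangle
\]
is a proper loss whose univariate form is $\ell(\p)$. So the properness claim is immediate.

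\textbf{Step 2 (Splitting the difference).} Fix $\y \in \cE$ and $\p,\p' \in \Delta_K$. I would write
\[
\ell(\p,\y) - \ell(\p',\y) = \bigl[\ell(\p) - \ell(\p')\bigr] + \bigl\langle \nabla \ell(\p) - \nabla \ell(\p'),\,\y\bigr\rangle + \bigl[\langle \nabla \ell(\p'),\p'\rangle - \langle \nabla \ell(\p),\p\rangle\bigr],
\]
and bound the three pieces separately. The first is at most $\alpha\|\p - \p'\|$ by $\alpha$-Lipschitzness of $\ell$. The second is at most $\beta\|\p - \p'\|\cdot\|\y\| = \beta\|\p-\p'\|$ by Cauchy--Schwarz, using $\|\y\| = 1$ since $\y \in \cE$, and the $\beta$-Lipschitzness of $\nabla \ell$.

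\textbf{Step 3 (The inner-product piece).} For the third bracket, add and subtract $\langle \nabla \ell(\p'),\p\rangle$:
\[
\langle \nabla \ell(\p'),\p'\rangle - \langle \nabla \ell(\p),\p\rangle = \langle \nabla \ell(\p'),\,\p' - \p\rangle + \langle \nabla \ell(\p') - \nabla \ell(\p),\,\p\rangle.
\]
The first term is at most $\|\nabla \ell(\p')\|\cdot\|\p-\p'\| \le \alpha\|\p-\p'\|$ (since $\alpha$-Lipschitzness of a differentiable function on $\Delta_K$ implies $\|\nabla \ell\| \le \alpha$), and the second term is at most $\beta\|\p-\p'\|\cdot\|\p\| \le \beta\|\p-\p'\|$, using $\|\p\| \le \|\p\|_1 = 1$.

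\textbf{Step 4 (Combining).} Summing the three bounds gives $|\ell(\p,\y)-\ell(\p',\y)| \le (2\alpha + 2\beta)\|\p - \p'\|$, establishing the Lipschitz constant. There is no real obstacle here beyond careful bookkeeping; the only subtlety is remembering that $\|\y\|=1$ for $\y \in \cE$ and that $\p$ lies in the simplex so $\|\p\| \le 1$, which is what keeps the gradient-Lipschitz contribution from picking up extra dimension-dependent factors.
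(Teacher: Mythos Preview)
Your proof is correct and follows essentially the same approach as the paper's: both invoke Lemma~\ref{lem:characterization_proper_loss} for properness, then expand $\ell(\p,\y)-\ell(\p',\y)$ into the same three pieces and bound them using $\alpha$-Lipschitzness of $\ell$, $\beta$-Lipschitzness of $\nabla\ell$, and $\|\p\|,\|\y\|\le 1$. The only cosmetic difference is that the paper fixes $\y=\e_1$ without loss of generality, whereas you keep $\y\in\cE$ general and use $\|\y\|=1$ directly.
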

\begin{proof}
    Since $\ell(\p)$ is concave, it follows from Lemma \ref{lem:characterization_proper_loss} that $\ell(\p, \y)$ is proper. To prove the second part, without any loss of generality, assume that $\y = \e_{1}$. For any $\p, \p' \in \Delta_{K}$, we have \begin{align*}
        \ell(\p, \e_{1}) - \ell(\p', \e_{1}) &= \ell(\p) + \ip{\nabla \ell(\p)}{\e_{1} - \p} - (\ell(\p') + \ip{\nabla \ell(\p')}{\e_{1} - \p'}) \\
        &\le \alpha \norm{\p - \p'} + \nabla_{1} \ell(\p) - \nabla_{1} \ell(\p') - (\ip{\p}{\nabla \ell(\p)} - \ip{\p'}{\nabla \ell(\p')}) \\
        &\le (\alpha + \beta) \norm{\p - \p'} + \ip{\p' - \p}{\nabla \ell(\p')} + \ip{\p}{\nabla \ell(\p') - \nabla \ell(\p)} \\
        &\le (\alpha + \beta) \norm{\p - \p'} + \norm{\p' - \p}\norm{\nabla \ell(\p')} + \norm{\p}\norm{\nabla \ell(\p') - \nabla \ell(\p)} \\
        &\le 2(\alpha + \beta)\norm{\p - \p'},
    \end{align*}
    where the first equality follows from Lemma \ref{lem:characterization_proper_loss}; the first inequality follows from the Lipschitzness of $\ell$; the second inequality follows from the Lipschitzness of $\nabla \ell(\p)$; the third inequality follows from the Cauchy-Schwartz inequality; the final inequality follows since $\norm{\p} \le 1$ and $\nabla \ell(\p)$ is Lipschitz. This completes the proof.
\end{proof}

\begin{newproposition}\label{prop:lipschitzness_spherical_loss}
    The spherical loss is $\sqrt{K}$-Lipschitz.
\end{newproposition}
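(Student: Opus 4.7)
The plan is to prove the claim by a straightforward gradient-norm argument. Fix any $\y = \e_i \in \cE$; by definition the spherical loss becomes
\begin{align*}
\ell(\p, \e_i) \;=\; -\frac{p_i}{\norm{\p}},
\end{align*}
which is continuously differentiable on a neighborhood of the simplex (note $\p \neq \bm{0}$ whenever $\p \in \Delta_K$). A direct computation yields
\begin{align*}
\nabla_\p \ell(\p, \e_i) \;=\; -\frac{\e_i}{\norm{\p}} + \frac{p_i\,\p}{\norm{\p}^3}.
\end{align*}

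Next I would compute the squared norm of this gradient, which after expansion simplifies to
\begin{align*}
\norm{\nabla_\p \ell(\p, \e_i)}^2 \;=\; \frac{1}{\norm{\p}^2} - \frac{2 p_i^2}{\norm{\p}^4} + \frac{p_i^2}{\norm{\p}^4} \;=\; \frac{1}{\norm{\p}^2}\left(1 - \frac{p_i^2}{\norm{\p}^2}\right) \;\le\; \frac{1}{\norm{\p}^2},
\end{align*}
because $p_i^2 \le \norm{\p}^2$. Hence $\norm{\nabla_\p \ell(\p, \e_i)} \le 1/\norm{\p}$.

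To convert this into the advertised Lipschitz constant I would then use the elementary fact that every $\p \in \Delta_K$ satisfies $\norm{\p} \ge 1/\sqrt{K}$, which follows from Cauchy--Schwarz applied to $1 = \sum_i p_i = \ip{\p}{\bm{1}}$, giving $1 \le \sqrt{K}\,\norm{\p}$. Plugging this in yields $\norm{\nabla_\p \ell(\p, \e_i)} \le \sqrt{K}$ uniformly over the simplex. The proof is finished by applying the mean value inequality along the segment between any $\p, \p' \in \Delta_K$ (which lies entirely in $\Delta_K$ by convexity), giving $|\ell(\p, \e_i) - \ell(\p', \e_i)| \le \sqrt{K}\,\norm{\p - \p'}$ for every $i \in [K]$.

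There is no real obstacle here; the only thing to be slightly careful about is the cancellation in the gradient-norm computation (the cross term gives $-2 p_i^2/\norm{\p}^4$, not $-p_i^2/\norm{\p}^4$), which is what makes the bound $1/\norm{\p}$ tight rather than off by a factor. Using the simplex lower bound $\norm{\p} \ge 1/\sqrt{K}$ in this final step is what produces exactly the factor $\sqrt{K}$ claimed in the proposition.
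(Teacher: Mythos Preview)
Your proof is correct and follows essentially the same approach as the paper: bound the gradient norm of $\ell(\p,\e_i)$ by $1/\norm{\p}$ and then invoke $\norm{\p}\ge 1/\sqrt{K}$ via Cauchy--Schwarz. The only cosmetic difference is that the paper computes the partial derivatives coordinate-by-coordinate before assembling the norm, whereas you work directly with the vector expression $-\e_i/\norm{\p}+p_i\p/\norm{\p}^3$; both routes yield the identical bound $\norm{\nabla\ell(\p,\e_i)}=\sqrt{\norm{\p}^2-p_i^2}/\norm{\p}^2\le 1/\norm{\p}$.
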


\begin{proof}
  We shall show that $\norm{\nabla \ell(\p, \y)} \le \sqrt{K}$ for all $\y \in \cE, \p \in \Delta_{K}$, where the gradient is taken with respect to the first argument. Without any loss of generality, assume $\y = \e_{1}$, thus $\ell(\p, \y) = -\frac{p_{1}}{\norm{\p}}$. It is easy to obtain the following: \begin{align*}
      \frac{\partial \ell(\p, \y)}{\partial p_1} = -\frac{\norm{\p}^{2} - {p_{1}^{2}}}{\norm{\p}^{3}}, \quad \frac{\partial \ell(\p, \y)}{\partial p_{i}} = \frac{p_{1}p_{i}}{\norm{\p} ^ {3}} \hspace{2mm} \text{for all } i > 2.
  \end{align*}
  Thus, $\norm{\nabla \ell(\p, \e_{1})} = \frac{1}{\norm{\p}^{3}}\sqrt{(\norm{\p}^{2} - p_{1}^{2}) ^ {2} + p_{1}^{2}\sum_{i = 2} ^ {K} p_{i}^{2}} = \frac{1}{\norm{\p}^{2}}\sqrt{(\norm{\p}^{2} - p_{1}^{2})} \le \frac{1}{\norm{\p}} \le \sqrt{K}$, where the last inequality follows the Cauchy-Schwartz inequality.
\end{proof}

\section{Proof of Theorem \ref{thm:lower_bound_pseudo_UCal}}\label{app:proof_lower_bound_sqrt_KT}
\lbsqrtKT*

\begin{proof}
Consider the function defined as \begin{align*}
\ell(\p) \coloneqq -\frac{1}{2}\sum_{i = 1} ^ {K}\abs{p_{i} - \frac{1}{K}}.\end{align*}
    It follows from Lemma \ref{lem:characterization_proper_loss} that the bivariate form of $\ell$ is $\ell(\p, \y) = \ell(\p) + \ip{\g_{\p}}{\p - \y}$, where $\g_{p}$ denotes a subgradient of $\ell(\p)$ at $\p$. For the choice of $\ell$, $g_{i} = -\frac{1}{2}\text{sign}(p_{i} - \frac{1}{K})$, where the $\text{sign}$ function is defined as $\text{sign}(x) = 1$ if $x > 0$; $-1$ if $x < 0$; $0$ if $x = 0$. We first show that $\ell(\p, \y) \in [-1, 1]$. Without any loss of generality, assume $\y = \e_{1}$. Therefore, \begin{align*}
        \ell(\p, \e_{1}) &= \frac{1}{2}\bigs{-\sum_{i = 1} ^ {K}\abs{p_{i} - \frac{1}{K}} - (1 - p_{1}) \text{sign}\bigc{p_{1} - \frac{1}{K}} + \sum_{i = 2} ^ {K} p_{i}\text{sign}\bigc{p_{i} - \frac{1}{K}}} \\
        &= \frac{1}{2}\bigs{-\abs{p_{1} - \frac{1}{K}} - {(1 - p_{1})}\text{sign}\bigc{p_{1} - \frac{1}{K}} + \sum_{i = 2} ^ {K}p_{i}\text{sign}\bigc{p_{i} - \frac{1}{K}} - \abs{p_{i} - \frac{1}{K}}} \\
        &=\frac{1}{2}\bigs{-\bigc{1 - \frac{1}{K}}\text{sign}\bigc{p_{1} - \frac{1}{K}} + \frac{1}{K}\sum_{i = 2} ^ {K} \text{sign}\bigc{p_{i} - \frac{1}{K}}}.
    \end{align*} 

It is then trivial to note that $\p = \e_{1}$ corresponds to a minimum, with value $\ell(\e_{1}, \e_{1}) = -\frac{K - 1}{K}$; $\p = [0, \frac{1}{K - 1}, \dots, \frac{1}{K - 1}]$ corresponds to a maximum, with value $\frac{K - 1}{K}$.

Next, we consider a randomized oblivious adversary which samples $\y_{1}, \dots, \y_{T}$ from the uniform distribution over $\cE$. For such an adversary, we shall show that $\mathbb{E}[\textsc{Reg}] = \Omega(\sqrt{KT})$. We overload the notation and use $\alg_{1:t}$ to denote the internal randomness of the algorithm until time $t$ (inclusive). In particular, this notation succintly represents both deterministic and randomized algorithms ($\p_{t}$ could be sampled from a distribution $\cD_{t}$). Similarly, $\alg_{t}$ shall denote the randomness at time $t$. With this notation, in the constructed randomized environment, the expected cost of $\alg$ is \begin{align*}
     \mathbb{E}\bigs{\sum_{t = 1} ^ {T} \ell(\p_{t}, \y_{t})}&= \mathbb{E}_{\alg_{1:T}, \y_{1}, \dots, \y_{T}} \bigs{\sum_{t = 1} ^ {T} \ell(\p_{t}, \y_{t})} \\
    &= \sum_{t = 1} ^ {T} \mathbb{E}_{\alg_{1:T}, \y_{1}, \dots, \y_{T}}[\ell(\p_{t}, \y_{t})] \\
    &= \sum_{t = 1} ^ {T} \mathbb{E}_{\alg_{1:t - 1}, \y_{1}, \dots, \y_{t - 1}}\mathbb{E}_{\alg_{t}, \y_{t}}[\ell(\p_{t}, \y_{t})| \alg_{1:t - 1}, \y_{1}, \dots, \y_{t - 1}]\\
    &= \sum_{t = 1} ^ {T} \mathbb{E}_{\alg_{1:t - 1}, \y_{1}, \dots, \y_{t - 1}}\mathbb{E}_{\alg_{t}}\mathbb{E}_{ \y_{t}}[\ell(\p_{t}, \y_{t})| \alg_{1:t - 1}, \y_{1}, \dots, \y_{t - 1}] \\
    &= \sum_{t = 1} ^ {T}\mathbb{E}_{\alg_{1:t - 1}, \y_{1}, \dots, \y_{t - 1}}\mathbb{E}_{\alg_{t}}\bigs{\frac{1}{K}\sum_{i = 1} ^ {K} \ell(\p_{t}, \e_{i})} \ge 0,
\end{align*}
where in the first equality we have made the randomness explicit; the second equality follows from the linearity of expectations; the third equality follows from the law of iterated expectations; the fourth equality follows because $\alg_{t}$ is independent of $\y_{t}$ ($\p_{t}$ is chosen without knowing $\y_{t}$) and vice-versa ($\y_{t}$ is sampled uniformly randomly from $\cE$);
the fifth equality follows by expanding out the expectation; the first inequality follows since $\sum_{i = 1} ^ {K} \ell(\p, \e_{i}) \ge 0$ for any $\p \in \Delta_{K}$. This is because,

\begin{align*}
    \sum_{i = 1} ^ {K} \ell(\p, \e_{i}) = K \ell(\p) + \sum_{i = 1} ^ {K} \ip{\g_{\p}}{\e_{i} - \p} = K\bigc{\ell(\p) + \ip{\g_\p}{\frac{1}{K}\cdot\mathbf{1}_{K} - \p}}, 
\end{align*} 
where $\mathbf{1}_{K}$ denotes the $K$-dimensional vector of all ones. Next, since $\ell(\p)$ is concave over $\Delta_{K}$, the term above can be lower bounded by $K \ell\bigc{\frac{1}{K}\cdot\mathbf{1}_{K}} = 0$. 

Next, the expected regret of $\alg$ can be lower bounded in the following manner:
\begin{align}
    \mathbb{E}[\textsc{Reg}_\ell] &= \mathbb{E}_{\alg_{1:T}, \y_{1}, \dots, \y_{T}} \bigs{\sum_{t = 1} ^ {T} \ell(\p_{t}, \y_{t}) - \inf_{\p \in \Delta_{K}}\sum_{t = 1} ^ {T} \ell(\p, \y_{t})} \nn \\
    &\ge -\mathbb{E}_{\y_{1}, \dots, \y_{T}}\bigs{\inf_{\p \in \Delta_{K}}\sum_{t = 1} ^ {T} \ell(\p, \y_{t})} \nn \\
    &= -\mathbb{E}_{\y_{1}, \dots, \y_{T}} \bigs{\sum_{t = 1} ^ {T}\ell\bigc{\frac{1}{T}\sum_{t = 1} ^ {T} \y_{t}, \y_{t}}}, \label{eq:continue_here}
\end{align}
where the first inequality follows since the expected cost of $\alg$ is non-negative, and the benchmark is independent of $\alg$; the second equality follows from property \ref{eq:mean_forecast_is_benchmark}. In the next steps, we deal with the expectation in \eqref{eq:continue_here}. Sample $\y_{1}, \dots, \y_{T}$ from $\cE$ and let $n_{1}, \dots, n_{K}$ denote the counts of the $K$ basis vectors, i.e., $n_{i} = \abs{\{j \in [T]; \y_{j} = \e_{i}\}}$. Clearly, $\sum_{i = 1} ^ {K} n_{i} = T$. Let $\n = [n_{1}, \dots, n_{K}]$ collect these counts. Then, $\frac{1}{T}\sum_{t = 1} ^ {T} \y_{t} = [\frac{n_{1}}{T}, \dots, \frac{n_{K}}{T}] = \frac{1}{T}\cdot \n$, and \begin{align*}
    {\sum_{t = 1} ^ {T}\ell\bigc{\frac{1}{T}\sum_{t = 1} ^ {T} \y_{t}, \y_{t}}} &= \sum_{i = 1} ^ {K} n_{i}\ell\bigc{\frac{1}{T}\cdot\n, \e_{i}} \\
    &= \sum_{i = 1} ^ {K} n_{i} \bigc{\ell\bigc{\frac{1}{T} \cdot \n} + \ip{\g_{\frac{1}{T} \cdot \n}}{\e_{i} - \frac{1}{T}\cdot \n}}, \\
    &= T{\ell\bigc{\frac{1}{T} \cdot \n}} = -\frac{1}{2}\sum_{i = 1} ^ {K}\abs{n_{i} - \frac{T}{K}}.
    \end{align*}
Thus, the term in \eqref{eq:continue_here} equals $\frac{1}{2}\cdot\mathbb{E}_{n_{1}, \dots, n_{K}}\bigs{\sum_{i = 1} ^ {K} \abs{n_{i} - \frac{T}{K}}}$ where $n_{1}, \dots, n_{K}$ are sampled from a multinomial distribution with event probability equal to $\frac{1}{K}$. Further, using the linearity of expectations we arrive at \begin{align*}
    \mathbb{E}[\textsc{Reg}_\ell] \ge \frac{1}{2}\sum_{i = 1} ^ {K}\mathbb{E}_{n_{i}}\abs{n_{i} - \frac{T}{K}} = \frac{K}{2} \cdot 
 \mathbb{E}\bigs{\abs{\sum_{t = 1} ^ {T} X_{t} - \mathbb{E}[X_{t}]}},
\end{align*}
where $X_{t}$ is a Bernoulli random variable with mean $\frac{1}{K}$.
Next, we bound $\mathbb{E}\bigs{\abs{\sum_{t = 1} ^ {T} X_{t} - \mathbb{E}[X_{t}]}}$ using an anti-concentration bound on Bernoulli random variables. In particular, applying Lemma \ref{lem:anti_concentration}, we obtain \begin{align*}
    \mathbb{P}\bigc{{\abs{\sum_{t = 1} ^ {T} X_{t} - \mathbb{E}[X_{t}]}} \ge a} \ge 2\exp\bigc{-\frac{9a^{2}K}{T}}
\end{align*}
for any $a \in \bigs{\sqrt{\frac{3T}{K}}, \frac{T}{2K}}$. When $T \ge 12K$, this interval is non-empty. From the Markov's inequality (Lemma \ref{lem:markov}), we have \begin{align*}
    \mathbb{E}\bigs{\abs{\sum_{t = 1} ^ {T} X_{t} - \mathbb{E}[X_{t}]}} \ge a\mathbb{P}\bigc{{\abs{\sum_{t = 1} ^ {T} X_{t} - \mathbb{E}[X_{t}]}} \ge a}
\end{align*}
for any $a > 0$. Setting $a = \sqrt{\frac{3T}{K}}$ we arrive at $\mathbb{E}\bigs{\abs{\sum_{t = 1} ^ {T} X_{t} - \mathbb{E}[X_{t}]}} \ge 2\sqrt{3}\exp(-27)\sqrt{\frac{T}{K}}$. Thus, \begin{align*}
    \mathbb{E}[\textsc{Reg}_\ell] \ge\sqrt{3}\exp(-27)\sqrt{KT},
\end{align*}
which completes the proof.
\end{proof}
\begin{newrem}
    For $K = 2$, the use of Lemma \ref{lem:anti_concentration} can be sidestepped via the use of Khintchine's inequality. Indeed, in this case we have \begin{align*}
        \mathbb{E}[\textsc{Reg}_\ell] \ge  \mathbb{E}_{n}\abs{n - \frac{T}{2}} = \frac{1}{2} \cdot \mathbb{E}_{\epsilon_{1}, \dots, \epsilon_{T}}\abs{\sum_{t = 1} ^ {T} \epsilon_{t}} \ge \sqrt{\frac{T}{8}},
    \end{align*}
    where $\epsilon_{1}, \dots, \epsilon_{T}$ are i.i.d. Rademacher random variables, and the last inequality follows from Khintchine's inequality (Lemma \ref{lem:Khintchine}).
\end{newrem}

\section{Bounding the (Actual) Multiclass U-Calibration Error}\label{app:bounding_UCal}
In this section, we bound the U-calibration error $\ucal_{\cL'}$ for subclasses $\cL'$ of $\cL$ with a finite covering number. We begin with deriving a high probability bound on the regret of Algorithm \ref{alg:FTPL_geometric}.

\begin{newlemma}\label{lem:high_probability_regret}
    Fix some $\ell \in \cL$. Then, for any $\delta \in (0, 1)$, the regret of Algorithm \ref{alg:FTPL_geometric} satisfies \begin{align*}
        \textsc{Reg}_{\ell} \le 4\sqrt{KT} + \sqrt{2T\log \bigc{\frac{1}{\delta}}},
    \end{align*}
    with probability at least $1 - \delta$.
\end{newlemma}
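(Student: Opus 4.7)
The plan is to derive the high-probability bound by converting the in-expectation bound of Corollary~\ref{cor:pseudo_UCal_upper_bound} using Hoeffding's inequality (Lemma~\ref{lem:Hoeffding}). The key observation that makes this a direct application rather than a martingale argument is that for an oblivious adversary the outcomes $\y_1,\dots,\y_T$ are fixed in advance, and at each round $t$ the forecast $\p_t$ of Algorithm~\ref{alg:FTPL_geometric} depends only on the deterministic counts from past outcomes together with the \emph{fresh} i.i.d.\ geometric noises $(m_{t,i})_{i\in[K]}$ sampled at round $t$. Since the noises across rounds are independent, the random variables $\{\ell(\p_t,\y_t)\}_{t=1}^{T}$ form an \emph{independent} sequence, each bounded in $[-1,1]$.

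With this independence established, I would apply Hoeffding's inequality directly to $\sum_{t=1}^{T} \ell(\p_t,\y_t)$. Using the one-sided upper tail with $b_t - a_t = 2$ for each $t$, we obtain
\begin{align*}
\mathbb{P}\!\left(\sum_{t=1}^{T}\ell(\p_t,\y_t) - \mathbb{E}\!\left[\sum_{t=1}^{T}\ell(\p_t,\y_t)\right] \ge \epsilon\right) \le \exp\!\left(-\frac{\epsilon^2}{2T}\right).
\end{align*}
Setting the right-hand side equal to $\delta$ and solving gives $\epsilon = \sqrt{2T\log(1/\delta)}$, so with probability at least $1-\delta$,
\begin{align*}
\sum_{t=1}^{T}\ell(\p_t,\y_t) \le \mathbb{E}\!\left[\sum_{t=1}^{T}\ell(\p_t,\y_t)\right] + \sqrt{2T\log(1/\delta)}.
\end{align*}

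To finish, I would subtract the benchmark $\inf_{\p\in\Delta_K}\sum_{t=1}^{T}\ell(\p,\y_t)$ from both sides; crucially, this benchmark is a deterministic quantity under an oblivious adversary, so it can be moved inside the expectation on the right-hand side without any loss. The resulting right-hand side is $\mathbb{E}[\textsc{Reg}_\ell] + \sqrt{2T\log(1/\delta)}$, and invoking Corollary~\ref{cor:pseudo_UCal_upper_bound} to bound $\mathbb{E}[\textsc{Reg}_\ell] \le 4\sqrt{KT}$ yields exactly the claimed bound.

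I do not expect any real obstacle here: the only subtle point is justifying the independence of $\{\ell(\p_t,\y_t)\}_{t=1}^T$, which requires both the freshness of the perturbations and the obliviousness of the adversary (if the adversary were adaptive, $\y_t$ would depend on $\p_1,\dots,\p_{t-1}$ and hence on $m_{1:t-1}$, breaking independence — one would then have to work with the martingale difference sequence $\ell(\p_t,\y_t) - \mathbb{E}_{m_t}[\ell(\p_t,\y_t)\mid \text{history}]$ and invoke Azuma–Hoeffding, which gives the same constants but requires a bit more care).
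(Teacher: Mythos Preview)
Your proposal is correct and follows essentially the same approach as the paper: exploit the obliviousness of the adversary and the fresh per-round perturbations to conclude that $\{\ell(\p_t,\y_t)\}_{t=1}^T$ are independent and bounded, apply Hoeffding's inequality, and then plug in the expected-regret bound. Your explicit discussion of why the benchmark can be pulled inside the expectation and your remark on the adaptive case (via Azuma--Hoeffding) are nice additions but not required for the argument.
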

\begin{proof}
    Define the random variable $X_{t} \coloneqq \ell(\p_{t}, \y_{t})$, thus $X_{t} \in [-1, 1]$. Since the adversary is oblivious and $m_{t, 1}, \dots, m_{t, K}$ are sampled every round independently, $X_{1}, \dots, X_{T}$ are independent.
    Applying Hoeffdings inequality (Lemma \ref{lem:Hoeffding}), for any $\epsilon > 0$, we have \begin{align*}
        \mathbb{P}\bigc{\sum_{t = 1} ^ {T} \ell(\p_{t}, \y_{t}) - \mathbb{E}\bigs{\ell(\p_{t}, \y_{t})} \ge \epsilon} \le \exp\bigc{-\frac{\epsilon ^ {2}}{2T}},
    \end{align*}
    which implies that $\mathbb{P}\bigc{\textsc{Reg}_{\ell} - \mathbb{E}[\textsc{Reg}_{\ell}] \le \epsilon} \ge 1 - \exp\bigc{-\frac{\epsilon ^ {2}}{2T}}$. Let $\delta \coloneqq \exp\bigc{-\frac{\epsilon ^ {2}}{2T}}$. Then,\begin{align*}
        \mathbb{P}\bigc{\textsc{Reg}_{\ell} - \mathbb{E}[\textsc{Reg}_{\ell}] \le \sqrt{2T\log\bigc{\frac{1}{\delta}}}} \ge 1 - \delta.
    \end{align*}
    Finally, applying the result of Theorem \ref{thm:regret_FTPL_geometric} to bound $\mathbb{E}[\textsc{Reg}_{\ell}]$ completes the proof.
\end{proof}

Using this high probability bound, we first bound $\ucal_{\cL'}$ when $\cL'$ is a finite subset of $\cL$.

\begin{newlemma}\label{lem:E_sup_finite_class}
    Fix a subset $\cL' \subset \cL$ with $\abs{\cL'} < \infty$. Algorithm \ref{alg:FTPL_geometric} ensures \begin{align*}
         \ucal_{\cL'} \le 2 + 4\sqrt{KT} + \sqrt{2T\log \bigc{T\abs{\cL'}}}.
    \end{align*}
\end{newlemma}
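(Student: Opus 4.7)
The plan is to leverage the single-loss high-probability regret bound from Lemma \ref{lem:high_probability_regret} and upgrade it to a bound on $\mathbb{E}[\sup_{\ell \in \cL'} \textsc{Reg}_\ell]$ via a union bound and a tail-truncation argument. This is the natural approach since $\cL'$ is finite, so the cost of union bounding over $\cL'$ is only an additive $\sqrt{2T\log|\cL'|}$ term inside the square root.

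First, I would fix a confidence level $\delta \in (0, 1)$ and apply Lemma \ref{lem:high_probability_regret} to each $\ell \in \cL'$ with confidence parameter $\delta / |\cL'|$. A union bound over the $|\cL'|$ losses then gives
\begin{align*}
\sup_{\ell \in \cL'} \textsc{Reg}_\ell \le 4\sqrt{KT} + \sqrt{2T\log\bigc{\frac{|\cL'|}{\delta}}}
\end{align*}
with probability at least $1 - \delta$. Note that this uses the obliviousness of the adversary in an essential way, since Lemma \ref{lem:high_probability_regret} requires the per-round losses $\ell(\p_t, \y_t)$ to be independent random variables across $t$ (which relies on $\y_t$ being fixed ahead of time and $m_{t,i}$ being freshly sampled).

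Next, I would convert this high-probability bound into a bound in expectation. Since every $\ell \in \cL$ takes values in $[-1, 1]$ and the benchmark prediction achieves loss at most $T$, we have the deterministic bound $\sup_{\ell \in \cL'} \textsc{Reg}_\ell \le 2T$. Splitting the expectation according to whether the high-probability event holds, I would obtain
\begin{align*}
\ucal_{\cL'} \le (1 - \delta)\bigc{4\sqrt{KT} + \sqrt{2T\log\bigc{\frac{|\cL'|}{\delta}}}} + \delta \cdot 2T.
\end{align*}
Choosing $\delta = 1/T$ makes the second term at most $2$ and yields the stated bound $2 + 4\sqrt{KT} + \sqrt{2T\log(T|\cL'|)}$.

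There is no real obstacle here; the only mild subtlety is justifying the application of Lemma \ref{lem:high_probability_regret} across all losses in $\cL'$ simultaneously, which is fine because the randomness in the algorithm (the geometric perturbations $m_{t,i}$) is the same for every $\ell$ under consideration, so one can take a union bound over the common probability space. This argument then generalizes to infinite subclasses with finite covering number by first replacing each $\ell \in \cL'$ with its closest cover element at scale $\epsilon$, paying an additive $2\epsilon T$ discretization error and applying the finite-class bound to the cover, which yields the inequality \eqref{eq:UCal_bound}.
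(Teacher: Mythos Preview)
Your proposal is correct and follows essentially the same approach as the paper: apply the single-loss high-probability bound (Lemma~\ref{lem:high_probability_regret}) with an appropriately scaled confidence level, take a union bound over the finite class, and convert to an expectation bound by truncating the tail using the trivial $2T$ bound on regret with $\delta = 1/T$. The only cosmetic difference is that the paper applies Lemma~\ref{lem:high_probability_regret} with parameter $\delta$ and then sets $\delta = 1/(T|\cL'|)$ after the union bound, whereas you pre-scale to $\delta/|\cL'|$ and set $\delta = 1/T$; these are equivalent.
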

\begin{proof}
    For any $\ell \in \cL$, let $\cE_{\ell}$ denote the event that $\textsc{Reg}_{\ell} \le 4\sqrt{KT} + \sqrt{2T \log \bigc{\frac{1}{\delta}}}$. From Lemma~\ref{lem:high_probability_regret}, $\mathbb{P}\bigc{\cE_{\ell}} \ge 1 - \delta$. Let $\cS \coloneqq \sup_{\ell \in \cL'} \textsc{Reg}_{\ell}$. Thus, the probability that $\cS$ is bounded by the same quantity is $\mathbb{P}\bigc{\cap_{\ell \in \cL'} \cE_{\ell}}$, which can be bounded as \begin{align*}
        \mathbb{P}\bigc{\cap_{\ell \in \cL'} \cE_{\ell}} = 1 - \mathbb{P}\bigc{\cup_{\ell \in \cL'} \cE_{\ell}'} \ge 1 - \sum_{\ell \in \cL'} \mathbb{P}\bigc{\cE_{\ell}'} = \sum_{\ell \in \cL'}\mathbb{P}\bigc{\cE_{\ell}} + 1 - \abs{\cL'}\ge 1 - \abs{\cL'} \delta,
    \end{align*}
    where the first equality follows from De-Morgan's law; the first inequality follows from the union bound; the last inequality is because $\mathbb{P}\bigc{\cE_{\ell}} \ge 1 - \delta$. Setting $\delta = \frac{1}{T\abs{\cL'}}$, we obtain \begin{align*}
        \mathbb{P}\bigc{\sup_{\ell \in \cL'} \textsc{Reg}_{\ell} \le \underbrace{4\sqrt{KT} + \sqrt{2T \log \bigc{T\abs{\cL'}}}}_{\eqqcolon\Delta}} \ge 1 -  \frac{1}{T}.
    \end{align*} 
    Note that, $\mathbb{E}\bigs{\cS} = \mathbb{P}(\cA) \mathbb{E}\bigs{\cS|\cA} + \mathbb{P}(\cA') \mathbb{E}\bigs{\cS|\cA'}$, where $\cA$ denotes the event that $\cS \le \Delta$. Using the facts $ \mathbb{E}\bigs{\cS| \cA} \le \Delta$, $\mathbb{\P}(\cA') \le \frac{1}{T}$, and $\mathbb{E}\bigs{\cS|\cA'} \le 2T$ since $\ell \in [-1, 1]$, we have \begin{align*}        \mathbb{E}\bigs{\cS} \le 2 + \Delta, \end{align*}
    which completes the proof.
\end{proof}

Before proceeding further, we first define the notion of cover and covering numbers.
\begin{newdefinition}[Cover and Covering Number]\label{def:cover_and_covering_number}
    The $\epsilon$-cover of a function class $\cF$ defined over a domain $\mathcal{X}$ is a function class $\cC_{\epsilon}$ such that, for any $f \in \cF$ there exists $g \in \cC_{\epsilon}$ such that $\sup_{\x \in \mathcal{X}} \abs{f(\x) - g(\x)} \le \epsilon$. The covering number $M(\cF, \epsilon; \norm{.}_{\infty})$ is then defined as $M(\cF, \epsilon; \norm{.}_{\infty}) \coloneqq \min\bigcurl{{\abs{\cC_{\epsilon}}; \cC_{\epsilon} \text{\,is an $\epsilon$-cover of\,}} \cF}$, i.e., the size of the minimal cover. 
\end{newdefinition}
The $\norm{.}_{\infty}$ in the notation $M(\cF, \epsilon; \norm{.}_{\infty})$ is used to represent the fact that the ``distance'' between two functions $f, g$ is measured with respect to the $\norm{.}_{\infty}$ norm, i.e., $\sup_{\x \in \cX} \abs{f(\x) - g(\x)}$. Such a definition can be generalized to more general distance metrics/pseudo-metrics, but is not required for our purposes. Note that the cover $\cC_{\epsilon}$ in Definiton \ref{def:cover_and_covering_number} is not necessarily a subset of $\cF$. We refer to \cite{wainwright2019high} for an exhaustive treatment of cover and covering numbers of different classes $\cF$'s. 

Let $\cC_{\epsilon}$ be a minimal $\epsilon$-cover of $\cF$. For each $g \in \cC_{\epsilon}$, let $\cS_{{g}, \epsilon}$ be the collection of functions $f \in \cF$ such that $g$ is a representative of $f$, i.e., \begin{align}\label{eq:partition}
        \cS_{{g}, \epsilon} \coloneqq \bigcurl{f \in \cF \mid \sup_{\x \in \cX}\abs{f(\x) - g(\x)} \le \epsilon}.
    \end{align}
     Clearly, $\cup_{{g} \in \cC_{\epsilon}} \cS_{{g}, \epsilon} = \cF$. 
     For each $\cS_{{g}, \epsilon}$, fix a $f_{\textit{lead}} \in \cS_{{g}, \epsilon}$ (chosen arbitrarily) as a {``}leader''. Let $\cL_{\textit{lead}, \cF}$ denote the collection of these leaders. It is clear that $\abs{\cL_{\textit{lead}, \cF}} \leq \abs{\cC_{\epsilon}}  = M(\cF, \epsilon; \norm{.}_{\infty})$.

In the following lemma, we generalize the result of Lemma \ref{lem:E_sup_finite_class} to the case when $\cL'$ is a possibly infinite subset of $\cL$. Our proof is based on applying the result of Lemma \ref{lem:E_sup_finite_class} to the leader set of $\cL'$. 
\begin{newlemma}\label{lem:E_sup_cover}
    Fix a subset $\cL' \subseteq \cL$ with a covering number $M(\cL', \epsilon; \norm{.}_{\infty})$. Then, the sequence of forecasts made by Algorithm \ref{alg:FTPL_geometric} satisfies for any $\epsilon > 0$, \begin{align*}
        \ucal_{\cL'} \le 2 + 4\epsilon T + 4\sqrt{KT} + \sqrt{2T\log \bigc{T \cdot M(\cL', \epsilon; \norm{.}_{\infty})}}.
    \end{align*}
\end{newlemma}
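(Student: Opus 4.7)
The plan is to reduce the infinite-class case to the finite-class case already handled by Lemma~\ref{lem:E_sup_finite_class}, by replacing each $\ell \in \cL'$ with its designated leader $\ell_{\textit{lead}}$ from $\cL_{\textit{lead}, \cL'}$ and paying a small additive price for the substitution. Concretely, I would first fix an $\epsilon$-cover $\cC_{\epsilon}$ of $\cL'$ of minimal cardinality $M(\cL', \epsilon; \norm{.}_{\infty})$, construct the partition $\{\cS_{g, \epsilon}\}_{g \in \cC_{\epsilon}}$ as in \eqref{eq:partition}, and pick one leader per part to form $\cL_{\textit{lead}, \cL'}$, so that $|\cL_{\textit{lead}, \cL'}| \le M(\cL', \epsilon; \norm{.}_{\infty})$.

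Next I would quantify the cost of the substitution. For any $\ell \in \cL'$ there exists some $g \in \cC_{\epsilon}$ with $\ell \in \cS_{g, \epsilon}$, and by construction the leader $\ell_{\textit{lead}}$ of that part also lies in $\cS_{g, \epsilon}$; the triangle inequality then yields $\sup_{\p, \y} |\ell(\p, \y) - \ell_{\textit{lead}}(\p, \y)| \le 2\epsilon$. Writing out the regret and using this uniform bound on both the learner's cumulative loss and the benchmark's cumulative loss,
\begin{align*}
\textsc{Reg}_{\ell} &= \sum_{t=1}^T \ell(\p_t, \y_t) - \inf_{\p \in \Delta_K}\sum_{t=1}^T \ell(\p, \y_t) \\
&\le \sum_{t=1}^T \ell_{\textit{lead}}(\p_t, \y_t) + 2\epsilon T - \inf_{\p \in \Delta_K}\sum_{t=1}^T \ell_{\textit{lead}}(\p, \y_t) + 2\epsilon T \;=\; \textsc{Reg}_{\ell_{\textit{lead}}} + 4\epsilon T.
\end{align*}
Taking the supremum over $\ell \in \cL'$ (and noting that the supremum over leaders is over a subset of $\cL_{\textit{lead}, \cL'}$) gives $\sup_{\ell \in \cL'} \textsc{Reg}_{\ell} \le \sup_{\ell' \in \cL_{\textit{lead}, \cL'}} \textsc{Reg}_{\ell'} + 4\epsilon T$, a deterministic pointwise inequality on every sample path.

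Finally, I would take expectations on both sides and invoke Lemma~\ref{lem:E_sup_finite_class} with the finite class $\cL_{\textit{lead}, \cL'}$, which has size at most $M(\cL', \epsilon; \norm{.}_{\infty})$, to obtain
\begin{align*}
\ucal_{\cL'} \;\le\; 4\epsilon T + \mathbb{E}\Big[\sup_{\ell' \in \cL_{\textit{lead}, \cL'}} \textsc{Reg}_{\ell'}\Big] \;\le\; 2 + 4\epsilon T + 4\sqrt{KT} + \sqrt{2T \log(T\cdot M(\cL', \epsilon; \norm{.}_{\infty}))}.
\end{align*}
No step here is genuinely hard; the only subtlety is the two-sided use of the $2\epsilon$ closeness (once to upper-bound the learner's cost by the leader's, once to lower-bound the benchmark of the leader by that of $\ell$), which together contribute the $4\epsilon T$ term. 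The key conceptual point is that this swap from $\ell$ to $\ell_{\textit{lead}}$ is uniform in the sample path, so the supremum-then-expectation operation commutes with the argument and reduces cleanly to the finite-class bound already established.
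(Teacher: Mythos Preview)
Your proposal is correct and follows essentially the same approach as the paper: construct a leader set of size at most $M(\cL', \epsilon; \norm{.}_{\infty})$, use the triangle inequality through the cover element $g$ to get $\sup_{\p,\y}|\ell - \ell_{\textit{lead}}| \le 2\epsilon$, deduce $\textsc{Reg}_{\ell} \le \textsc{Reg}_{\ell_{\textit{lead}}} + 4\epsilon T$, take supremum and expectation, and invoke Lemma~\ref{lem:E_sup_finite_class}. The only cosmetic difference is that the paper writes the benchmark explicitly as $\bbeta = \tfrac{1}{T}\sum_t \y_t$ (the common minimizer for all proper losses) rather than as an $\inf$, which amounts to the same $4\epsilon T$ slack.
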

\begin{proof}
Let $\cC_{\epsilon}$ be an $\epsilon$-cover of $\cL'$ of size $M(\cL', \epsilon; \norm{.}_{\infty})$ and $\cL_{\textit{lead}} \subset \cL'$ be the corresponding leader set.
Applying Lemma~\ref{lem:E_sup_finite_class}, we have
\[
\mathbb{E}\bigs{\sup_{\ell \in \cL_{\textit{lead}}} \textsc{Reg}_{\ell}} \leq 2 + 4\sqrt{KT} + \sqrt{2T\log(T|\cL_{\textit{lead}}|)}
 \leq 2 + 4\sqrt{KT} + \sqrt{2T\log(T\cdot M(\cL', \epsilon; \norm{.}_{\infty}))}.
\]
Now, fix any $\ell \in \cL'$ and let $g \in \cC_{\epsilon}$ be the representative of $\ell$, and $\ell'$ correspond to the leader of the partition $\cS_{g, \epsilon}$ that $\ell$ belongs to. By definition we have the following: \begin{align*}
         \abs{g(\p, \y) - \ell(\p, \y)} \le \epsilon, \quad \abs{g(\p, \y) - \ell'(\p, \y)} \le \epsilon
     \end{align*}
     for all $\p \in \Delta_{K}, \y \in \cE$. Therefore, it follows from the triangle inequality that $\abs{\ell(\p, \y) - \ell'(\p, \y)} \le 2\epsilon$. As usual, let $\bbeta = \frac{1}{T}\sum_{t = 1} ^ {T} \y_{t}$ denote the empirical average of the outcomes. Then, \begin{align*}
         \ell(\p_{t}, \y_{t}) - \ell(\bbeta, \y_{t}) \le \ell'(\p_{t}, \y_{t}) - \ell'(\bbeta, \y_{t}) + 4\epsilon \implies \textsc{Reg}_{\ell} \le \textsc{Reg}_{\ell'} + 4\epsilon T. 
     \end{align*}
     Taking supremum with respect to $\ell \in \cL$ on both sides, followed by expectation, we obtain \begin{align*}
        \mathbb{E}\bigs{\sup_{\ell \in \cL} \textsc{Reg}_{\ell}} \le \mathbb{E}\bigs{\sup_{\ell \in \cL_{\textit{lead}}} \textsc{Reg}_{\ell}} + 4\epsilon T \le 2 + 4\epsilon T + 4\sqrt{KT} + \sqrt{2T \log \bigc{T\cdot M(\cL', \epsilon; \norm{.}_{\infty})}}, 
    \end{align*}
    which finishes the proof.
\end{proof}

\section{Regret of FTL for Locally Lipschitz Functions}\label{app:reg_FTL_locally_Lipschitz}
\begin{newlemma}\label{lem:reg_FTL_locally_Lipschitz}

    Suppose that for a loss function $\ell$, there exists a constant $G_{[\frac{1}{T}, 1]}$ such that for each $i\in[K]$,
    $\ell(\p, \e_i)$ is locally Lipschitz in the sense that $\abs{\ell(\p, \e_i) - \ell(\p', \e_i)} \le G_{[\frac{1}{T}, 1]}\norm{\p - \p'}$ for all $\p, \p' \in \Delta_K$ such that $p_i, p_i' \in [\frac{1}{T}, 1]$. Then, the regret of FTL with respect to this loss is at most $2K + G_{[\frac{1}{T}, 1]}(1 + \log T)$. 
\end{newlemma}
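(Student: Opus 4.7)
The plan is to mimic the proof of Theorem~\ref{thm:reg_FTL_LG}, but to handle the fact that we only have local Lipschitzness by isolating the few rounds where the relevant coordinate of FTL's forecast can be outside the good region $[\tfrac{1}{T}, 1]$.

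First, I would apply the Be-the-Leader lemma to write
\begin{align*}
\textsc{Reg}_\ell \;\le\; \sum_{t=1}^{T}\bigl[\ell(\p_{t},\y_t) - \ell(\p_{t+1},\y_t)\bigr],
\end{align*}
and then partition the sum by which basis vector the outcome is, using the sets $\cT_i = \{t\in[T] : \y_t = \e_i\}$ and their ``first appearance'' times $t_{i,1} = \min \cT_i$ (as introduced in the proof sketch of Theorem~\ref{thm:regret_FTL_decomposable}). Splitting each $\cT_i$ into $\{t_{i,1}\}$ and $\cT_i \setminus \{t_{i,1}\}$ gives a pure ``first appearance'' portion containing at most $K$ terms and a ``subsequent'' portion that covers every remaining round.

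For the first-appearance terms I would use only the crude bound $\ell(\p_t,\y_t) - \ell(\p_{t+1},\y_t) \le 2$ (loss is in $[-1,1]$), contributing at most $2K$ in total. The crucial step is then to show that on the subsequent terms the local Lipschitz hypothesis applies. Concretely, for $t \in \cT_i \setminus \{t_{i,1}\}$, the $i$-th outcome has already been observed at least once before round $t$, so by the FTL update \eqref{eq:FTL_forecast_y_space}
\begin{align*}
p_{t,i} \;=\; \frac{n_{t-1,i}}{t-1} \;\ge\; \frac{1}{t-1} \;\ge\; \frac{1}{T}, \qquad p_{t+1,i} \;=\; \frac{n_{t,i}}{t} \;\ge\; \frac{2}{t} \;\ge\; \frac{1}{T}.
\end{align*}
Hence the hypothesis gives $\ell(\p_t,\e_i) - \ell(\p_{t+1},\e_i) \le G_{[\tfrac{1}{T},1]} \|\p_t - \p_{t+1}\|$.

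Finally, reusing the same telescoping calculation as in Theorem~\ref{thm:reg_FTL_LG} (write $\p_t - \p_{t+1} = \tfrac{\n_{t-1}}{t(t-1)} - \tfrac{\y_t}{t}$ and use the triangle inequality with $\|\n_{t-1}\| \le t-1$), I get $\|\p_t-\p_{t+1}\| \le c/t$ for a universal constant $c$, and summing via $\sum_{t=2}^{T} 1/t \le 1+\log T$ yields an $O(G_{[\tfrac{1}{T},1]}\log T)$ contribution from the subsequent terms. Adding the $2K$ from the first-appearance terms gives the claimed $2K + G_{[\tfrac{1}{T},1]}(1+\log T)$ bound (up to the absolute constant hidden in $c$). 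The only technical subtlety is step~4, verifying that both $p_{t,i}$ and $p_{t+1,i}$ stay in $[\tfrac{1}{T},1]$ after the first appearance of $\e_i$, which is exactly what lets local Lipschitzness substitute for the global condition used in Theorem~\ref{thm:reg_FTL_LG}.
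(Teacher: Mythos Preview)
Your proposal is correct and follows essentially the same route as the paper: isolate the at most $K$ first-appearance rounds (contributing $2K$), verify that after the first appearance of $\e_i$ both $p_{t,i}$ and $p_{t+1,i}$ lie in $[\tfrac{1}{T},1]$ so the local Lipschitz hypothesis applies, and then reuse the $\|\p_t-\p_{t+1}\|\le 2/t$ estimate from Theorem~\ref{thm:reg_FTL_LG}. The paper's own proof carries the same factor-of-$2$ looseness you flag with your constant $c$.
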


\begin{proof}
    Using the Be-the-Leader lemma, we know that the regret of FTL can be bounded as
    $\textsc{Reg} \le 2 + \sum_{i = 1} ^ {K}\sum_{t \ge 2; \y_{t} = \e_{i}} {\ell(\p_{t}, \e_{i}) - \ell(\p_{t + 1}, \e_{i})}$.
Assume that $\cE_{m} \subseteq \cE$ of size $m \le K$ contains all the outcomes chosen by the adversary over $T$ rounds, i.e., $\y_{t} \in \cE_{m}$ for all $t \in [T]$. For each $\e_i \in \cE_m$, let $k_{i}$ denote the total number of time instants $t$ such that $\y_{t} = \e_{i}$ and let $\cT_{i} \coloneqq \{t_{i, 1}, \dots, t_{i, k_{i}}\}$ denote those time instants. 
Then, we have
\begin{align*}
    \textsc{Reg}_\ell &\le 2 + \sum_{\e_i \in \cE_m} \sum_{t \ge 2; \y_{t} = \e_{i}} {\ell(\p_{t}, \e_{i}) - \ell(\p_{t + 1}, \e_{i})} \\
    &\le 2m +  \sum_{\e_i \in \cE_m} \sum_{t \in \cT_{i} \backslash\{t_{i, 1}\}} {\ell(\p_{t}, \e_{i}) - \ell(\p_{t + 1}, \e_{i})},
\end{align*}
where the last inequality follows by bounding $\ell(\p_{t}, \e_{i}) - \ell(\p_{t + 1}, \e_{i})$ with $2$ for all the $t$'s where an outcome appears for the first time. For each $\e_i \in \cE_m$ and for all $t > t_{i, 1}$, we have $p_{t, i} = \frac{n_{t - 1, i}}{t - 1} \ge \frac{1}{T}$. Therefore, \begin{align*}
    \textsc{Reg}_\ell &\le 2m + G_{[\frac{1}{T}, 1]}\sum_{\e_i \in \cE_m}\sum_{t \in \cT_{i} \backslash \{t_{i, 1}\}} \norm{\p_{t} - \p_{t + 1}} \\
    &= 2m + G_{[\frac{1}{T}, 1]}\sum_{\e_i \in \cE_m} \sum_{t \in \cT_{i} \backslash \{t_{i, 1}\}} \norm{\frac{\n_{t - 1}}{t - 1} - \frac{\n_{t}}{t}}.
\end{align*}
Proceeding similar to the proof of Theorem \ref{thm:reg_FTL_LG}, we can show that \begin{align*}
    \textsc{Reg}_\ell \le 2m + G_{[\frac{1}{T}, 1]}\sum_{\e_i \in \cE_m} \sum_{t \in \cT_{i} \backslash \{t_{i, 1}\}} \frac{1}{t} &= 2m + G_{[\frac{1}{T}, 1]}\bigc{\sum_{t = 1} ^ {T} \frac{1}{t} - \sum_{\e_i \in \cE_m}\frac{1}{t_{i, 1}}} \\ 
    &\le 2K + G_{[\frac{1}{T}, 1]}(1 + \log T),
\end{align*}
where the last inequality follows by dropping the negative term, $m \le K$, and $\sum_{j = 2} ^ {T} \frac{1}{j} \le \int_{1} ^ {T} \frac{1}{z} dz = \log T$. This completes the proof.
\end{proof}

\begin{newexample}\label{ex:power_loss}Consider for instance the loss whose univariate form is $\ell(\p) = -\tilde{c}_K\sum_{i = 1} ^ {K} p_{i} ^ {\alpha}$, where $\alpha \in (1, 2)$, and $\tilde{c}_K > 0$ is a normalizing constant (which only depends on $K$) to ensure that $\ell(\p, \y)$ is bounded in $[-1, 1]$. Clearly, $\ell(\p)$ is concave and thus the induced loss $\ell(\p, \y)$ is proper as per Lemma \ref{lem:characterization_proper_loss}. For the chosen $\ell(\p)$, $\ell(\p, \e_{1})$ is given by \begin{align}\label{eq:poly_power_loss}
    \ell(\p, \e_{1}) &= \ell(\p) + (1 - p_{1}) \nabla_{1}\ell(\p) - \sum_{i = 2} ^ {K} p_{i} \nabla_{i} \ell(\p) = -\tilde{c}_K \bigc{(1 - \alpha)\sum_{i = 1} ^ {K} p_{i} ^ {\alpha} + \alpha p_{1} ^ {\alpha - 1}}.
\end{align}
It is easy to verify that $\nabla_{1} \ell(\p, \e_{1}) = -\tilde{c}_K \cdot \alpha (\alpha - 1)p_{i} ^ {\alpha - 2}(1 - p_{1})$ and $\nabla_{i}\ell(\p, \e_{1}) = -\tilde{c}_K\cdot \alpha(1 - \alpha)p_{i} ^ {\alpha - 1}$ for all $i > 1$. Thus, for a large $T$, $G_{[\frac{1}{T}, 1]}$ is of order $\cO(\alpha(\alpha-1)\tilde{c}_KT^{2 - \alpha})$. This yields a regret bound $\mathcal{O}(K + \alpha(\alpha-1)\tilde{c}_KT^{2 - \alpha}\log T)$, which for $\alpha \in (\frac{3}{2}, 2)$ is better (with respect to $T$) than the $\mathcal{O}(\sqrt{KT})$ bound obtained in Theorem \ref{thm:regret_FTPL_geometric}.
\end{newexample}

\section{Proof of Theorem \ref{thm:lb_proper_Lipschitz}}\label{app:proof_lower_bound_squared_loss}

\logTlb*

As mentioned, the loss we use in this lower bound construction is the squared loss $\ell(\p, \y) \coloneqq \norm{\p - \y} ^ {2}$ (we ignore the constant $1/2$ here for simplicity, which clearly does not affect the proof). 
It is clearly convex in $\p$ for any $\y$. The univariate form of the loss is \begin{align*}\ell(\p) = \mathbb{E}_{\y \sim \p} [\ell(\p, \y)] = \sum_{i = 1} ^ {K} p_{i} \norm{\p - \e_{i}} ^ {2} = \norm{\p} ^ {2} + 1 - 2\sum_{i = 1} ^ {K} p_{i} \ip{\p}{\e_{i}} = 1 - \norm{\p} ^ {2}.
\end{align*}
We now follow the three steps outlined in Section~\ref{sec:Lipschitz}.

\paragraph{Step 1:}
First, it is well-known that for convex losses, deterministic algorithms are as powerful as randomized algorithms.
Formally, let $\randalg$ and $\detalg$ be the class of randomized algorithms and deterministic algorithms respectively for the forecasting problem. Then the following holds:
\begin{newlemma}\label{lem:rand_equal_det}
For any loss $\ell(\p, \y)$ that is convex in $\p\in\Delta_K$ for any $\y\in \cE$, we have 
\[
\inf_{\alg \in \randalg} \sup_{\y_1, \ldots, \y_T \in \cE} \mathbb{E}\bigs{\reg} = \inf_{\alg \in \detalg} \sup_{\y_1, \ldots, \y_T \in \cE} \reg.
\]
\end{newlemma}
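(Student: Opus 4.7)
The plan is to prove the two inequalities separately. The $\le$ direction is immediate since every deterministic algorithm is a (degenerate) randomized algorithm, so $\inf_{\alg \in \randalg}\sup \mathbb{E}[\reg] \le \inf_{\alg \in \detalg}\sup \reg$ trivially.

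For the non-trivial $\ge$ direction, given any randomized algorithm $\alg \in \randalg$, I would construct a deterministic algorithm $\alg^{\det}$ that performs no worse in expectation on every fixed outcome sequence. Specifically, define $\alg^{\det}$ to predict at round $t$ the action
\[
\bar{\p}_t \coloneqq \mathbb{E}_{\alg}[\p_t \mid \y_1, \dots, \y_{t-1}],
\]
where the expectation is taken over $\alg$'s internal randomness conditioned on the observed prefix. Since the adversary is oblivious (the supremum is over fixed sequences $\y_{1:T} \in \cE^T$), $\bar{\p}_t$ is a well-defined deterministic function of $\y_{1:t-1}$, and it lies in $\Delta_K$ because $\Delta_K$ is convex and $\p_t \in \Delta_K$ almost surely.

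Next, I would invoke Jensen's inequality using the convexity of $\ell(\cdot, \y_t)$ in its first argument: for each $t$ and each fixed $\y_{1:T}$,
\[
\ell(\bar{\p}_t, \y_t) = \ell\bigl(\mathbb{E}_{\alg}[\p_t \mid \y_{1:t-1}], \y_t\bigr) \le \mathbb{E}_{\alg}[\ell(\p_t, \y_t) \mid \y_{1:t-1}].
\]
Summing over $t$ and taking total expectation, the cumulative loss of $\alg^{\det}$ is at most $\mathbb{E}_{\alg}\bigl[\sum_{t=1}^T \ell(\p_t, \y_t)\bigr]$. Because the benchmark term $\inf_{\p \in \Delta_K}\sum_{t=1}^T \ell(\p, \y_t)$ depends only on $\y_{1:T}$ and not on the forecaster, subtracting it from both sides yields $\reg(\alg^{\det}) \le \mathbb{E}[\reg(\alg)]$ for every outcome sequence. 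Taking $\sup$ over $\y_{1:T}$ and then $\inf$ over $\alg \in \randalg$ completes the proof.

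There is no substantial obstacle: the three ingredients are convexity of $\Delta_K$ (so the conditional mean is a feasible action), convexity of $\ell$ in the first argument (so Jensen's inequality applies), and the oblivious nature of the adversary (so $\y_t$ is deterministic given $\y_{1:t-1}$, which is precisely what lets Jensen pass the expectation inside $\ell$). The last point is essentially tight: if the adversary were adaptive, then $\y_t$ would depend on the learner's own random draws and the argument would break down, which is consistent with the well-known fact that randomization strictly helps against adaptive adversaries even for convex losses.
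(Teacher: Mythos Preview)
Your proposal is correct and follows essentially the same approach as the paper's proof: the trivial direction via $\detalg\subseteq\randalg$, then constructing a deterministic algorithm by taking the (conditional) expectation of the randomized forecast and applying Jensen's inequality. Your write-up is in fact more careful than the paper's, explicitly conditioning on $\y_{1:t-1}$ and noting why the oblivious-adversary assumption is needed.
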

\begin{proof}
The direction ``$\leq$'' is trivial since $\detalg \subseteq \randalg$.
For the other direction, it suffices to show that for any randomized algorithm $\alg \in \randalg$, one can construct a deterministic algorithm $\alg' \in \detalg$ such that, for any fixed sequence $\y_1, \ldots, \y_T \in \cE$, the expected regret of $\alg$ is lower bounded by the regret of $\alg'$.
To do so, it suffices to let $\alg'$ output the expectation of the randomized output of $\alg$ at each time $t$.
Since the loss if convex, by Jensen's inequality, the loss of $\alg'$ is at most the expect loss of $\alg$.
This finishes the proof.
\end{proof}

Since there is no difference between an oblivious adversary and an adaptive adversary for deterministic algorithms, $\inf_{\alg \in \detalg} \sup_{\y_1, \ldots, \y_T \in \cE} \reg$ can be written as
\begin{align*}
    \val = \left \langle \left \langle \inf_{\p_{t} \in \Delta_{K}} \sup_{\y_{t} \in \cE} \right \rangle \right \rangle_{t = 1} ^ {T} \bigs{\sum_{t = 1} ^ {T} \ell(\p_{t}, \y_{t}) - \inf_{\p \in \Delta_{K}} \sum_{t = 1} ^ {T} \ell(\p, \y_{t})},
\end{align*}
where 
$\left \langle \left \langle \inf_{\p_{t} \in \Delta_{K}} \sup_{\y_{t} \in \cE} \right \rangle \right \rangle_{t = 1} ^ {T}$ is a shorthand for the iterated expression \begin{align*}
    \inf_{\p_{1} \in \Delta_{K}} \sup_{\y_{1} \in \cE} \inf_{\p_{2} \in \Delta_{K}} \sup_{\y_{2} \in \cE} \dots \dots\inf_{\p_{T - 1} \in \Delta_{K}} \sup_{\y_{T - 1} \in \cE} \inf_{\p_{T} \in \Delta_{K}} \sup_{\y_{T} \in \cE}.
\end{align*}
Let $\n \in \mathbb{N}_{\geq}^K$ be a vector such that $n_i$ represents the cumulative number of the outcome $i$,
and $r\in \{0, \ldots, T\}$ represent the number of remaining rounds.
For any $\n$ and $r$ such that $\norm{\n}_1+r = T$,
define $\cV_{\n, r}$ recursively as 
\[
\cV_{\n, r} = \inf_{\p \in \Delta_{K}} \sup_{\y \in \cE} \cV_{\n + \y, r - 1} + \ell(\p, \y)
\]
with $\cV_{\n, 0} = -\inf_{\p \in \Delta_{K}} \sum_{i = 1} ^ {K} n_{i} \ell(\p, \e_{i})$.
It is then clear that $\val$ is simply $\cV_{\bm{0}, T}$.

\paragraph{Step 2:}
We proceed to rewrite and simplify $\cV_{\n, r}$ as follows:
\begin{align}
    \cV_{\n, r} &= \inf_{\p \in \Delta_{K}} \sup_{\y \in \cE} \cV_{\n + \y, r - 1} + \ell(\p, \y) \label{eq:recurrence_deterministic} \\
    &=\inf_{\p \in \Delta_{K}} \sup_{\q \in \Delta_{K}} \mathbb{E}_{\y \sim \q} \bigs{\cV_{\n + \y, r - 1} + \ell(\p, \y)} \nn \\
    &=\sup_{\q \in \Delta_{K}}\inf_{\p \in \Delta_{K}}  \mathbb{E}_{\y \sim \q} \bigs{\cV_{\n + \y, r - 1} + \ell(\p, \y)} \nn \\
    &=\sup_{\q \in \Delta_{K}}\inf_{\p \in \Delta_{K}}   \sum_{i = 1} ^ {K} q_{i} \cV_{\n + \e_{i}, r - 1} + q_{i} \ell(\p, \e_{i}) \nn \\
    &= \sup_{\q \in \Delta_{K}}\inf_{\p \in \Delta_{K}}  \sum_{i = 1} ^ {K} q_{i} \cV_{\n + \e_{i}, r - 1} + q_{i}\bigc{\ell(\p) + \ip{\nabla \ell(\p)}{\e_{i} - \p}} \nn \\
    &= \sup_{\q \in \Delta_{K}}\inf_{\p \in \Delta_{K}}  \sum_{i = 1} ^ {K} q_{i} \cV_{\n + \e_{i}, r - 1} + \ell(\p) + \ip{\nabla \ell(\p)}{\q - \p} \nn \\
    &= \sup_{\q \in \Delta_{K}} \sum_{i = 1} ^ {K} q_{i} \cV_{\n + \e_{i}, r - 1} + \ell(\q), \nn 
\end{align}
where the second equality follows since $\mathbb{E}_{\y \sim \q} \bigs{\cV_{\n + \y, r - 1} + \ell(\p, \y)}$ is a linear function in $\q$, and the infimum/supremum of a linear function is attained at the boundary; 
 zSthe third equality follows from the minimax theorem as $\mathbb{E}_{\y \sim \q} \bigs{\cV_{\n + \y, r - 1} + \ell(\p, \y)}$ is convex in $\p$ and concave in $\q$;
the final equality is because $\ell(\p) + \ip{\nabla \ell(\p)}{\q - \p} \ge \ell(\q)$ which follows from the concavity of the univariate form of $\ell$, and equality is attained at $\p = \q$.

Throughout the subsequent discussion, we consider $K = 2$ and use the concrete form of the squared loss. Writing $\cV_{(n_{1}, n_{2}), r}$ as $\cV_{n_{1}, n_{2}, r}$ for simplicity,
we simplify the recurrence to \begin{align*}
    \cV_{n_{1}, n_{2}, r} &= \sup_{\q \in \Delta_{2}} q_{1} \cV_{n_{1} + 1, n_{2}, r - 1} + q_{2} \cV_{n_{1}, n_{2} + 1, r - 1} + 1 - q_{1} ^ {2} - q_{2} ^ {2} \\
    &= \sup_{q \in [0, 1]} \cV_{2} + (\cV_{1} - \cV_{2}) q - 2(q ^ {2} - q), 
\end{align*}
where $\cV_{1}, \cV_{2}$ is a shorthand for $\cV_{n_{1} + 1, n_{2}, r - 1}, \cV_{n_{1}, n_{2} + 1, r - 1}$ respectively. It is straightforward to show via the KKT conditions that \begin{align}\tag{$\cR$}\label{eq:recurrence}
    \sup_{q \in [0, 1]} \cV_{2} + (\cV_{1} - \cV_{2}) q - 2(q ^ {2} - q) = \begin{cases}
        \cV_{2} & \text{\,if\,} \cV_{1} - \cV_{2} < -2, \\
        \frac{(\cV_{1} - \cV_{2}) ^ {2}}{8} + \frac{\cV_{1} + \cV_{2}}{2} + \frac{1}{2} & \text{\,if\,} -2 \le \cV_{1} - \cV_{2} \le 2, \\
        \cV_{1} & \text{\,if\,} \cV_{1} - \cV_{2} > 2.
    \end{cases}
\end{align}
Thus, \eqref{eq:recurrence} (with $\cV_{1}, \cV_{2}$ replaced by $\cV_{n_{1} + 1, n_{2}, r - 1}, \cV_{n_{1}, n_{2} + 1, r - 1}$) represents the recurrence we wish to solve for to obtain $\cV_{0, 0, T}$. The base case of \eqref{eq:recurrence} is the following: \begin{align}\label{eq:base_case}
    \cV_{n, T - n, 0} &= -T\ell\bigc{\bigs{\frac{n}{T}, 1 - \frac{n}{T}}} = 2T \cdot \frac{n}{T} \cdot \bigc{\frac{n}{T} - 1}
\end{align}
which holds for all $n$ such that $0 \le n \le T$.
In the next lemma we show that $-2 \le \cV_{1} - \cV_{2} \le 2$, therefore, it is sufficient to solve the recursion \eqref{eq:recurrence} corresponding to this case only.
\begin{newlemma}\label{lem:simplify_recurrence}
    The recurrence \eqref{eq:recurrence} is also equal to the following: \begin{align}\tag{$\cR'$}\label{eq:recurrence_modified}
        \cV_{n_{1}, n_{2}, r} = \frac{(\cV_{n_{1} + 1, n_{2}, r - 1} - \cV_{n_{1}, n_{2} + 1, r - 1}) ^ {2}}{8} + \frac{\cV_{n_{1} + 1, n_{2}, r - 1} + \cV_{n_{1}, n_{2} + 1, r - 1}}{2} + \frac{1}{2}.
    \end{align}
\end{newlemma}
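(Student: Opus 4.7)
The plan is to prove the equivalence by induction on $r$, which amounts to showing that $-2 \le \cV_{n_1+1, n_2, r-1} - \cV_{n_1, n_2+1, r-1} \le 2$ for every valid triple $(n_1, n_2, r)$ with $r \ge 1$. I would first observe that the naive hypothesis $|\cV_1 - \cV_2| \le 2$ by itself does not propagate through \eqref{eq:recurrence_modified}: writing $\alpha = \cV_{n_1+2, n_2, r-1} - \cV_{n_1+1, n_2+1, r-1}$ and $\beta = \cV_{n_1+1, n_2+1, r-1} - \cV_{n_1, n_2+2, r-1}$, a direct calculation gives
\[
\cV_{n_1+1, n_2, r} - \cV_{n_1, n_2+1, r} = \frac{(\alpha+\beta)(\alpha-\beta+4)}{8},
\]
which from $|\alpha|,|\beta| \le 2$ only yields the bound $4$, not $2$. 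The resolution is to strengthen the hypothesis so as to track the linear-in-$(n_1-n_2)$ structure of the difference.

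Concretely, I would jointly prove by induction on $r$ two statements: (a) there exist real scalars $u_r, v_r$, depending only on $r$ and $T$, with $u_0 = v_0 = 0$ and satisfying $u_{r+1} = u_r + (u_r + 1/T)^2$ and $v_{r+1} = u_r/2 + v_r + (r+1)/T - 1/2$, such that
\[
\cV_{n_1, n_2, r} = \frac{(n_1 - n_2)^2}{2}\, u_r - \frac{2 n_1 n_2}{T} + v_r \quad \text{for all } (n_1, n_2) \text{ with } n_1 + n_2 = T - r;
\]
and (b) $u_r + 1/T \le 1/(T-r)$ for all $r \in \{0, \ldots, T-1\}$. Given (a) and (b), substituting into the difference and using $(x+1)^2 - (x-1)^2 = 4x$ gives $\cV_{n_1+1, n_2, r} - \cV_{n_1, n_2+1, r} = 2(n_1 - n_2)(u_r + 1/T)$; combined with $|n_1 - n_2| \le T - r$, this bounds the difference by $2$, exactly what the lemma requires.

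The base case $r = 0$ is immediate from \eqref{eq:base_case}: $\cV_{n, T-n, 0} = -2n(T-n)/T$ matches (a) with $u_0 = v_0 = 0$, while (b) holds with equality. For the inductive step, (b) at level $r$ (applied to adjacent pairs with $n_1 + n_2 = T - r - 1$) certifies that the middle case of \eqref{eq:recurrence} is active, so \eqref{eq:recurrence_modified} is valid for computing $\cV_{n_1, n_2, r+1}$; plugging the form from (a) into \eqref{eq:recurrence_modified} and using $n_1 + n_2 + (r+1) = T$ to fold the linear-in-$n$ terms into the constant $v_{r+1}$ yields the claimed form at level $r+1$. For (b), the clean factorization
\[
u_{r+1} + \frac{1}{T} = \left(u_r + \frac{1}{T}\right)\left(1 + u_r + \frac{1}{T}\right) \le \frac{1}{T-r}\left(1 + \frac{1}{T-r}\right) = \frac{T-r+1}{(T-r)^2} \le \frac{1}{T-r-1}
\]
closes the induction, with the last inequality reducing to $(T-r)^2 - 1 \le (T-r)^2$. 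The main obstacle is therefore purely structural: the stated bound cannot be proved in isolation, and one must simultaneously establish the closed form of $\cV_{n_1, n_2, r}$ together with the sharp estimate on $u_r + 1/T$; at that point, Lemma~\ref{lem:simplify_recurrence} and the forthcoming Lemma~\ref{lem:simplify_recurrence_p_q} essentially fall out together.
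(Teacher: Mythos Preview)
Your proof is correct, but it takes a substantially heavier route than necessary. Your starting claim—that the naive hypothesis $|\cV_1-\cV_2|\le 2$ ``does not propagate'' through the recursion—is in fact wrong. Your own formula
\[
\cV_{n_1+1,n_2,r}-\cV_{n_1,n_2+1,r}=\frac{(\alpha+\beta)(\alpha-\beta+4)}{8}
\]
can be rewritten as $\frac{(\alpha+2)^2-(\beta-2)^2}{8}$, and from $|\alpha|,|\beta|\le 2$ one gets $(\alpha+2)^2\in[0,16]$ and $(\beta-2)^2\in[0,16]$, so the difference lies in $[-2,2]$. This is exactly the paper's argument: a direct induction on $r$ using only the bound $|\cV_1-\cV_2|\le 2$, closed via this difference-of-squares factorization, with the base case checked from \eqref{eq:base_case}.

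Your approach instead strengthens the induction to carry the full closed form $\cV_{n_1,n_2,r}=\tfrac{(n_1-n_2)^2}{2}u_r-\tfrac{2n_1n_2}{T}+v_r$ together with the sharp estimate $u_r+1/T\le 1/(T-r)$. This is correct and has the pleasant side effect of establishing Lemma~\ref{lem:simplify_recurrence_p_q} and the upper bound of Lemma~\ref{lem:upper_bound_recurrence} in one pass, so nothing is wasted. But as a proof of the present lemma in isolation it is overkill: the paper decouples the three statements and handles each with a short self-contained induction, the first of which needs only the algebraic identity above.
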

\begin{proof}
It suffices to show that the condition $\abs{\cV_{n_{1}+1, n_{2}, r-1} - \cV_{n_{1}, n_{2} + 1, r-1}} \le 2$ always holds.
Rewriting $n_{2} + 1$ as $n_2$ and $r-1$ as $r$, this is equivalent to showing
    \begin{align}\label{eq:to_prove}
        \abs{\cV_{n_{1}, n_{2}, r} - \cV_{n_{1} + 1, n_{2} - 1, r}} \le 2  
    \end{align}
    for all $n_{1}, n_{2}, r$ such that $n_{1} + n_{2} + r = T$, and the arguments in \eqref{eq:to_prove} are well defined, i.e., $0 \le n_{1} \le T - r - 1,$ and $1 \le n_{2} \le T - r$.
    We prove this by an induction on $r$.
    
    \textbf{\textit{Base Case:}} This corresponds to $r = 0$. For $0 \le n \le T - 1$, $|\cV_{n, T - n, 0} - \cV_{n + 1, T - n - 1, 0}|$ is equal to 
    \begin{align*}
      2T \cdot \abs{\frac{n}{T} \cdot \bigc{\frac{n}{T} - 1} - \frac{n + 1}{T} \cdot \bigc{\frac{n + 1}{T} - 1}} = 2\cdot \abs{1 - \frac{2n + 1}{T}} \le 2 \cdot \bigc{1 - \frac{1}{T}} \le 2, 
    \end{align*}
    which verifies the base case.
    
    \textbf{\textit{Induction Hypothesis:}} Fix a $k \in \{1, \ldots, T\}$; assume that \eqref{eq:to_prove} holds for $r = k - 1$.
    
    \textbf{\textit{Induction Step:}} We show that \eqref{eq:to_prove} holds for $r = k$. It follows from \eqref{eq:recurrence} and the induction hypothesis that \begin{align*}
        \cV_{n_{1}, n_{2}, k} = \frac{1}{8} \bigc{\cV_{n_{1} + 1, n_{2}, k - 1} - \cV_{n_{1}, n_{2} + 1, k - 1}} ^ {2} + \frac{\cV_{n_{1} + 1, n_{2}, k - 1} + \cV_{n_{1}, n_{2} + 1, k - 1}}{2} + \frac{1}{2}, \\
        \cV_{n_{1} + 1, n_{2} - 1, k} = \frac{1}{8} \bigc{\cV_{n_{1} + 2, n_{2} - 1, k - 1} - \cV_{n_{1} + 1, n_{2}, k - 1}} ^ {2} + \frac{\cV_{n_{1} + 2, n_{2} - 1, k - 1} + \cV_{n_{1} + 1, n_{2}, k - 1}}{2} + \frac{1}{2}.
    \end{align*}
    Let $\alpha_{1} \coloneqq \cV_{n_{1} + 2, n_{2} - 1, k - 1}, \alpha_{2} \coloneqq \cV_{n_{1} + 1, n_{2}, k - 1}, \alpha_{3} \coloneqq \cV_{n_{1}, n_{2} + 1, k - 1}, \Delta \coloneqq \cV_{n_{1} + 1, n_{2} - 1, k} - \cV_{n_{1}, n_{2}, k}$. Subtracting the equations above and expressing in terms of the defined quantities, we obtain \begin{align*}
            \Delta &= \frac{\alpha_{1} + \alpha_{2}}{2} + \frac{(\alpha_{1} - \alpha_{2}) ^ {2}}{8} - \frac{\alpha_{2} + \alpha_{3}}{2} - \frac{(\alpha_{2} - \alpha_{3}) ^ {2}}{8} \\
            &= \frac{\alpha_{1} - \alpha_{3}}{2} + \frac{(\alpha_{1} - \alpha_{3}) \cdot (\alpha_{1} + \alpha_{3} - 2\alpha_{2})}{8} \\
            &= \frac{x + y}{2} + \frac{(x + y) \cdot (x - y)}{8} \\
            &= \frac{(x + 2) ^ {2} - (y - 2) ^ {2}}{8}, 
    \end{align*}
    where we have defined $x \coloneqq \alpha_{1} - \alpha_{2}, y \coloneqq \alpha_{2} - \alpha_{3}$.  It follows from the induction hypothesis that $\abs{x} \le 2, \abs{y} \le 2$, therefore $\abs{\Delta} \le 2$. Summarizing, we have shown that $\abs{ \cV_{n_{1} + 1, n_{2} - 1, k} - \cV_{n_{1}, n_{2}, k}} \le 2$ which completes the proof via induction.
\end{proof}

\paragraph{Step 3:} Since $n_{1} + n_{2} + r = T$, we may express $\cV_{n_{1}, n_{2}, r}$ in terms of $n_{1}$, $n_{2}$, and $T$. In the next lemma, we show that $\cV_{n_{1}, n_{2}, r}$ exhibits a very special structure when expressed in this manner; this allows us to reduce $\cV_{0, 0, T}$ to solving a one dimensional recurrence. 
\begin{newlemma}\label{lem:simplify_recurrence_p_q}
    For all $n_{1}, n_{2}$ such that $n_{1}, n_{2} \ge 0$, and $n_{1} + n_{2} = T - r$, it holds that \begin{align}\label{eq:V_p_q}
        \cV_{n_{1}, n_{2}, r} = \frac{(n_{1} - n_{2}) ^ {2}}{2} \cdot u_{r} - \frac{2n_{1}n_{2}}{T} + v_{r},
    \end{align}
    where $\{u_{r}\}_{r = 0} ^ {T}, \{v_{r}\}_{r = 0} ^ {T}$ are sequences that depend only on $T$, and are defined by the following recurrences: \begin{align}\label{eq:recurrence_p_q}
        u_{r + 1} = u_{r} + \bigc{u_{r} + \frac{1}{T}} ^ {2}, \quad v_{r + 1} = \frac{u_{r}}{2} + v_{r} + \frac{r + 1}{T} - \frac{1}{2}
    \end{align}
    for all $0 \le r \le T - 1$, with $u_{0} = v_{0} = 0$.
\end{newlemma}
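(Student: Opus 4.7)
The natural approach is induction on $r$, using the simplified recurrence $(\cR')$ established in Lemma~\ref{lem:simplify_recurrence}. Throughout, one uses the constraint $n_1 + n_2 = T - r$ to turn ``extra'' linear terms in $n_1, n_2$ into pure functions of $r$ (and $T$), which is what makes the claim possible.

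\textbf{Base case ($r=0$).} Here $n_1 + n_2 = T$, and by property \eqref{eq:mean_forecast_is_benchmark}, the infimum defining $\cV_{n_1, n_2, 0}$ is attained at $\p = (n_1/T, n_2/T)$. A direct calculation (already done in \eqref{eq:base_case}) gives $\cV_{n_1, n_2, 0} = -2n_1 n_2 / T$, which matches $\frac{(n_1-n_2)^2}{2}\cdot u_0 - \frac{2 n_1 n_2}{T} + v_0$ with $u_0 = v_0 = 0$.

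\textbf{Inductive step.} Assume \eqref{eq:V_p_q} at level $r-1$. Write $A \coloneqq \cV_{n_1+1, n_2, r-1}$ and $B \coloneqq \cV_{n_1, n_2+1, r-1}$; by $(\cR')$,
\begin{align*}
\cV_{n_1, n_2, r} = \tfrac{(A-B)^2}{8} + \tfrac{A+B}{2} + \tfrac{1}{2}.
\end{align*}
Plugging in the inductive formula, the squared and linear terms in $(n_1+1-n_2)$ and $(n_1-n_2-1)$ telescope nicely:
\begin{align*}
A - B &= 2(n_1 - n_2)\bigl(u_{r-1} + \tfrac{1}{T}\bigr), \\
A + B &= u_{r-1}(n_1 - n_2)^2 + u_{r-1} - \tfrac{4 n_1 n_2}{T} - \tfrac{2(n_1 + n_2)}{T} + 2 v_{r-1}.
\end{align*}
The first identity uses $(x+1)^2 - (x-1)^2 = 4x$, and the second uses $(x+1)^2 + (x-1)^2 = 2x^2 + 2$ together with $n_2(n_1+1) + n_1(n_2+1) = 2 n_1 n_2 + n_1 + n_2$. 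Substituting $n_1 + n_2 = T - r$ rewrites the stray linear term as $-2(T-r)/T$, which contributes a clean $-2 + 2r/T$.

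\textbf{Assembly.} Squaring $A-B$ and dividing by $8$ gives $\frac{(n_1-n_2)^2}{2}(u_{r-1} + 1/T)^2$; halving $A+B$ produces $\frac{u_{r-1}(n_1-n_2)^2}{2} - \frac{2n_1 n_2}{T} + \frac{u_{r-1}}{2} + v_{r-1} + \frac{r}{T} - 1$. Adding these together with the constant $1/2$ yields
\begin{align*}
\cV_{n_1, n_2, r} = \tfrac{(n_1-n_2)^2}{2}\bigl[u_{r-1} + (u_{r-1} + \tfrac{1}{T})^2\bigr] - \tfrac{2 n_1 n_2}{T} + \Bigl[\tfrac{u_{r-1}}{2} + v_{r-1} + \tfrac{r}{T} - \tfrac{1}{2}\Bigr],
\end{align*}
which is exactly \eqref{eq:V_p_q} at level $r$ with the bracketed quantities recognized as $u_r$ and $v_r$ from \eqref{eq:recurrence_p_q}. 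This closes the induction.

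\textbf{Expected obstacle.} The proof is essentially algebraic, and the only real risk is a bookkeeping mistake when tracking the six or so terms produced by expanding $A$ and $B$. The key structural insight that makes the ansatz work is that the cross term $n_1 n_2$ only appears with the constant coefficient $-2/T$ at every level: the inductive step creates a new contribution $-4n_1 n_2/T$ from $A+B$, which is halved to exactly cancel against the previous $-2n_1 n_2/T$ contributions coming from $A$ and $B$ separately. All remaining $n_1, n_2$ dependence collapses into $(n_1 - n_2)^2$ or into the affine combination $n_1 + n_2 = T - r$, which is precisely what enables $u_r$ and $v_r$ to depend only on $r$ and $T$.
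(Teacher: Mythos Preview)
Your proof is correct and follows essentially the same approach as the paper: induction on $r$ via the simplified recurrence $(\cR')$, computing $A-B$ and $A+B$ from the inductive hypothesis, and using $n_1+n_2 = T-r$ to absorb the leftover linear term into $v_r$. The only differences are cosmetic (you index the step as $r-1 \to r$ while the paper uses $k \to k+1$, and you name the intermediates $A,B$ rather than $\delta,\sigma$).
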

\begin{proof}
    Similar to Lemma \ref{lem:simplify_recurrence}, the proof shall follow by an induction on $r$.

    \textbf{\textit{Base Case:}} For $r = 0$, it follows from \eqref{eq:base_case} that $\cV_{n_{1}, n_{2}, 0} = -\frac{2n_{1}n_{2}}{T}$, which is consistent with \eqref{eq:V_p_q} and $u_{0} = v_{0} = 0$.
    
    \textbf{\textit{Induction Hypothesis:}} Fix a $k \in \{0, \ldots, T - 1\}$. Assume that \eqref{eq:V_p_q} holds for $r = k$.

    \textbf{\textit{Induction Step:}}  We show that \eqref{eq:V_p_q} holds for $r = k + 1$. From Lemma \ref{lem:simplify_recurrence}, we have \begin{align}\label{eq:V_n_1_n_2_k_1}
        \cV_{n_{1}, n_{2}, k + 1} = \frac{(\cV_{n_{1} + 1, n_{2}, k} - \cV_{n_{1}, n_{2} + 1, k}) ^ {2}}{8} + \frac{\cV_{n_{1} + 1, n_{2}, k} + \cV_{n_{1}, n_{2} + 1, k}}{2} + \frac{1}{2}.
    \end{align}
    It follows from the induction hypothesis that \begin{align*}
        \cV_{n_{1} + 1, n_{2}, k} &= \frac{(n_{1} + 1 - n_{2}) ^ {2}}{2} \cdot u_{k} - \frac{2(n_{1} + 1) \cdot n_{2}}{T} + v_{k}, \\
        \cV_{n_{1}, n_{2} + 1, k} &= \frac{(n_{1} - n_{2} - 1) ^ {2}}{2} \cdot u_{k} - \frac{2n_{1} \cdot (n_{2} + 1)}{T} + v_{k}.
    \end{align*}
    Define $\delta \coloneqq \cV_{n_{1} + 1, n_{2}, k} - \cV_{n_{1}, n_{2} + 1, k}$ and $\sigma \coloneqq \cV_{n_{1} + 1, n_{2}, k} + \cV_{n_{1}, n_{2} + 1, k}$.
    Subtracting the equations above, we obtain 
    \begin{align*}
        \delta &= \frac{(n_{1} + 1 - n_{2}) ^ {2} - (n_{1} - n_{2} - 1) ^ {2}}{2} \cdot u_{k} + \frac{2n_{1} \cdot  (n_{2} + 1) - 2(n_{1} + 1) \cdot n_{2}}{T} \\
        &= 2 (n_{1} - n_{2}) \cdot \bigc{u_{k} + \frac{1}{T}}.
    \end{align*}
    Adding the equations above, we obtain \begin{align*}
        \sigma &= \frac{(n_{1} + 1 - n_{2}) ^ {2} + (n_{1} - n_{2} - 1) ^ {2}}{2} \cdot u_{k} - \frac{2n_{1} \cdot  (n_{2} + 1) + 2(n_{1} + 1) \cdot n_{2}}{T} + 2 v_{k} \\
        &= \bigc{(n_{1} - n_{2}) ^ {2} + 1} \cdot u_{k} - \frac{4n_{1}n_{2}}{T} - \frac{2(n_{1} + n_{2})}{T} + 2v_{k} \\
        &= (n_{1} - n_{2}) ^ {2} \cdot u_{k} - \frac{4n_{1}n_{2}}{T} + u_{k} + 2v_{k} + \frac{2(r + 1)}{T} - 2,
    \end{align*}
    where the last equality follows since $n_{1} + n_{2} = T - k - 1$. Expressing $\cV_{n_{1}, n_{2}, k + 1}$ in terms of $\delta, \sigma$, we have $\cV_{n_{1}, n_{2}, k + 1} = \frac{\delta ^ 2}{8} + \frac{\sigma}{2} + \frac{1}{2}$. Substituting $\delta, \sigma$, we obtain\begin{align*}
        \cV_{n_{1}, n_{2}, k + 1} &= \frac{(n_{1} - n_{2}) ^ {2}}{2} \cdot \bigc{u_{k} + \bigc{u_{k} + \frac{1}{T}} ^ {2}} - \frac{2n_{1}n_{2}}{T} + \frac{u_{k}}{2} + v_{k} + \frac{(r + 1)}{T} - \frac{1}{2} \\
        &= \frac{(n_{1} - n_{2}) ^ {2}}{2} \cdot u_{k + 1} - \frac{2n_{1}n_{2}}{T} + v_{k + 1},
    \end{align*}
    which completes the induction step. The proof is hence complete by induction.
\end{proof}
Since $\val = \cV_{0, 0, T} = v_{T}$, it only remains to bound $v_{T}$. Note that the recursion describing $v$ is coupled with $u$. However, since we only want to bound $v_{T}$, we can sum the recursion describing $v$ to obtain \begin{align*}
    \sum_{r = 0} ^ {T - 1} (v_{r + 1} - v_{r}) = \frac{1}{2} \cdot \sum_{r = 0} ^ {T - 1} u_{r} + \frac{1}{T} \cdot \sum_{r = 0} ^ {T - 1}(r + 1) - \frac{T}{2} = \frac{1}{2} \cdot \bigc{\sum_{r = 0} ^ {T - 1} u_{r} + 1}.
\end{align*}
Moreover, since the summation with respect to $v$ telescopes and $v_{0} = 0$, we have \begin{align*}
    v_{T} = \frac{1}{2} \cdot \bigc{\sum_{r = 0} ^ {T - 1} u_{r} + 1}.
\end{align*}
Therefore, it remains to bound $\sum_{r = 0} ^ {T - 1} u_{r}$. Define $a_{r} \coloneqq u_{r} + \frac{1}{T}$ so that $v_T = \frac{1}{2}\sum_{r=0}^{T-1} a_r$.
The recurrence \eqref{eq:recurrence_p_q} describing $u$ reduces to $a_{r + 1} = a_{r} + a_{r} ^ {2}$ for all $0 \le r \le T - 1$, with $a_{0} = \frac{1}{T}$. In the next result, we obtain bounds on $a_{r}$.
\begin{newlemma}\label{lem:upper_bound_recurrence}
    For all $0 \le r \le T - 1$, it holds that $a_{r} \le \frac{1}{T - r}$.
\end{newlemma}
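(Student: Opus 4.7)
The plan is to prove the bound $a_r \leq \frac{1}{T-r}$ by a straightforward induction on $r$, exploiting the monotonicity of the map $x \mapsto x + x^2$ on $[0,\infty)$.

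First, I would verify the base case: by definition $a_0 = \frac{1}{T} = \frac{1}{T-0}$, so the claim holds (with equality) at $r=0$. A preliminary observation is that $a_r \geq 0$ for all valid $r$, which follows immediately by a trivial induction since $a_0 > 0$ and the recurrence $a_{r+1} = a_r + a_r^2$ preserves nonnegativity. This nonnegativity lets me use monotonicity of $f(x) = x + x^2$ on $[0,\infty)$ in the inductive step.

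For the inductive step, assume $a_r \leq \frac{1}{T-r}$ for some $0 \leq r \leq T-2$ (so that $T-r-1 \geq 1$). By monotonicity of $f$ and the inductive hypothesis,
\begin{align*}
a_{r+1} \;=\; a_r + a_r^2 \;\leq\; \frac{1}{T-r} + \frac{1}{(T-r)^2} \;=\; \frac{T-r+1}{(T-r)^2}.
\end{align*}
It then suffices to check that $\frac{T-r+1}{(T-r)^2} \leq \frac{1}{T-r-1}$. Since $T-r-1 \geq 1 > 0$, cross-multiplying reduces this to $(T-r+1)(T-r-1) \leq (T-r)^2$, i.e., $(T-r)^2 - 1 \leq (T-r)^2$, which is immediate. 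This closes the induction and yields $a_r \leq \frac{1}{T-r}$ for every $0 \leq r \leq T-1$.

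There is no substantive obstacle here; the only subtlety is to make sure $r$ stays in the regime where $T-r-1$ is a positive integer so the cross-multiplication is legitimate, and to verify nonnegativity of $a_r$ before invoking monotonicity. The lemma is really just a clean algebraic telescoping fact, and I expect it to serve as the upper half of a sandwich bound on $\sum_{r=0}^{T-1} a_r$ that the companion Lemma~\ref{lem:lower_bound_recurrence} completes, ultimately delivering the $v_T = \Omega(\log T)$ estimate needed for Theorem~\ref{thm:lb_proper_Lipschitz}.
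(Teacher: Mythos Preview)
Your proof is correct and follows essentially the same induction as the paper's: verify $a_0=\tfrac{1}{T}$, then use $a_{r+1}=a_r+a_r^2\le \tfrac{1}{T-r}+\tfrac{1}{(T-r)^2}=\tfrac{T-r+1}{(T-r)^2}\le \tfrac{1}{T-r-1}$. Your explicit remark on nonnegativity of $a_r$ to justify the monotone substitution is a small added nicety, but otherwise the arguments coincide.
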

\begin{proof}
    As usual, the proof shall follow by an induction on $r$. Since $a_{0} = \frac{1}{T}$, the base case is trivially satisfied. Fix a $k \in \{0, \ldots, T - 2\}$, and assume that $a_{k} \le \frac{1}{T - k}$. Since $a_{k + 1} = a_{k} + a_{k} ^ {2}$, we have \begin{align*}
        a_{k + 1} \le \frac{1}{(T - k) ^ {2}} + \frac{1}{T - k} = \frac{T - k + 1}{(T - k) ^ {2}} = \frac{(T - k) ^ {2} - 1}{(T - k) ^ {2}} \cdot \frac{1}{T - k - 1} \le \frac{1}{T - k - 1}.
    \end{align*}
    This completes the induction step.
\end{proof}

\begin{newlemma}\label{lem:lower_bound_recurrence}
    For all $0 \le r \le T - 1$, it holds that $a_{r} \ge \frac{1}{T - r + \log T }$. Furthermore, \begin{align*}
        \sum_{r = 0} ^ {T - 1} a_{r} \ge \log \bigc{\frac{T}{\log T + 1} + 1} = \Omega(\log T).
    \end{align*}
\end{newlemma}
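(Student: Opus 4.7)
My plan is to linearize the nonlinear recurrence $a_{r+1} = a_r + a_r^2$ by passing to reciprocals. Setting $b_r \coloneqq 1/a_r$ (so $b_0 = T$) and using the partial fraction identity $\frac{1}{a(1+a)} = \frac{1}{a} - \frac{1}{1+a}$, I would derive the exact relation $b_{r+1} = b_r - \frac{1}{1+a_r} = b_r - \frac{b_r}{b_r+1}$. Rewriting this as $b_r - b_{r+1} = 1 - \frac{1}{b_r+1}$ and telescoping from $0$ to $r-1$ yields the key identity
\begin{align*}
b_r = T - r + \sum_{s=0}^{r-1} \frac{1}{b_s + 1}.
\end{align*}
This reduces the problem to controlling the tail sum, which upper-bounds $b_r$ and hence lower-bounds $a_r$.

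Next, I would feed in the upper bound from Lemma \ref{lem:upper_bound_recurrence}, equivalent to $b_s \geq T-s$, to bound the tail:
\begin{align*}
\sum_{s=0}^{r-1} \frac{1}{b_s+1} \;\leq\; \sum_{s=0}^{r-1} \frac{1}{T-s+1} \;=\; \sum_{j=T-r+2}^{T+1} \frac{1}{j} \;\leq\; \int_{T-r+1}^{T+1} \frac{dx}{x} \;=\; \log\frac{T+1}{T-r+1}.
\end{align*}
For $0 \leq r \leq T-1$ we have $T-r+1 \geq 2$, so the bound is at most $\log\frac{T+1}{2} \leq \log T$ (for $T \geq 1$). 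Plugging back into the telescoping identity gives $b_r \leq T-r+\log T$, equivalently the pointwise claim $a_r \geq \frac{1}{T-r+\log T}$.

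The sum estimate then follows by a routine integral comparison. With the substitution $k = T-r$,
\begin{align*}
\sum_{r=0}^{T-1} a_r \;\geq\; \sum_{r=0}^{T-1} \frac{1}{T-r+\log T} \;=\; \sum_{k=1}^{T} \frac{1}{k+\log T} \;\geq\; \int_1^{T+1} \frac{dx}{x+\log T} \;=\; \log\frac{T+1+\log T}{1+\log T},
\end{align*}
and rewriting the final quantity as $\log\!\left(1 + \frac{T}{\log T + 1}\right)$ gives the stated expression, which is $\log T - \log\log T + O(1) = \Omega(\log T)$.

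The main obstacle is keeping the tail-sum estimate sharp enough to deliver precisely $\log\!\left(\frac{T}{\log T+1}+1\right)$ rather than a weaker $\Omega(\log T)$ bound. The crucial details are the $+1$ shifts in the summand $\frac{1}{b_s+1}$ and in the summation range starting at $j=T-r+2$; dropping either would inflate the argument of the outer logarithm by a constant factor and could spoil the claimed form. Beyond this bookkeeping, everything is driven by the reciprocal substitution that linearizes the recurrence and the two standard integral comparisons (one upper, one lower) against $1/x$.
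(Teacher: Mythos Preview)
Your proof is correct and shares the paper's core strategy of establishing $1/a_r \le T-r+\log T$, but your execution is more direct. The paper introduces the auxiliary variable $b_r$ via $a_r = \frac{1}{T-r+b_r}$, derives the \emph{implicit} recurrence $b_{r+1} = b_r + \frac{1+b_r-b_{r+1}}{T-r+b_r}$, and then proves $b_r \le \sum_{i=0}^{r-1}\frac{1}{T-i}$ by induction together with a case split on the sign of $b_{k+1}-b_k$. You instead work with the reciprocal $b_r \coloneqq 1/a_r$ itself: the partial-fraction identity converts $a_{r+1}=a_r(1+a_r)$ into the \emph{explicit} one-step relation $b_r-b_{r+1}=1-\frac{1}{b_r+1}$, so a single telescoping sum yields the closed-form identity $b_r = T-r+\sum_{s<r}\frac{1}{b_s+1}$ with no induction or case analysis needed. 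Feeding in Lemma~\ref{lem:upper_bound_recurrence} (equivalently $b_s\ge T-s$) then gives the same $\log T$ control of the correction term. The integral comparison for $\sum_r a_r$ is identical in both arguments and lands on exactly the same expression $\log\!\bigl(1+\frac{T}{\log T+1}\bigr)$.
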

\begin{proof}
    According to Lemma~\ref{lem:upper_bound_recurrence}, we can write $a_{r}$ as $\frac{1}{T - r + b_{r}}$ for some non-negative sequence $\{b_{r}\}$ with $b_{0} = 0$. 
    We next obtain the recurrence describing $\{b_r\}$. In particular, since $a_{r + 1} = a_{r}(a_{r} + 1)$, we have \begin{align*}
        \frac{1}{T - (r + 1) + b_{r + 1}} = \frac{1}{T - r + b_{r}} \cdot \bigc{\frac{1}{T - r + b_{r}} + 1},
    \end{align*}
    which on simplifying (by multiplying both sides with $(T - (r + 1) + b_{r + 1})(T - r + b_{r})$) yields \begin{align}\label{eq:recurrence_b}
        b_{r + 1}  
        = b_{r} + \frac{1 + b_{r} - b_{r + 1}}{T - r + b_{r}}.
    \end{align}
    Next, we shall show by an induction on $r$ that $b_{r} \le \sum_{i = 0} ^ {r - 1} \frac{1}{T - i}$ for all $0 \le r \le T - 1$. Since $b_{0} = 0$, the base case is trivially satisfied. Fix a $k\in\{0, \ldots,  T - 2\}$ and assume that $b_{k} \le \sum_{i = 0} ^ {k - 1} \frac{1}{T - i}$. We consider two cases depending on whether or not $b_{k + 1} \ge b_{k}$. If $b_{k + 1} < b_{k}$, the induction step holds trivially. If $b_{k + 1} \ge b_{k}$, it follows from the recurrence \eqref{eq:recurrence_b} that \begin{align*}
        b_{k + 1} \le b_{k} + \frac{1}{T - k + b_{k}} \le b_{k} + \frac{1}{T - k} \le \sum_{i = 0} ^ {k} \frac{1}{T - i},
    \end{align*}
    which completes the induction step. Therefore, we have established that $b_{r} \le \sum_{i = 0} ^ {r - 1} \frac{1}{T - i}$ for all $0 \le r \le T - 1$. It then follows that \begin{align*}
        b_{r} \le \sum_{i = 0} ^ {T - 2} \frac{1}{T - i} \le \int_{1} ^ {T} \frac{dz}{z} = \log T,
    \end{align*}
    for all $0 \le r \le T - 1$, which translates to $a_{r} \ge \frac{1}{T - r + \log T}$. This completes the proof of the first part of the lemma. With this lower bound on $a_{r}$, we can lower bound $\sum_{r = 0} ^ {T - 1} a_{r}$ as \begin{align*}
        \sum_{r = 0} ^ {T - 1} a_{r} \ge \sum_{r = 0} ^ {T - 1}\frac{1}{T - r + \log T} = \sum_{i = 1} ^ {T} \frac{1}{i + \log T} \ge \int_{1} ^ {T + 1} \frac{dz}{z + \log T} = \log \bigc{\frac{T}{\log T + 1} + 1},
    \end{align*}
    which is $\Omega(\log T)$ for a large $T$. This completes the proof.
\end{proof}

To conclude, we have shown 
\[
\val = \cV_{0, 0, T} = v_T = \frac{1}{2}\left(\sum_{r=0}^{T-1} u_r + 1\right) = \frac{1}{2}\sum_{r=0}^{T-1} a_r = \Omega(\log T),
\] 
proving Theorem~\ref{thm:lb_proper_Lipschitz}.
\section{Proof of Theorem \ref{thm:regret_FTL_decomposable}}\label{app:regret_FTL_decomposable}
\regrethessian*
\begin{proof}
 We work with the notation established in the proof of Lemma \ref{lem:reg_FTL_locally_Lipschitz}. The regret of FTL can be bounded as
 \begin{align*}
    \textsc{Reg} \le 2m + \sum_{\e_i \in \cE_m}  \sum_{t \in \cT_{i} \backslash\{t_{i, 1}\}} \underbrace{\ell(\p_{t}, \e_{i}) - \ell(\p_{t + 1}, \e_{i})}_{\delta_{t, i}}.
\end{align*}
In the subsequent steps, we shall bound $\delta_{t, i}$. We begin in the following manner: \begin{align*}
    \delta_{t, i} &= \ell(\p_{t}) + \ip{\e_{i} - \p_t}{\nabla \ell(\p_{t})} - \ell(\p_{t + 1}) - \ip{\e_{i} - \p_{t + 1}}{\nabla \ell(\p_{t + 1})} \\
    &= \ell(\p_{t}) - \ell(\p_{t + 1}) + [\nabla \ell(\p_{t})]_{i} - [\nabla \ell(\p_{t + 1})]_{i} + \ip{\p_{t + 1}}{\nabla \ell(\p_{t + 1})} - \ip{\p_{t}}{\nabla \ell(\p_{t})} \\
    &\le [\nabla \ell(\p_{t})]_{i} - [\nabla \ell(\p_{t + 1})]_{i} + \ip{\p_{t}}{\nabla \ell(\p_{t + 1}) - \nabla \ell(\p_{t})},
\end{align*}
where the first equality follows from Lemma \ref{lem:characterization_proper_loss}; the first inequality follows since $\ell(\p_{t}) \le \ell(\p_{t + 1}) + \ip{\nabla \ell(\p_{t + 1})}{\p_{t} - \p_{t + 1}}$ which follows from the concavity of $\ell$. Next, by the Mean Value Theorem, \begin{align*}\nabla \ell(\p_{t + 1}) - \nabla \ell(\p_{t}) = \nabla ^ {2}\ell(\p_{t} + v(\p_{t + 1} - \p_{t}))\cdot (\p_{t + 1} - \p_{t}) \end{align*} for some $v \in [0, 1]$. 
Note that $p_{t, i} = \frac{n_{t - 1, i}}{t - 1}, p_{t + 1, i} = \frac{n_{t - 1, i} + 1}{t}$ (since $\y_{t} = \e_{i}$), therefore $p_{t + 1, i} \ge p_{t, i}$. Let $\bi{\xi}_{t} \coloneqq \p_{t} + v(\p_{t + 1} - \p_{t})$.
Then, \begin{align*}[\nabla \ell(\p_{t})]_{i} - [\nabla \ell(\p_{t + 1})]_{i} = \ip{[\nabla ^ 2 \ell(\bi{\xi}_{t})]_{i}}{\p_{t} - \p_{t + 1}},\end{align*} where $[\nabla ^ 2 \ell(\bi{\xi}_{t})]_{i}$ denotes the $i$-th row of $[\nabla ^ 2 \ell(\bi{\xi}_{t})]$; we arrive at\begin{align}
    \delta_{t, i} &\le \ip{[\nabla ^ 2 \ell(\bi{\xi}_{t})]_{i}}{\p_{t} - \p_{t + 1}} + \ip{\p_{t + 1} - \p_{t}}{\nabla ^ 2 \ell(\bi{\xi}_{t}) \p_{t}} \nonumber \\
    &= \abs{\nabla_{i, i} ^ {2}\ell(\bi{\xi}_{t})}  \cdot (p_{t + 1, i} - p_{t, i}) + \sum_{j = 1} ^ {K} p_{t, j} \cdot \nabla_{j, j} ^ {2} \ell(\bi{\xi}_{t}) \cdot (p_{t + 1, j} - p_{t, j}) \nn \\
    &\le \abs{\nabla_{i, i} ^ {2}\ell(\bi{\xi}_{t})}  \cdot (p_{t + 1, i} - p_{t, i}) + \sum_{j \neq i} p_{t, j} \cdot \nabla_{j, j} ^ {2} \ell(\bi{\xi}_{t}) \cdot (p_{t + 1, j} - p_{t, j}) \nn \\
    &\le \sum_{j = 1} ^ {K}  \abs{\nabla_{j, j} ^ {2} \ell(\bi{\xi}_{t})} \cdot \abs{(p_{t, j} - p_{t + 1, j})}
    \label{eq:bound_xi_ii},
\end{align}
 where the first equality follows since $p_{t + 1, i} \ge p_{t, i}$ and $\nabla_{i, i} ^ {2} \ell(\bi{\xi}_{t}) \le 0$; the second inequality follows by dropping the term $p_{t, i} \cdot \nabla_{i, i} ^ {2} \ell(\bi{\xi_{t}}) \cdot (p_{t + 1, i} - p_{t, i})$ which is non-positive; the final inequality is because, for $j \neq i$, we have $p_{t + 1, j} = \frac{n_{t - 1, j}}{t}$ and $p_{t, j} = \frac{n_{t - 1, j}}{t - 1}$, therefore $p_{t + 1, j} \le p_{t, j}$, and bounding $p_{t, j} \le 1$.
 
 To proceed with the further bounding, we apply Lemma~\ref{lem:hessian_growth} and utilize the growth condition on the Hessian $\abs{\nabla ^ {2}_{j, j} \ell(\bi{\xi}_{t})}\le \tilde{c}_{K} \cdot c_{j} \cdot \max\bigc{\frac{1}{\xi_{t, j}}, \frac{1}{1 - \xi_{t, j}}}$ for some constant $c_j>0$ and all $j \in [K]$ (here $\tilde{c}_K$ is the scaling constant such that $\ell(\p) = \tilde{c}_K\sum_{i=1}^K \ell_i(p_i)$). Let $\beta_{\ell, j} \coloneqq \tilde{c}_{K} \cdot c_{j}$. Using the bound on $\abs{\nabla ^ {2}_{j, j} \ell(\bi{\xi}_{t})}$ to bound the term in \eqref{eq:bound_xi_ii}, we obtain \begin{align}\label{eq:bound_delta_t_i_appendix}
    \delta_{t, i} \le \sum_{j = 1} ^ {K} \beta_{\ell, j} \cdot \max\bigc{\frac{1}{\xi_{t, j}}, \frac{1}{1 - \xi_{t, j}}} \cdot \abs{p_{t, j} - p_{t + 1, j}}. 
\end{align}
Next, we shall bound the terms $\cT_{1} \coloneqq \frac{p_{t + 1, i} - p_{t, i}}{\xi_{t, i}}, \cT_{2} \coloneqq \frac{p_{t + 1, i} - p_{t, i}}{1 - \xi_{t, i}}, \cT_{3} \coloneqq \frac{p_{t, j} - p_{t + 1, j}}{\xi_{t, j}}$, and $\cT_{4} \coloneqq \frac{p_{t, j} - p_{t + 1, j}}{1 - \xi_{t, j}}$, where the index $j$ in $\cT_{3}$ and $\cT_{4}$ runs over $j \neq i$. Note that, \begin{align*}
   \cT_{1} = \frac{p_{t + 1, i} - p_{t, i}}{p_{t, i} + v(p_{t + 1, i} - p_{t, i})} \le  \bigc{\frac{p_{t + 1, i} - p_{t, i}}{p_{t, i}}} 
    = \bigc{\frac{n_{t - 1, i} + 1}{n_{t - 1, i}} \cdot \frac{t - 1}{t} - 1} \le \frac{1}{n_{t - 1, i}},
\end{align*}
where the first equality follows from the definition of $\bi{\xi}_{t}$; the first inequality follows since $p_{t + 1, i} \ge p_{t, i}$ and $v \in [0, 1]$. Similarly, 
\begin{align*}
     \cT_{2} &= \frac{p_{t + 1, i} - p_{t, i}}{1 - p_{t, i} - v(p_{t + 1, i} - p_{t, i})} \le \frac{p_{t + 1, i} - p_{t, i}}{1 - p_{t + 1, i}} \\
    &= \bigc{\frac{n_{t - 1, i} + 1}{t} - \frac{n_{t - 1, i}}{t - 1}} \cdot \frac{1}{1 - \frac{n_{t - i, i} + 1}{t}} \\
    &= {\frac{t - 1 - n_{t - 1, i}}{t \cdot (t - 1)}} \cdot \frac{t}{t - 1 - n_{t - 1, i}} = \frac{1}{t - 1},
\end{align*}
where the first inequality follows since $p_{t + 1, i} \ge p_{t, i}$.
For $\cT_{3}$, we have \begin{align*}
    \cT_{3} = \frac{1 - \frac{p_{t + 1, j}}{p_{t, j}}}{1 + v\bigc{\frac{p_{t + 1, j}}{p_{t, j}} - 1}} = \frac{\frac{1}{t}}{1 - \frac{v}{t}} \le \frac{1}{t - 1},
\end{align*}
where the inequality follows since $v \in [0, 1]$. Finally, we bound $\cT_{4}$ as \begin{align*}
    \cT_{4} = \frac{1 - \frac{p_{t + 1, j}}{p_{t, j}}}{\frac{1}{p_{t, j}} - \bigc{1 + v\bigc{ \frac{p_{t + 1, j}}{p_{t, j}} - 1}}} = \frac{\frac{1}{t}}{\frac{t - 1}{n_{t - 1, j}} - 1 + \frac{v}{t}} \le \frac{n_{t - 1, j}}{t} \cdot \frac{1}{t - 1 - n_{t - 1, j}} \le \frac{1}{n_{t - 1, i}},
\end{align*}
where the final inequality is because $\sum_{j = 1} ^ {K} n_{t - 1, j} = t - 1$. Collecting the bounds on $\cT_{1}, \cT_{2}, \cT_{3}$, and $\cT_{4}$, and substituting them back in \eqref{eq:bound_delta_t_i_appendix}, we get \begin{align*}
    \delta_{t, i} \le \sum_{j = 1} ^ {K} \beta_{\ell, j} \cdot \max \bigc{\frac{1}{n_{t - 1, i}}, \frac{1}{t - 1}} \le \sum_{j = 1} ^ {K} \beta_{\ell, j} \cdot \bigc{\frac{1}{n_{t - 1, i}} + \frac{1}{t - 1}}.
\end{align*}

Summing over all $t$, we obtain $
    \textsc{Reg}_{\ell} \le 2m + \beta_{\ell}(\cS_{1} + \cS_{2})$, where $\cS_{1} \coloneqq \sum_{\e_i \in \cE_m}\sum_{t \in \cT_{i}\backslash\{t_{i, 1}\}} \frac{1}{n_{t - 1, i}}$, $\cS_{2} \coloneqq \sum_{\e_i \in \cE_m}\sum_{t \in \cT_{i}\backslash\{t_{i, 1}\}} \frac{1}{t - 1}$, and $\beta_{\ell} \coloneqq \sum_{i = 1} ^ {K} \beta_{\ell, i}$. Note that the subscript in $\beta_{\ell}$ denotes that the constant is dependent on the loss $\ell$ and only depends on $\ell$ and $K$. We bound $\cS_1$ as \begin{align*}
        \cS_1 = \sum_{\e_i \in \cE_m} \sum_{j = 1} ^ {k_{i} - 1}\frac{1}{j} \le m \sum_{t = 1} ^ {T} \frac{1}{j} \le m (1 + \log T) \le K (1 + \log T).
    \end{align*}
    Next, note that $\cS_{2} \le \sum_{t = 1} ^ {T} \frac{1}{t} \le 1 + \log T$. 
    Thus, $\cS_{1} + \cS_{2} \le (K + 1)(1 + \log T)$, which yields
        $\textsc{Reg}_{\ell} \le 2K + (K + 1)\beta_{\ell} (1 + \log T)$. This completes the proof.
\end{proof}

\section{Proof of Lemma \ref{lem:hessian_growth}}\label{app:hessian_growth}
\hessiangrowth*
\begin{proof}
Since $f$ is twice differentiable, if $|f''(p)|$ does not approach infinity at the boundary of $[0,1]$, then there is nothing to prove.
In the rest of the proof, we assume that $|f''(p)|$ approaches infinity both when $p$ approaches $0$ and when $p$ approaches $1$ (the case when it only approaches infinty at one side is exactly the same).

Using a technical result from Lemma~\ref{lem:intermediary_3}, 
there exists an $\epsilon_{0} \in (0, 1)$ such that for any $p \in (0, \epsilon_{0}]$, 
\[
|f''(p)| = \frac{|f'(q) - f'(0)|}{q}
\]
for some $q \in [p, 1]$, which can be further bounded by
\[
\frac{\abs{f'(q) - f'(0)}}{p} \le \frac{\abs{f'(q)} + \abs{f'(0)}}{p} \le 2 \cdot \frac{\sup_{q \in [0, 1]} \abs{f'(q)}}{p} \leq  c_1 \max\left(\frac{1}{p}, \frac{1}{1-p}\right)
\]
with $c_1 \coloneqq 2\sup_{q \in [0, 1]}\abs{f'(q)}$ (finite due to $f$ being Lipschitz).
Similarly, there exists an $\epsilon_{1} \in (0, 1)$ such that for any $p \in [1 - \epsilon_{1}, 1)$, 
\[
\abs{f''(p)} = \frac{\abs{f'(1) - f'(q)}}{1 - q}
\]
for some $q \in [0, p]$, which is further bounded by
\[
\frac{\abs{f'(1) - f'(q)}}{1 - p} \le \frac{\abs{f'(1)} + \abs{f'(q)}}{1 - p} 
\leq c_1 \max\left(\frac{1}{p}, \frac{1}{1-p}\right).
\]
Finally, for any $p\in (\epsilon_0, 1-\epsilon_1)$, we trivially bound $|f''(p)|$ as
\begin{align*}|f''(p)| \le \max\bigc{\frac{1}{p}, \frac{1}{1 - p}} \underbrace{\sup_{q \in (\epsilon_0, 1 - \epsilon_1)} \frac{\abs{f''(q)}}{\max\bigc{\frac{1}{q}, \frac{1}{1 - q}}}}_{c_{2}},
\end{align*}
where $c_{2}$ is finite since $f$ is twice continuously differentiable in $(0, 1)$. Setting $c = \max(c_{1}, c_{2})$ finishes the proof.
\end{proof}

\begin{newlemma}\label{lem:intermediary_3}
       Let $f$ satisfy the conditions of Lemma \ref{lem:hessian_growth} and additionally $\lim_{p \to 0^{+}} \abs{f''(p)} = \infty$ and $\lim_{p \to 1^{-}} \abs{f''(p)} = \infty$. Then there exists $\epsilon_{0} \in (0, 1)$ such that for any $p \in (0, \epsilon_{0}]$, we have $f'(q) - f'(0) = f''(p)q$ for some $q \in [p, 1]$. Similarly,  there exists $\epsilon_{1} \in (0, 1)$ such that for any $p \in [1 - \epsilon_{1}, 1)$, we have $f'(1) - f'(q) = f''(p)(1 - q)$ for some $q \in [0, p]$.
\end{newlemma}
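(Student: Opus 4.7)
The plan is to apply the intermediate value theorem (IVT) to the continuous auxiliary function
\[
\phi_p(q) := f'(q) - f'(0) - f''(p)\, q \qquad (q \in [0, 1]),
\]
where $f'(0)$ denotes $\lim_{q \to 0^+} f'(q)$; this limit exists and is finite because $f'$ is non-increasing (by concavity of $f$) and bounded (by Lipschitzness of $f$), which makes the statement of the lemma well-posed. Note $\phi_p(0) = 0$, and we are asked to produce \emph{another} zero of $\phi_p$ inside $[p, 1]$. The fundamental theorem of calculus gives the convenient representation $\phi_p(q) = \int_0^q (f''(t) - f''(p))\, dt$, which I will use throughout.

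First I would show that $\phi_p(1) > 0$ for all sufficiently small $p$. Writing $\phi_p(1) = (f'(1) - f'(0)) - f''(p)$ and using that $f'(1) - f'(0)$ is a fixed finite constant while the hypothesis $\lim_{p\to 0^+} |f''(p)| = \infty$ (combined with $f'' \le 0$ from concavity) forces $f''(p) \to -\infty$, this upper endpoint claim is immediate for $p$ below some explicit threshold depending only on $f$.

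Next, and this is the heart of the argument, I would show that $\phi_p(p) \le 0$ for $p$ sufficiently small. From the integral representation, $\phi_p(p) = \int_0^p (f''(t) - f''(p))\, dt$. Because $f''(t) \to -\infty$ as $t \to 0^+$, for the given $p$ we may pick $\delta_p \in (0, p)$ with $f''(t) < f''(p)$ on $(0, \delta_p)$, giving a strictly negative contribution to the integral there. The main obstacle is to ensure that this negative contribution is not cancelled by a positive bump of the integrand on $[\delta_p, p]$. For the concrete loss families appearing in the paper (e.g., the Tsallis-entropy induced losses of Section~\ref{sec:decomposable}), $f''$ is monotone near the boundary, so $f''(t) \le f''(p)$ pointwise on $(0, p]$ and the integrand is non-positive throughout, yielding $\phi_p(p) \le 0$ directly. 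In general one picks $\epsilon_0$ small enough that $f''$ is sufficiently well-controlled on $(0, \epsilon_0]$ for this domination to hold; this monotonicity-type control is the delicate point in the argument.

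Combining the two steps, choose $\epsilon_0 > 0$ such that both $\phi_p(1) > 0$ and $\phi_p(p) \le 0$ hold for all $p \in (0, \epsilon_0]$. Then by continuity of $\phi_p$ on $[p, 1]$ and the IVT, there exists $q \in [p, 1]$ with $\phi_p(q) = 0$, i.e., $f'(q) - f'(0) = f''(p)\, q$, which proves the first part. For the symmetric second part, I would apply the first part to $g(q) := f(1-q)$: note that $g$ inherits all the hypotheses of $f$ (with the two boundary points swapped), $g'(q) = -f'(1-q)$, and $g''(q) = f''(1-q)$, so $\lim_{q \to 0^+} |g''(q)| = \lim_{p \to 1^-} |f''(p)| = \infty$. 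Applying the first part to $g$ produces $\epsilon_1 > 0$ such that for each $p' \in (0, \epsilon_1]$ there is $q' \in [p', 1]$ with $g'(q') - g'(0) = g''(p')\, q'$; the substitution $p = 1 - p'$, $q = 1 - q'$ then translates this identity into $f'(1) - f'(q) = f''(p)(1 - q)$ with $q \in [0, p]$, giving the second part.
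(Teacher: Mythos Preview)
Your approach coincides with the paper's: both apply the Intermediate Value Theorem to the same auxiliary function $\phi_p(q) = f'(q) - f'(0) - f''(p)\, q$, seeking a sign change on an interval with left endpoint $q = p$. Your choice of upper endpoint $q=1$ (where $\phi_p(1) > 0$ follows immediately from $f''(p)\to -\infty$) is in fact slightly cleaner than the paper's, which works on $[p,\epsilon]$ for a specially chosen $\epsilon$ and invokes the Mean Value Theorem to pin down $\epsilon_0$. Your reduction of the second part to the first via $g(q)=f(1-q)$ is also fine.

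The gap is at the lower endpoint. You need $\phi_p(p) \le 0$, i.e., $\tfrac{1}{p}\int_0^p f''(t)\,dt \le f''(p)$, which says $f''(p)$ dominates the average of $f''$ on $[0,p]$. You correctly identify this as requiring ``monotonicity-type control'' on $f''$ near $0$, verify it for the Tsallis examples, but for general $f$ you only assert that one can pick $\epsilon_0$ so that $f''$ is ``sufficiently well-controlled''---this is not a proof. The paper handles this step in exactly the same way: it simply asserts that ``there exists an $\epsilon \in (0,1)$ such that $|f''|$ is decreasing in $(0,\epsilon]$,'' and all subsequent inequalities (for both endpoints) ride on that monotonicity. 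However, continuity of $f''$ on $(0,1)$ together with $|f''(p)|\to\infty$ does \emph{not} force eventual monotonicity; one can superimpose narrow deep spikes on a baseline such as $-1/\sqrt{t}$ while keeping $f'' \le 0$ and $f'$ bounded, and at the bottom of such a spike $f''(p)$ sits well below the average on $[0,p]$, making $\phi_p(p) > 0$. So your proposal and the paper's proof share the same unjustified step; you are simply more candid in labeling it ``the delicate point'' rather than stating it as a fact.
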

   \begin{proof}
       For simplicity, we only prove the first part of the lemma since the proof of the second part follows the same argument. Since $f$ is twice continuously differentiable in $(0, 1)$ and $\lim_{p \to 0^{+}} |f''(p)| = \infty$, there exists an $\epsilon \in (0, 1)$ such that $|f''|$ is  decreasing in $(0, \epsilon]$. Since $f$ is concave, this is equivalent to $f''$ being increasing in $(0, \epsilon]$. 

       Now, applying Mean Value Theorem, we know that there exists $\epsilon_0 \in (0, \epsilon]$ such that $f'(\epsilon) - f'(0) = {f''(\epsilon_0)} \epsilon$.
       It remains to prove that for any $p\in (0, \epsilon_0]$, the function $g(q) \coloneqq f'(q) - f'(0) - f''(p)q$ has a root in $[p, 1]$.
       This is true because 
       \[
        g(p) = f'(p) - f'(0) - f''(p)p = (f''(\xi) - f''(p))p \leq 0,
       \]
       where the equality is by Mean Value Theorem again for some point $\xi \in [0,p]$ and the inequality holds since $f''$ is increasing in $(0, \epsilon]$.
       On the other hand, we have
       \[
       g(\epsilon) = f'(\epsilon) - f'(0) - f''(p)\epsilon = (f''(\epsilon_0) - f''(p))\epsilon \geq 0,
       \]
       where the equality holds by the definition of $\epsilon_0$ and the inequality holds since again $f''$ is increasing in $(0, \epsilon]$.
        Applying the Intermediate Value Theorem then shows that $g(q)$ has a root in $[p, \epsilon]$, which finishes the proof.     
   \end{proof}
\end{document}